\newcommand{\E}{\mathbb{E}}
\newcommand{\RR}{\mathbb{R}}
\providecommand{\sign}{\mathop\mathrm{sign}}
\theoremstyle{plain}
\newtheorem{theorem}{Theorem}[section]
\newtheorem{proposition}[theorem]{Proposition}
\newtheorem{lemma}[theorem]{Lemma}
\theoremstyle{definition}
\newtheorem{assumption}[theorem]{Assumption}
\theoremstyle{remark}
\newtheorem{remark}[theorem]{Remark}
\newcommand{\ourname}{\texttt{SCALE}}
\icmltitlerunning{Memory-Efficient LLM Pretraining via Minimalist Optimizer Design}
\begin{document}

\twocolumn[
  \icmltitle{Memory-Efficient LLM Pretraining   via Minimalist Optimizer Design}



  \icmlsetsymbol{equal}{*}

  \begin{icmlauthorlist}
    \icmlauthor{Athanasios Glentis}{equal,umn}
    \icmlauthor{Jiaxiang Li}{equal,umn}
    \icmlauthor{Andi Han}{syd}
    \icmlauthor{Mingyi Hong}{umn}
  \end{icmlauthorlist}

  \icmlaffiliation{umn}{Department of Electrical and Computer Engineering, University of Minnesota, USA.}
  \icmlaffiliation{syd}{School of Mathematics and Statistics, University of Sydney, Australia}

  \icmlcorrespondingauthor{Athanasios Glentis}{glent007@umn.edu}

  \icmlkeywords{}

  \vskip 0.3in

]



\printAffiliationsAndNotice{}  

\begin{abstract}
  Training large language models (LLMs) relies on adaptive optimizers such as Adam, which introduce extra operations and require significantly more memory to maintain first- and second-order moments than SGD. While recent works such as GaLore, Fira and APOLLO have proposed state-compressed memory-efficient variants,  a fundamental question remains: {\it What are the minimum modifications to plain SGD needed to match state-of-the-art pretraining performance?} We systematically investigate this question using a bottom-up approach, and identify two simple yet highly (memory- and compute-) efficient techniques: (1) column-wise gradient normalization (normalizing the gradient along the output dimension), that boosts SGD performance without momentum; and (2) applying first-order momentum only to the output layer, where gradient variance is highest.  Combining these two techniques lead to \ourname{} (Stochastic Column-normAlized Last-layer momEntum), a simple optimizer for memory efficient pretraining. Across multiple  models (60M–1B), \ourname{} matches or exceeds the performance of Adam while using only 35–45\% of the total memory. It also consistently outperforms memory-efficient optimizers such as GaLore, Fira and APOLLO, making it a strong candidate for large-scale pretraining under memory constraints. For  LLaMA 7B, \ourname{} outperforms the state-of-the-art memory-efficient methods APOLLO and Muon in both perplexity and memory consumption. Code is available at this \href{https://github.com/OptimAI-Lab/Minimalist_LLM_Pretraining}{link}.
   \vspace{-20pt}
\end{abstract}

\vspace{-5pt}
\section{Introduction}
\vspace{-5pt}

Adaptive optimizers such as RMSProp~\citep{hinton2012neural}, Adam~\citep{kingma2015adam} are the default optimizers for large-scale attention-based deep neural networks, particularly large language models (LLMs). While effective, these optimizers incur significant memory overhead due to their reliance on maintaining both first- and second-moment estimates of the gradient, which are also known as {\it optimizer states}. Specifically, Adam requires storing two additional gradient states per parameter, tripling the memory usage compared to vanilla stochastic gradient descent (SGD). 

On the other hand, despite its superior memory efficiency, the vanilla SGD performs poorly when applied directly to LLM training, due to the absence of adaptive scaling in the update step  (See Figure \ref{fig:sgd_not_work} for experiment results, also see \cite{zhao2025deconstructing,zhang2020adaptive}).  
This has motivated a wave of recent research focused on developing memory-efficient alternatives to Adam that aim to retain its performance while reducing memory consumption. These include compression-based algorithms such as GaLore~\citep{zhao2024galore}, Fira \cite{chen2024fira} and APOLLO \cite{zhu2025apollo}. Novel optimizers, such as Muon~\citep{jordan2024muon}, Scion \citep{pethick2025scion} and SWAN~\citep{ma2025swan}, also by design require less memory than Adam. These approaches often introduce new algorithmic components—such as different forms of gradient normalization, whitening and rescaling, momentum variants—and combine them in various ways.

However, despite the growing literature on memory-efficient optimizers, there has been no systematic study to identify which specific algorithmic components or techniques are most essential for designing highly performant yet minimal-memory optimizers. For instance, are both first- and second-order momentum terms strictly necessary for effective training? Among the many forms of gradient normalization, which subset of them strike the best trade-off between performance, memory, and computational cost? In the absence of such a principled investigation, it remains unclear how to balance optimizer complexity against memory savings.

This motivates our central research question:
\begin{tcolorbox}[colback=gray!20,colframe=black]
\begin{center}
Can we design a memory efficient optimizer with minimum modifications to plain SGD that achieves state-of-the-art pretraining performance?
\end{center}
\end{tcolorbox}

\newpage

In this work, we address this question through a \textit{bottom-up}, \textit{minimalist} approach. Rather than starting from existing adaptive optimizers, we systematically evaluate the role of fundamental components, namely normalization and momentum, to determine the smallest set of techniques needed to bridge the gap between vanilla SGD and Adam, with a memory efficiency focus.
Towards this end, we perform extensive experiments to identify key components (such as different forms of normalization and different levels of momentum) that can effectively enhance the performance of vanilla SGD with minimum memory overhead. We then justify the design choices using a combination of theoretical insights and empirical evidence.

\textbf{Contributions.} Our study suggests that two techniques, when used together, are particularly effective: {\it (i)} Among various gradient normalization techniques in the literature, column-wise gradient normalization (normalizing along the output dimension) can  significantly boost SGD performance, while having simple closed-form solution and with no extra memory required; and {\it (ii)} adding first-order momentum exclusively to the output layer, where gradient variance is highest (see Figure  \ref{fig:variance}), is surprisingly effective to further boost training performance with minimum memory overhead. Combining these two techniques lead to
 \underline{S}tochastic \underline{C}olumn-norm\underline{a}lized \underline{L}ast-layer mom\underline{e}ntum (\ourname{}), a memory efficient optimizer which requires roughly the same amount of memory as vanilla SGD. For example, for a 1B (resp. 7B) model, \ourname{} only requires 10\% (resp. 2\%) more memory as compared to vanilla SGD. Meanwhile, \ourname{} achieves similar performance as compared with SOTA optimizers Adam and Muon, with only 35\% and 52\%  of the memory cost for training 1B models, respectively. Comparing to other memory efficient optimizers, \ourname{} achieves superior performance to GaLore, Fira and Apollo, with only 59\% of the memory cost for training 1B models. See Figure \ref{fig:pareto_plot} for an illustration of performance and memory trade-off among different SOTA algorithms.

\begin{figure}[t]
  \centering
  
  \begin{subfigure}[t]{\linewidth}
    \centering
    \includegraphics[width=\linewidth]{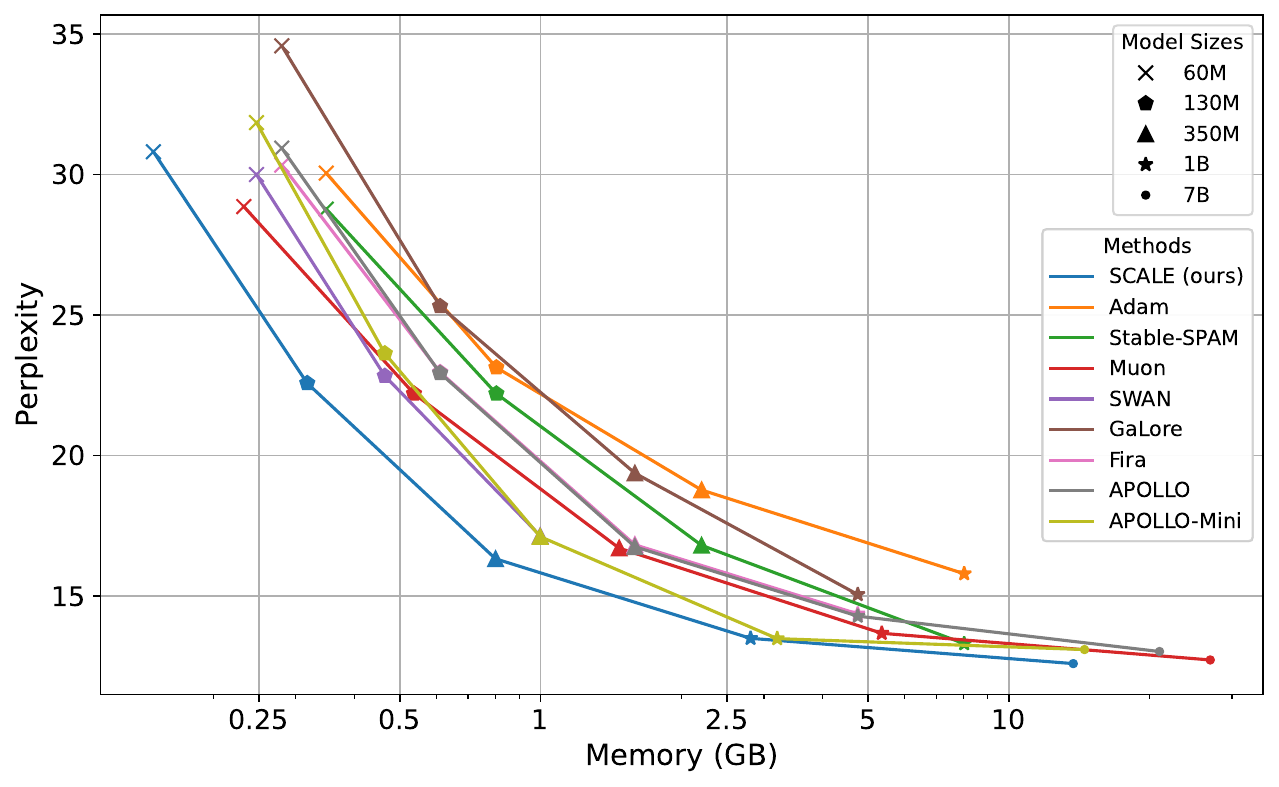}
  \end{subfigure}

  \vspace{-0.3cm}
  \caption{Perplexity v.s. memory consumption among a number of SOTA algorithms. Solutions achieved towards the left-bottom side of the plot represent better performance/memory trade-off (see Appendix \ref{appendix:memory_estimation} for the details of the memory estimation).}
  \label{fig:pareto_plot}
  \vspace{-0.5cm}
\end{figure}

\subsection{Related Works}\label{sec:related_works}

\textbf{Memory-efficient variations of Adam.} A recent line of works aims to improve the memory efficiency of Adam by compressing the historical states stored, namely the first- and second-order statistics.
Early works include Adafactor~\citep{shazeer2018adafactor}, which estimates the optimizer states using per-row and per-column moving averages, SM3 \citep{anil2019memory}, a memory efficient AdaGrad via grouping second-order momentum, and CAME~\citep{luo2023came}, which improves upon Adafactor via matrix factorization.  A number of recent works use gradient projections to compress the optimizer states. GaLore~\citep{zhao2024galore}, being one of the pioneering works, stores the states in low-rank subspaces that captures most gradient information. Fira \citep{chen2024fira} achieves superior performance than GaLore by re-introducing full-rank information to the low-rank gradients.
APOLLO~\citep{zhu2025apollo} constructs the update based on gradient scaling factors that are estimated from the ratio between the low-dimensional gradient and the low-dimensional Adam update; APOLLO-Mini is a memory efficient version of APOLLO by estimating in a rank-1 subspace.
GRASS~\citep{muhamed2024grass} improves GaLore by designing sparse projection matrices guided by the norm of the rows of the gradients. SlimAdam~\citep{kalra2025can} compresses second-order moments based on signal-to-noise analysis. Some other methods group parameters into blocks and apply block-wise updates~\citep{luo2024badam,ramesh2024blockllm,pan2024lisa} or block-wise scaling~\citep{zhang2024adammini} to further reduce the memory costs.

\noindent\textbf{Towards removing optimizer states.}
More recent works start to directly remove the optimizer states as required by Adam. Muon \citep{jordan2024muon,liu2025muon} has shown impressive results using only first-order momentum and orthogonalizing it via an iterative algorithm. Scion \citep{pethick2025scion} explores various (layer-wise) normalization schemes for better training performance. SWAN~\citep{ma2025swan} and its variant~\citep{scetbon2025gradient} combine two normalizations (see Section \ref{sec:methodology} for details) directly to the gradient, matching the performance of Adam (when applied only to the intermediate layers while the first and last layers still use Adam. See Section \ref{sec:experiments} for details). \citet{zhao2025deconstructing} demonstrates SGD with signed momentum is able to give similar performance to Adam. SGD-SaI~\citep{xu2024no} verifies that proper learning‑rate scaling at initialization is sufficient to achieve good performance.

\section{Methodology}\label{sec:methodology}

We begin by reviewing several key techniques used in popular optimizers such as Adam and Muon, which have been shown to accelerate pretraining performance for LLMs. This understanding will be critical for our subsequent minimalist design for a memory-efficient optimizer. Denote the optimization problem of the LLM pretraining as
\begin{equation}\label{eq:finite_sum}
    \min_{\theta=[\theta_1,...,\theta_{L}]} \ell(\theta):=\frac{1}{n}\sum_{i=1}^{n}\ell(\theta;\xi_i)
\end{equation}
where $\ell$ is the loss function, usually taken as the cross entropy loss of predicting the next token, $\theta=[\theta_1,...,\theta_{L}]$ is the model trainable parameters, with $\theta_l$ the $l$-th layer, $l=1,2,...,L$. With attention-based network, we can simply assume that each $\theta_l\in\RR^{d_{l,\text{in}} \times d_{l,\text{out}}}$ is a weight matrix, with the input dimension $d_{l,\text{in}}$ and output dimension $d_{l,\text{out}}$. Here $\xi_i$ with $i=1,...,n$ represents training data samples and $n$ is the training data size. 

To solve \eqref{eq:finite_sum}, \textbf{vanilla SGD} draws a small batch of i.i.d samples $\{\xi_{t,b}\}_{b=1,..,B}$ at iteration $t$ and performs an update toward the negative stochastic gradient direction:
\begin{equation}
    \theta^{t+1} = \theta^{t} - \eta_t g^t, \quad {\rm with}\quad g^t:=\frac{1}{B}\sum_{b=1}^{B}\nabla \ell(\theta^{t};\xi_{t, b})
\end{equation}
where $B$ is batch size and $\eta_t$ is the learning rate. Although memory-efficient, plain SGD performs poorly in LLM training due to the lack of adaptive scaling ~\citep{zhao2025deconstructing}, and we verify this in Figure \ref{fig:sgd_not_work} where we run SGD and Adam on LLaMA 130M pretraining task (see Section \ref{sec:experiments} for the details of our experiment settings). Therefore we will not include the perplexity result for SGD in the subsequent experiments.

\begin{figure}[t]
  \centering

  \begin{subfigure}[t]{0.9\linewidth}
    \centering
    \includegraphics[width=\linewidth]{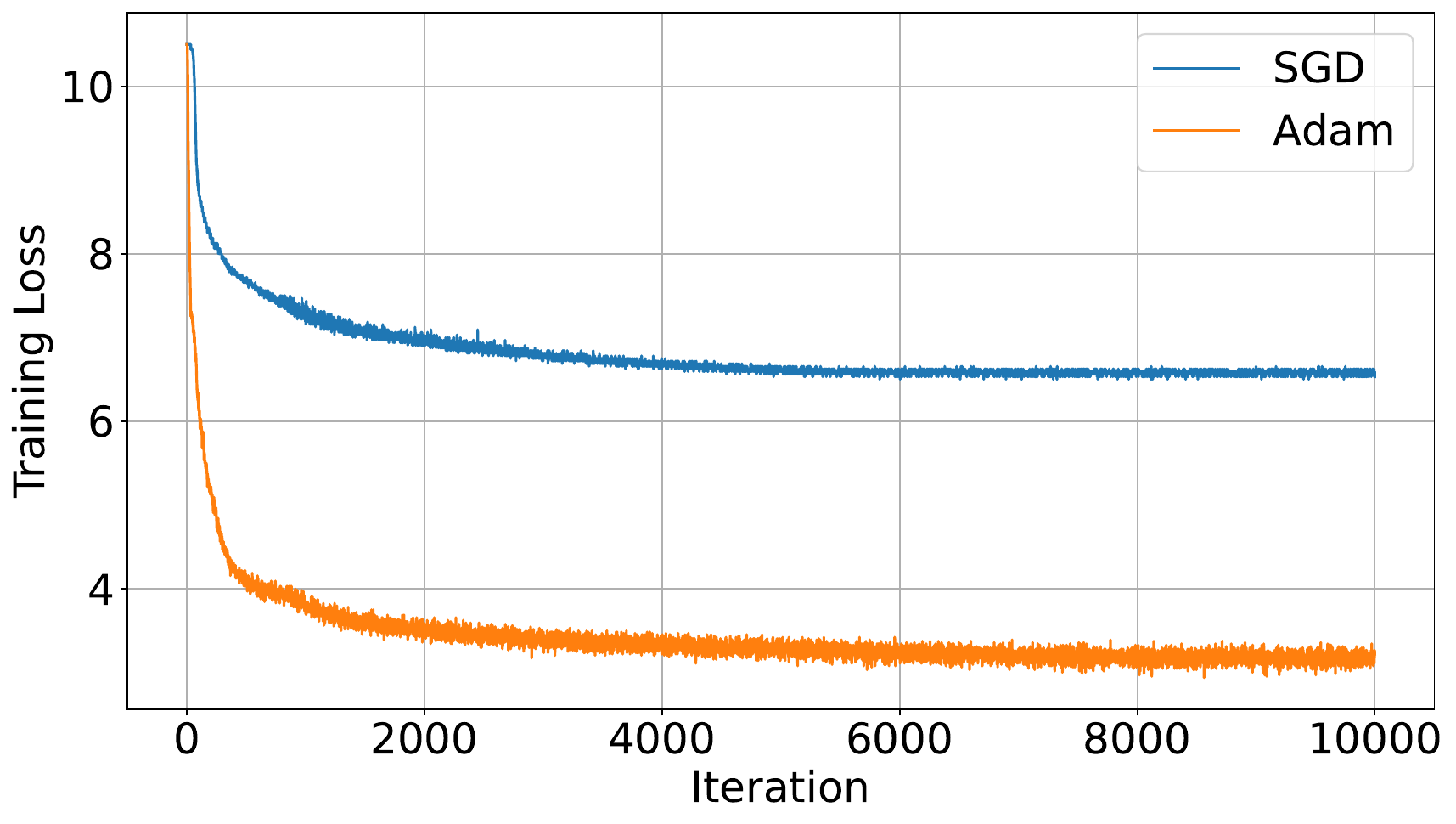}
    \caption{Training Loss}
  \end{subfigure}
  \begin{subfigure}[t]{\linewidth}
    \centering
    \includegraphics[width=0.9\linewidth]{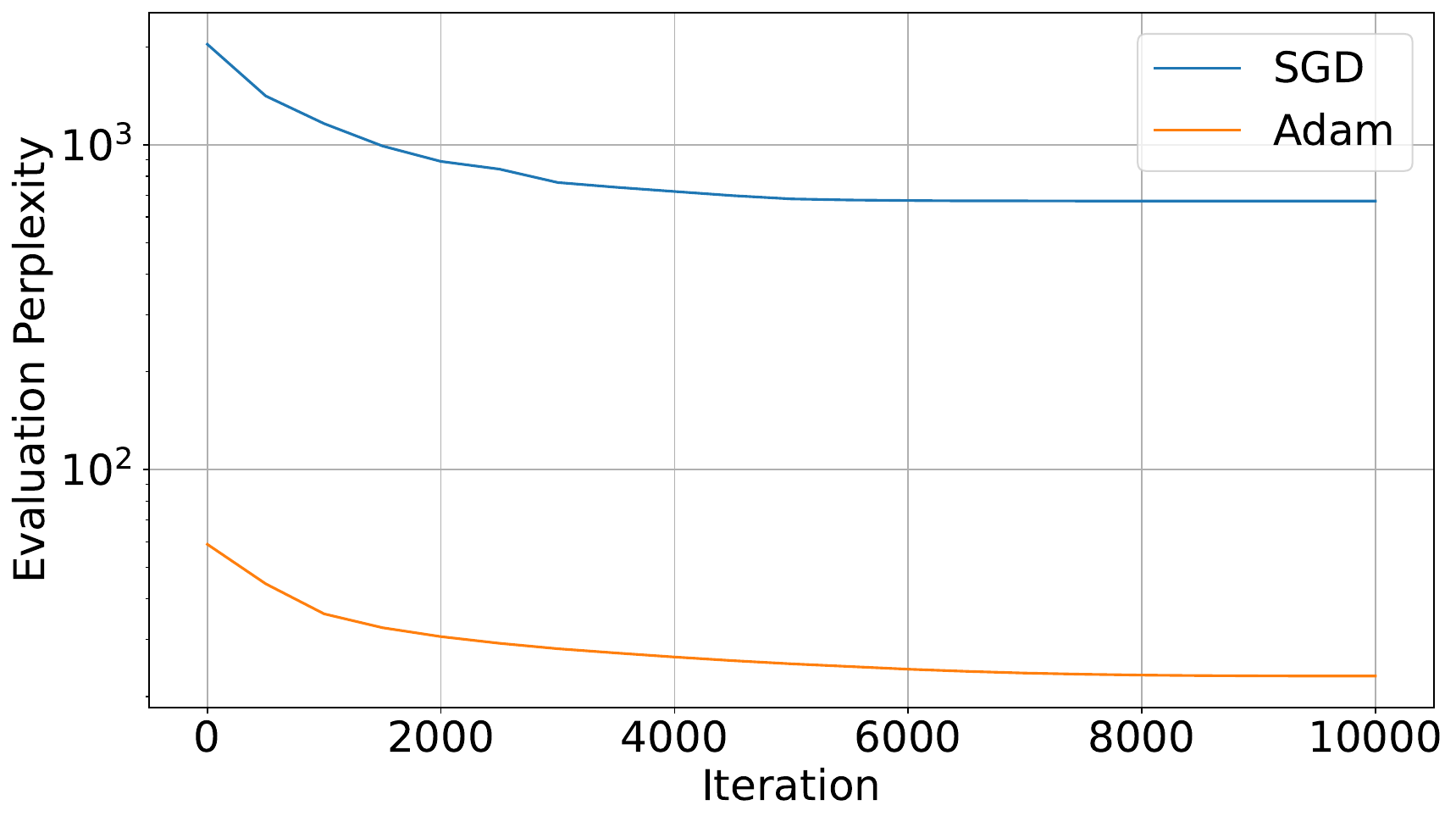}
    \caption{Evaluation Perplexity}
  \end{subfigure}

  \vspace{-0.2cm}
  \caption{Comparison of SGD and Adam training loss and evaluation perplexity curves on LLaMA 130M model. Clearly, SGD is not converging to any reasonable level of perplexity. The Adam and SGD learning rates are 3e-3 and 0.1, respectively. We search with multiple learning rates for SGD; for lower ones the loss decreases even slower and higher ones cause the training to diverge.}
  \label{fig:sgd_not_work}

  \vspace{-0.4cm}
\end{figure}

In contrast, adaptive algorithms such as \textbf{Adam}~\citep{kingma2015adam} update parameters using a more sophisticated scheme (where $\odot$ represents element-wise product):
\begin{equation}\label{eq:adamw}
    \begin{aligned}
    & m^{t} = \beta_1 m^{t-1} + (1 - \beta_1) g^t, \\
    & v^{t} = \beta_2 v^{t-1} + (1 - \beta_2) g^{t}\odot g^{t}, \\
    & \theta^{t+1} = \theta^{t} - \eta_t \frac{m^{t}}{\sqrt{v^{t}} + \epsilon}.
    \end{aligned}
\end{equation}
Numerous theoretical works such as \citet{reddi2019convergence,zhang2022adamcanconverge,yushun2022doesadamconverge} attempted to explain why Adam outperforms SGD for transformers, though consensus remains elusive. A straightforward decomposition of Adam identifies two essential components.
\begin{enumerate}[leftmargin=0.2in]
    \item \textbf{Gradient Normalization}. The key difference between Adam and SGD is the normalization factor $v^t$ in the denominator. One could first normalize each element of SGD, resulting in the sign-SGD update as follows:
    \begin{equation}\label{eq:sign_sgd}
    \theta^{t+1} = \theta^{t} - \eta_t \frac{g^t}{\sqrt{g^{t}\odot g^{t}}} = \theta^{t} - \eta_t \sign(g^{t})
    \end{equation}
    where $\sign$ stands for taking the sign for each element.

    \item \textbf{Exponential Moving Average (EMA)}. The stochasticity of the mini-batch sample $\{\xi_{t,b}\}_{b=1,..,B}$ of $g^t$ could be smoothed by taking the exponential moving average (EMA) for both the numerator and denominator of the update \eqref{eq:sign_sgd}, resulting in the following numerator and denominator of the Adam update:
    \begin{equation}\label{eq:adam_ema}
    \begin{aligned}
    & m^{t} = \beta_1 m^{t-1} + (1 - \beta_1) g^t,\\
    & v^{t} = \beta_2 v^{t-1} + (1 - \beta_2) g^{t}\odot g^{t}.
    \end{aligned}
    \end{equation}
\end{enumerate}

Combining \eqref{eq:sign_sgd} and \eqref{eq:adam_ema} results in the original Adam update \eqref{eq:adamw}.

This leads to the question: {\bf How much of Adam’s benefit arises from gradient normalization versus EMA?}
We note that gradient normalization does not require maintaining states, whereas EMA does. Therefore, in the next sections, we will examine each of these components separately and discuss their effectiveness. We will start from the gradient normalization, as the EMA step will introduce extra memory no matter how hard we compress it. Then we will discuss how to use EMA in a more memory-friendly way to further boost the performance of the optimizer.

\vspace{-0.3cm}
\subsection{Gradient normalization}
\vspace{-0.2cm}
Gradient normalization is a critical component for large-scale, efficient pretraining, and has remained indispensable even in recent (near)-stateless optimizers \citep{ma2025swan,huang2025stablespam,zhu2025apollo}. It is known to accelerate escape from saddle points \citep{levy2016power,murray2019revisiting}, improve the effective smoothness of the objective (when the Hessian norm is upper bounded in terms of the gradient norm) \citep{Zhang2020Why,kunstner2023noise}, stabilize gradient distributions \citep{ma2025swan}, and provide robustness to heavy-tailed noise \citep{sun2025revisiting}.

Denote $G \in \mathbb R^{d_{\rm in} \times d_{\rm out}}$ the stochastic gradient (we switch to upper letters since we assume the weight blocks $\theta_l$ in \eqref{eq:finite_sum} are matrices). There is a variety of gradient normalization schemes, including 
\begin{equation}
\begin{array}{rl}
& \text{Singular-value normalization: } 
 UV^\top,\ G = U \Sigma V^\top \ (\text{SVD}) \\[6pt]

& \text{Column-wise normalization: } 
  \Big[\tfrac{\text{col}_1(G)}{\|\text{col}_1(G)\|}, \dots, \tfrac{\text{col}_n(G)}{\|\text{col}_n(G)\|}\Big] \\[6pt]

& \text{Row-wise normalization: } 
  \Big[\tfrac{\text{row}_1(G)^\top}{\|\text{row}_1(G)\|}, \dots, 
          \tfrac{\text{row}_m(G)^\top}{\|\text{row}_m(G)\|}\Big]^\top \\[6pt]

& \text{Sign normalization: } 
  \operatorname{sign}(G)
\end{array}
\label{eq:lmo_different_oracle}
\end{equation}
Here we assume that $d_{\rm in}$ is the input dimension and $d_{\rm out}$ is the output dimension. Therefore row- and column-normalizations correspond to normalizing along the input and output dimensions, respectively. It can be verified that different normalization schemes arise naturally from the steepest ascent direction under different matrix norms \citep{bernstein2024modular,bernstein2024old,pethick2025scion,bernstein2025modular}. Many existing optimizers employ different normalization techniques. For example, Sign-SGD/Adam utilize $\|\cdot\|_{1\rightarrow \infty}$ (sign normalization)\footnote{ Here we use the matrix induced norm. For any matrix $A\in\mathbb{R}^{n\times m}$, define
$$
\|A\|_{a\rightarrow b} = \max_{x\in\mathbb{R}^{m}} \frac{\|A x\|_{b}}{\|x\|_a}
$$
where $\|\cdot\|_a$ and $\|\cdot\|_b$ are vector $\ell_{a}$ and $\ell_b$ norms.} to achieve gradient normalization, Muon~\citep{jordan2024muon} utilizes $\|\cdot\|_{2\rightarrow 2}$ norm (singular-value normalization), whereas SCION~\citep{pethick2025scion} and SlimAdam~\citep{kalra2025can} apply different norms for different layers to achieve better performance. Below we systematically analyze the effect of each normalization scheme separately.

\begin{table}[htb]
\centering
\small %
\setlength{\tabcolsep}{4pt} %
\begin{tabular}{lccc}
\toprule
dimension $d$ & 1024 & 2048  & 4096 \\
\midrule
singular-value     &   79.77    &   354.27  &  1958.66   \\
singular-value (NS) &   6.03   &  7.00   &  14.41   \\
column-wise        &   0.10    &  0.12    &  0.17    \\
row-wise           &   0.09    &   0.11   &  0.13    \\
sign               &   0.03    &    0.03    &  0.03     \\
\bottomrule
\end{tabular}
\vspace{5pt}
\caption{Time (ms) consumed by each of the normalization methods on a torch matrix tensor with dimension $d_{\text{in}}=d_{\text{out}} = d$, testing on a single NVIDIA A40 GPU. Here the singular-value normalization are computed both exactly (first row, using \texttt{torch.linalg.svd} directly) and inexactly using Newton-Schulz (NS) iteration (second row, see \cite{jordan2024muon} for details)\protect\footnotemark.
}\label{tab:normalization_time}

\vspace{5pt}

\centering
\small %
\setlength{\tabcolsep}{4pt} %
\begin{tabular}{lccc}
\toprule
& \textbf{60M} & \textbf{130M}  & \textbf{350M} \\
\midrule
 Tokens & 1.4B & 2.6B  & 7.8B \\
 \midrule
Adam               &   30.05   &   23.13   & 18.77    \\
Adam (Stable-SPAM)         &   28.77   &   22.20   & 16.80    \\
\midrule
singular-value (NS)    &   34.15    &   25.25   &  18.73   \\
column-wise        &   39.89    &   28.85   &  20.38    \\
row-wise           &   79.27    &   37.67   &  21.63    \\
sign               &   54.36    &   40.42   &  27.95     \\
\bottomrule
\end{tabular}
\vspace{5pt}
\caption{Results (perplexity) of pretraining LLaMA models on C4 dataset, using different gradient normalizations, as specified in \eqref{eq:lmo_different_oracle}. For the singular-value normalization, we use the inexact Newton-Schulz (NS) iteration (again see \cite{jordan2024muon} for details) for fast approximation.}\label{tab:normalization_methods_results}
\vspace{-0.6cm}
\end{table}

\footnotetext{The small time difference of column- and row-wise normalization may originate from the way PyTorch stores tensors with strides.}

\textbf{Computational cost of different normalizations}. In terms of the computational cost,  the normalization techniques discussed in \eqref{eq:lmo_different_oracle} can be quite different. In particular, singular-value normalization requires computing full SVD, which is the most time consuming compared to the rest. Even though efficient approximation methods of SVD have been studied in \citet{jordan2024muon,ma2025swan} (e.g., the Newton-Schulz (NS) procedure), they are still much more time consuming as compared with the other three normalization techniques, see Table \ref{tab:normalization_time} for a test on the time required for different normalizations.

\textbf{Experimental comparison of SGD with different gradient normalizations}. We conduct the preliminary experiment of pretraining LLaMA models (See Section \ref{sec:experiments} for the experiment setting) using SGD with different normalizations applied to all the layers as specified in \eqref{eq:lmo_different_oracle}. We report the pretraining perplexities in Table \ref{tab:normalization_methods_results}, and we notice that all the normalizations improve over SGD, however none of them alone could match the performance of Stable-SPAM  \citep{huang2025stablespam} (which is a stabilized version of Adam). In particular, the four normalization methods can be categorized into two groups based on their performance: singular-value and column-wise normalization achieve better results than row-wise and sign normalization. We thus proceed with the group of normalizations with better performance, namely the singular-value and column-wise normalizations.

\begin{figure}[t]
  \centering

  \begin{subfigure}[t]{0.48\textwidth}
    \centering
    \includegraphics[width=0.48\linewidth]{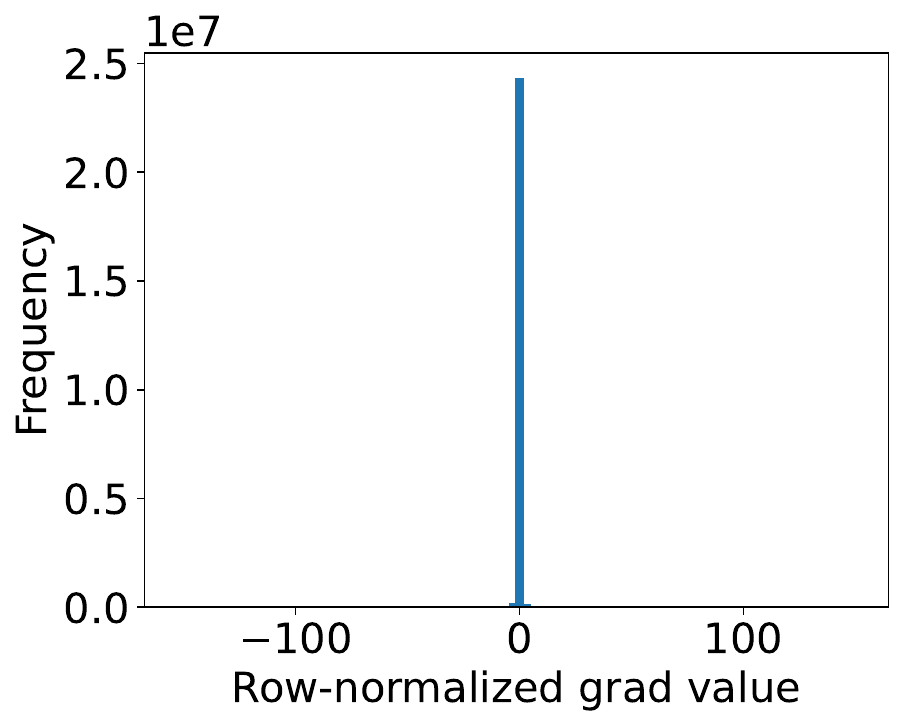}\hfill
    \includegraphics[width=0.48\linewidth]{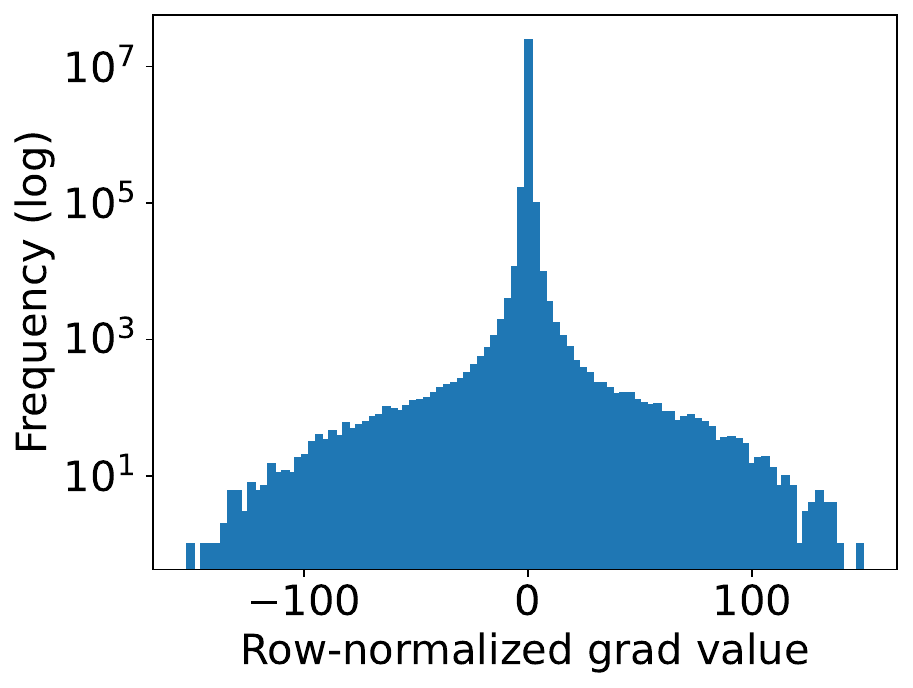}
    \caption{Row-wise normalized LM-head gradients}
    \label{fig:row_grad_group}
  \end{subfigure}
  \hfill
  \begin{subfigure}[t]{0.48\textwidth}
    \centering
    \includegraphics[width=0.48\linewidth]{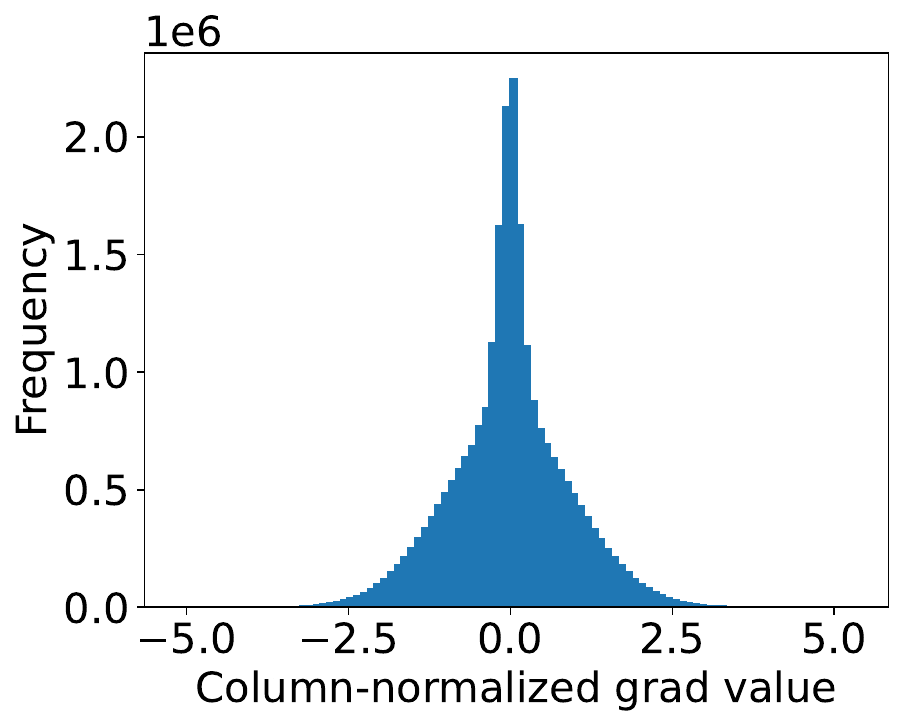}\hfill
    \includegraphics[width=0.48\linewidth]{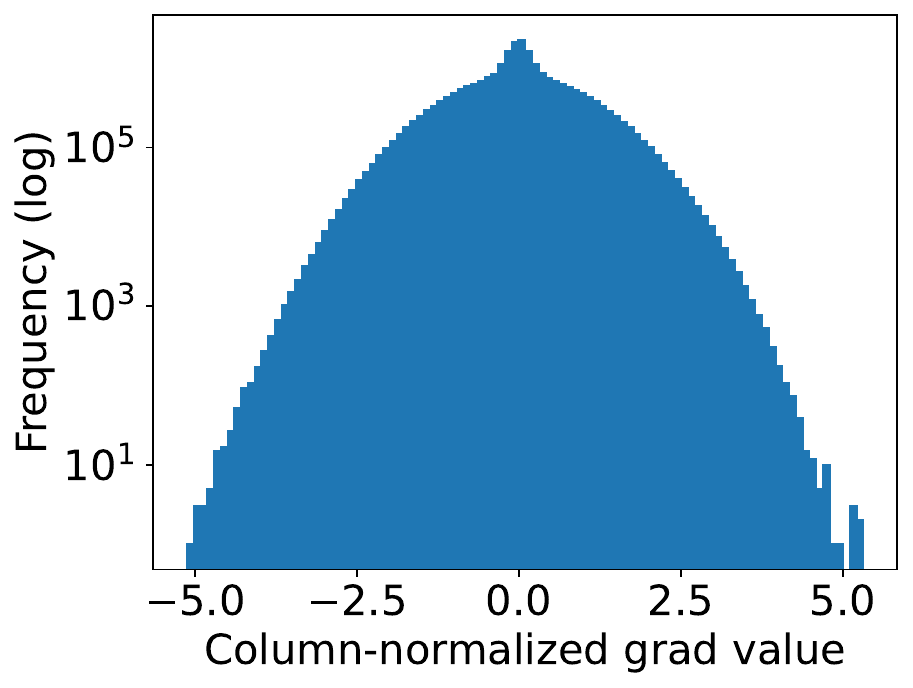}
    \caption{Column-wise normalized LM-head gradients}
    \label{fig:col_grad_group}
  \end{subfigure}

  \caption{The histograms of the LM-head gradients after applying row-wise (a) and column-wise normalization (b). The gradients are from the 1000th training iteration of a LLaMA 130M model. It can be seen from figure (a) that row-wise results into some very high gradient values (up-to range 150 in absolute value) that we find to destabilize training.}
  \label{fig:row_col_grad_comparison}

  \vspace{-0.5cm}
\end{figure}

\textbf{Why row-wise normalization performs poorly.}  Table \ref{tab:normalization_methods_results} shows that row-wise gradient normalization consistently achieves worse performance than column-wise. We identified that this is largely attributed to the effect of row-wise normalization to the last layer's (LM-head) gradients. Figure \ref{fig:row_col_grad_comparison} shows that after row-wise is applied to the LM-head, some gradients obtain very high absolute values (up-to 150 in this case),  which destabilizes the training. In contrast, column-wise results into a gradient distribution without extreme values, thus aiding stable training. 

We believe that this behavior is largely related to the role of the LM-head, i.e., the layer with gradient matrix $d_{\textrm{model}} \times |V|$ that maps the last hidden state (of dimension $d_{\textrm{model}}$) to the vocabulary logits (of dimension $|V|$ being the vocabulary size). In particular we have observed that for certain very frequent tokens, their corresponding LM-head column gradients have much larger norms compared to the rest, which if not normalized appropriately (i.e., using column-wise normalization) can lead to unstable training. We provide further discussion and results about the importance of column-wise for the LM-head in Appendix \ref{appendix:column_wise_discussion}.
\vspace{-5pt}

\vspace{3pt}

\subsection{Momentum in the last layer}\label{sec:momentum_last}

\vspace{3pt}

First-order momentum ($m^t$ in \eqref{eq:adamw}) has been shown to effectively reduce the variance toward the true gradients; see \citet{liu2020improved,cutkosky2020momentum}. Although  second-order momentum is used in adaptive optimizers such as Adam and RMSprop, more recent optimizers such as Muon have demonstrated remarkable success without second-order momentum. It is  worth noticing that momentum is the major factor that introduces memory overhead in optimizer states, therefore we only consider first-order momentum in our optimizer design. Based on a minimalist optimizer design principle, we plan to investigate whether first-order momentum can be {\it selectively} used among different layers, while still leading to performance improvements without significantly increasing memory overhead.

\noindent\textbf{Layer-wise gradient variance}. Existing works already show the effectiveness of momentum in reducing variances \citep{liu2020improved,cutkosky2020momentum}. Naturally, if the gradients for different layers have different variances, we could achieve both memory efficiency and variance reduction by applying momentum to the layers with large variances. In the next paragraphs, we identify the layer-wise gradient variances and determine the most important layer that provides the largest performance gain when incorporating momentum; we also justify the effectiveness of different momentum parameters for different layers using a convergence analysis theory in Theorem \ref{thm:momentum}. 

We first perform a simple experiment on LLaMA 130M to check the gradient variance of different layers. To estimate the gradient variance, one needs to obtain the full gradient (by using the entire training dataset) which is not feasible in practice. Instead, we take a much larger training data batch as input\footnote{For example, we could take the small training batch size as $32$ and the large batch size as $512$.} to estimate the true gradient. The experiment results of running SGD with column-wise normalization (SGD-col-norm) and SGD-col-norm with last layer momentum are shown in Figure \ref{fig:variance}. Interestingly, we observe that the variance of the last layer is the largest among all the layers during training, necessitating a specific treatment for the last layer to reduce the variance.

\begin{figure}[t]
  \centering

  \begin{subfigure}[t]{0.4\textwidth}
    \centering
    \includegraphics[width=\linewidth]{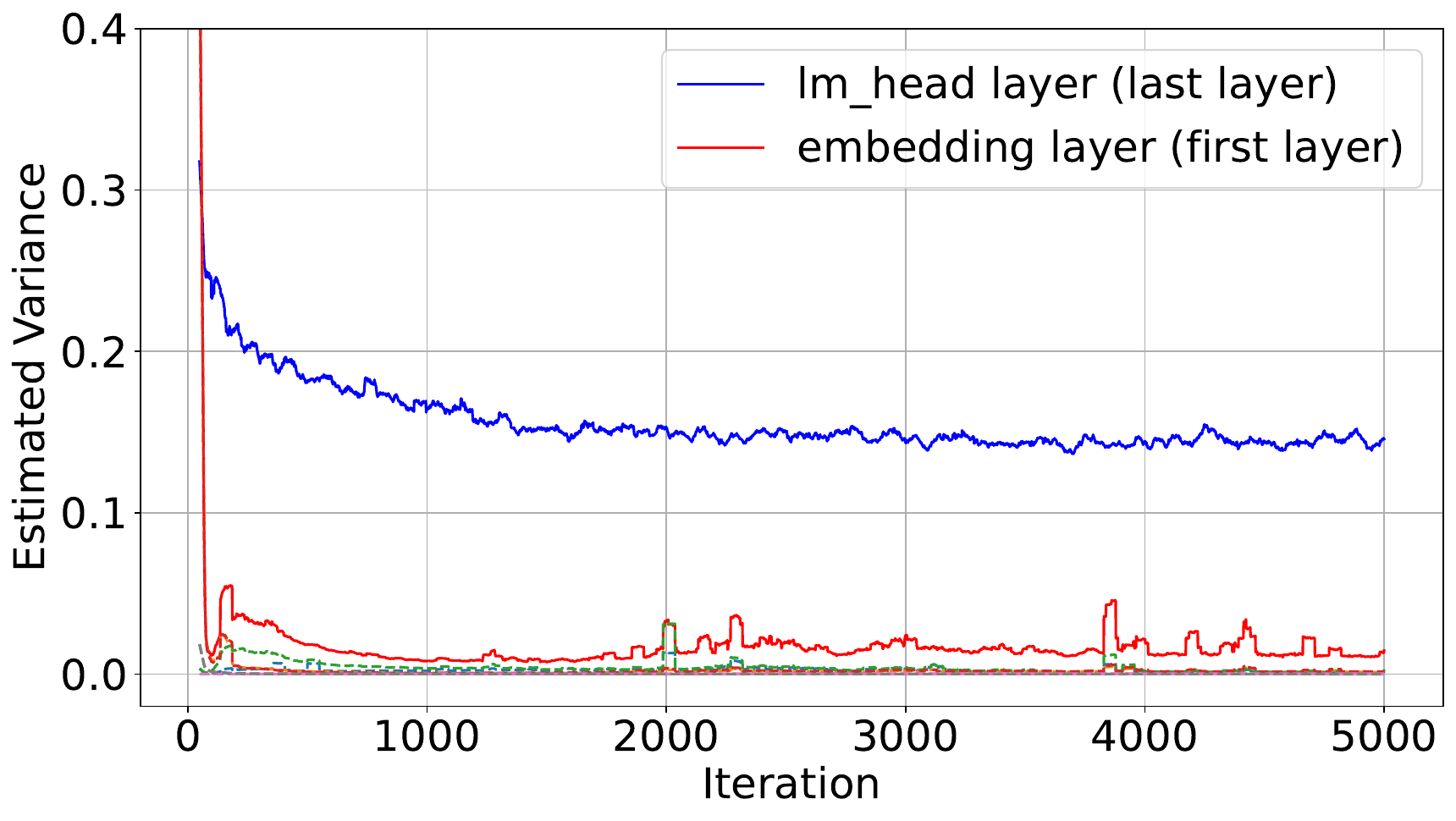}
    \caption{SGD-col-norm}
    \label{fig:variance:a}
  \end{subfigure}
  \hfill
  \begin{subfigure}[t]{0.4\textwidth}
    \centering
    \includegraphics[width=\linewidth]{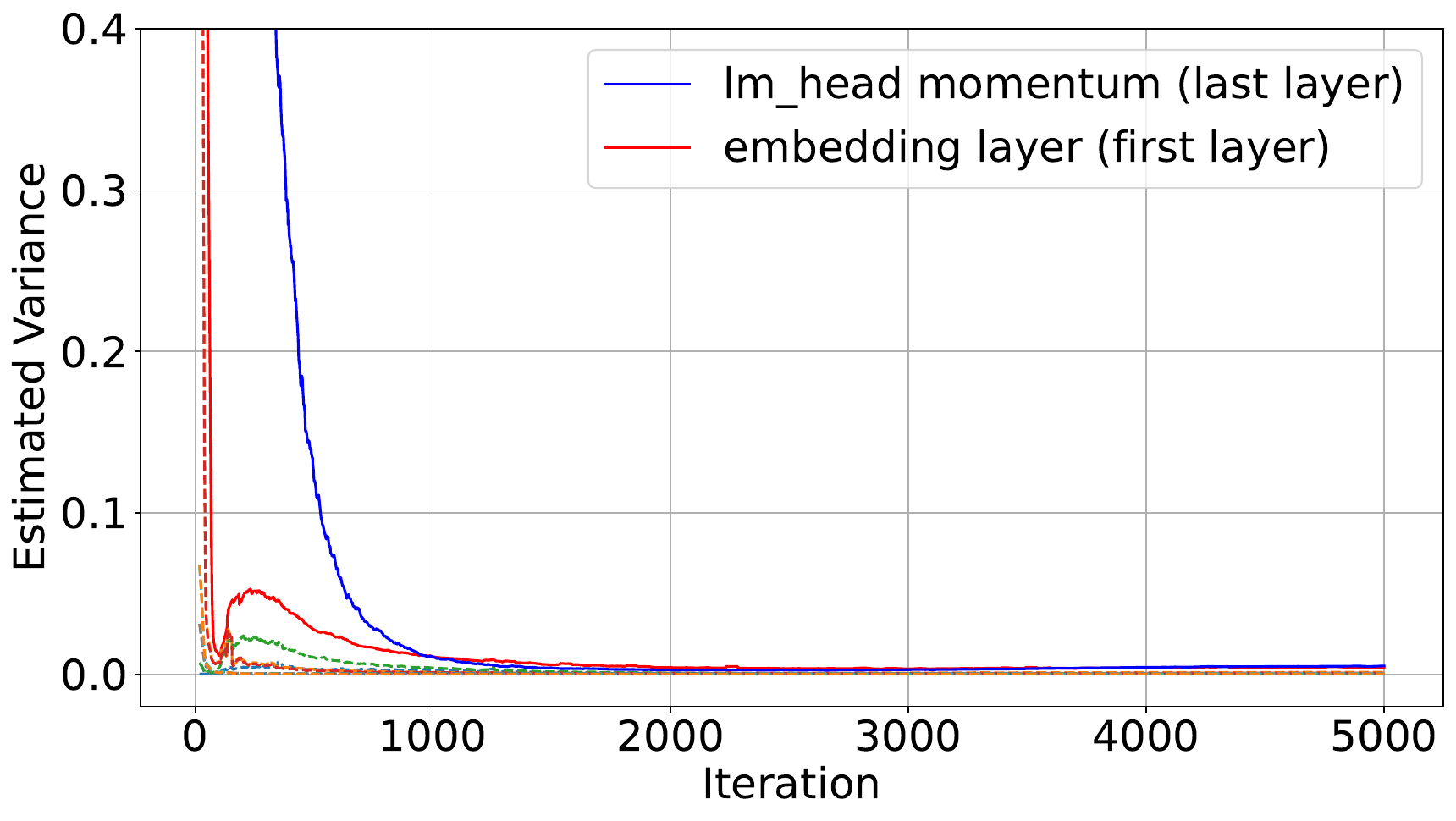}
    \caption{SGD-col-norm-mmt-last}
    \label{fig:variance:b}
  \end{subfigure}

  \vspace{-0.2cm}
  \caption{Estimated variance of the stochastic gradients (and momentum when applicable) for different layers in two methods (smoothed by 50 iterations window). We observe that when running SGD with column-wise normalization (SGD-col-norm, left plot), the variance of the last layer (lm\_head) is largest for most of the time, followed by the variance of the first layer (embedding) and other layers. After applying momentum to the last layer (SGD-col-norm-mmt-last, right plot), the variance of the momentum of last layer (lm\_head momentum) decreases to a very low level. Interestingly, the variance of the first layer in plot (b) is also smaller than the one in plot (a).}
  \label{fig:variance}

  \vspace{-0.5cm}
\end{figure}

Next, we show theoretically that momentum helps the most for the layers with larger gradient variances. We inspect the theoretical property of applying SGD with momentum (SGD-M) to the LLM optimization problem \eqref{eq:finite_sum}. Note that here we did not consider column-wise normalization for ease of theoretical analysis. Consider the following SGD-M algorithm to solve \eqref{eq:finite_sum} ($m_{l}^{0}=0$ and $\theta^{1}$ is randomly initialized):
\begin{equation}\label{eq:sgdm}
\begin{aligned}
    &m_{l}^{t} = \beta_l m_{l}^{t-1} + (1-\beta_l) g^t,\ g^t= \nabla_{\theta_l}\ell(\theta^t;\xi_t)\\
    &\theta_{l}^{t+1} = \theta_{l}^{t} - \eta_{l}m_{l}^{t} 
\end{aligned}
\end{equation}
where $t=1,2,...,T$ is the iteration counter; $l=1,...,L$ represents different layers, i.e. we assume different layers contain different momentum with different hyperparameters. We have the following theoretical result (see Appendix \ref{appendix:proof} for the proof).

\begin{theorem}\label{thm:momentum}
    Suppose $\ell(\theta)$ in \eqref{eq:finite_sum} is lower bounded by $\ell^*$, $\gamma$-smooth (i.e. $\nabla \ell(\theta)$ is Lipschitz continuous with constant $\gamma$), also the stochastic gradient is unbiased $\E_{\xi_t}\nabla_{\theta_l} \ell(\theta^t;\xi_t) = \nabla_{\theta_l} \ell(\theta^t)$ and with bounded variance: $\E_{\xi_t}\|\nabla_{\theta_l} \ell(\theta^t;\xi_t) - \nabla_{\theta_l} \ell(\theta^t)\|^2 \leq \sigma_{l}^2$ for all $l=1,...,L$ and $t=0,...,T-1$. With appropriate choice of hyperparameters $\eta_l$ {(defined in \eqref{eq:lower_bound_eta})} and $\beta_l\leq 1-\delta$ ($\delta$ is an absolute constant), we have the following convergence result for update \eqref{eq:sgdm}:
    \begin{equation}\label{eq:thm_convergence_rate}
    \begin{split}
        &\frac{1}{T}\sum_{t=1}^{T}\sum_{l=1}^{L} \mathbb{E} \|\nabla_l^t\|^2 \leq \frac{2L\gamma^{3/2} \mathbb{E}\Delta_1}{\delta^2\sqrt{T}} \\&+ \sum_{l=1}^{L}\left( \frac{1-\beta_l}{1+\beta_l}\frac{L\sqrt{\gamma}}{4 \sqrt{T}} + \frac{L\gamma^{3/2}}{2\sqrt{T}} + \frac{1-\beta_l}{\beta_l^3}\frac{\gamma^2}{4L T} \right) \frac{\sigma_l^2}{\delta^2}
    \end{split}
    \end{equation}
    where $\nabla_l^t=\nabla_{\theta_l} \ell(\theta^t;\xi_t)$ and $\Delta_1=\ell(\theta^1) - \ell^*$.
\end{theorem}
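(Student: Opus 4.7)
The plan is to follow the classical Lyapunov-function approach for SGD with heavy-ball-style momentum in the smooth nonconvex setting, generalized here to per-layer step sizes $\eta_l$ and momentum parameters $\beta_l$. First I would apply the $\gamma$-smoothness descent inequality to the joint update $\theta^{t+1} = \theta^t - (\eta_1 m_1^t,\dots,\eta_L m_L^t)$, giving
\[
\mathbb{E}\ell(\theta^{t+1}) \leq \mathbb{E}\ell(\theta^t) - \sum_{l=1}^L \eta_l\,\mathbb{E}\langle \nabla_l^t, m_l^t\rangle + \frac{\gamma}{2}\sum_{l=1}^L \eta_l^2\,\mathbb{E}\|m_l^t\|^2.
\]
Writing the momentum error as $e_l^t := m_l^t - \nabla_l^t$ and using $\langle \nabla_l^t, m_l^t\rangle = \|\nabla_l^t\|^2 + \langle \nabla_l^t, e_l^t\rangle$ together with Young's inequality reduces the analysis to controlling $\mathbb{E}\|e_l^t\|^2$ on every layer.

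The second step is to unroll the EMA recursion
\[
e_l^t = \beta_l\, e_l^{t-1} + \beta_l\bigl(\nabla_l\ell(\theta^{t-1}) - \nabla_l\ell(\theta^t)\bigr) + (1-\beta_l)\bigl(g_l^t - \nabla_l\ell(\theta^t)\bigr).
\]
Independence of the mini-batch noise and the bounded-variance hypothesis contribute $(1-\beta_l)^2\sigma_l^2$ per step, and summing the geometric series $\sum_k \beta_l^{2k}$ produces the steady-state factor $\frac{1-\beta_l}{1+\beta_l}\sigma_l^2$ that appears in the stated bound. The consecutive-gradient difference is controlled by smoothness, $\|\nabla_l\ell(\theta^t)-\nabla_l\ell(\theta^{t-1})\|\leq \gamma\|\theta^t-\theta^{t-1}\|$, which crucially couples layer $l$'s error to the iterate displacements of every other layer through $\sum_k \eta_k^2\|m_k^{t-1}\|^2$.

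The third step is to assemble a Lyapunov function $V^t := \ell(\theta^t) - \ell^* + \sum_l c_l\,\mathbb{E}\|e_l^t\|^2$ with coefficients $c_l$ chosen so that, after taking one-step expectations, the momentum-error terms telescope and the remaining quadratic in $\|\nabla_l^t\|^2$ has a uniformly negative coefficient. The descent-inequality remainder $\eta_l^2\|m_l^t\|^2$ is expanded as $2\eta_l^2(\|\nabla_l^t\|^2 + \|e_l^t\|^2)$ and absorbed into the same drift; this is where $\beta_l \leq 1-\delta$ enters, ensuring that the net coefficient stays bounded away from zero. Telescoping $V^{t+1}-V^t$ from $t=1$ to $T$, using $\ell\geq \ell^*$, and finally choosing step sizes of order $\Theta(1/\sqrt{T})$ (subject to the lower bound on $\eta_l$ declared in the theorem) balances the initial-suboptimality term against the noise terms and yields the $1/\sqrt{T}$ rate with the precise coefficients of \eqref{eq:thm_convergence_rate}.

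The main obstacle I anticipate is the inter-layer coupling: because the smoothness bound mixes the iterate increments from all layers, the per-layer Lyapunov coefficients $c_l$ cannot be chosen independently. They must be balanced so that the coefficient multiplying $\sigma_l^2$ in the final bound depends on $\beta_l$ alone—producing the clean $(1-\beta_l)/(1+\beta_l)$ factor in the leading noise term—while the residual transient, stemming from the unrolled smoothness coupling to other layers, gets packaged into the separate $\frac{1-\beta_l}{\beta_l^3}\frac{\gamma^2}{LT}$ piece of \eqref{eq:thm_convergence_rate}. Tracking these constants uniformly in $l$ under the hypotheses $\eta_l\geq\eta$ and $\beta_l\leq 1-\delta$, without sacrificing the $1/\sqrt{T}$ scaling, is the main bookkeeping burden of the proof.
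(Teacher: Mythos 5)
Your outline correctly identifies the two error components --- the pure-noise part of the momentum, whose geometric sum yields the $\tfrac{1-\beta_l}{1+\beta_l}\sigma_l^2$ factor (this is exactly Lemma \ref{lemma:1}), and the bias part controlled through smoothness by the iterate displacements (Lemma \ref{lemma:2}) --- but the way you inject them into the descent inequality has a genuine gap. You apply $\gamma$-smoothness at $\theta^t$ itself, split the first-order term as $-\eta_l\langle \nabla_l^t, m_l^t\rangle = -\eta_l\|\nabla_l^t\|^2 - \eta_l\langle\nabla_l^t, e_l^t\rangle$, and dispatch the cross term with Young's inequality. Any bound of the form $\eta_l|\langle\nabla_l^t,e_l^t\rangle| \leq \tfrac{\eta_l}{2\rho}\|\nabla_l^t\|^2 + \tfrac{\rho\eta_l}{2}\|e_l^t\|^2$ with $\rho=\Theta(1)$ (which is forced if the coefficient of $\|\nabla_l^t\|^2$ is to remain negative) leaves a term of size $\Theta(\eta_l)\,\mathbb{E}\|e_l^t\|^2$ in the one-step decrease. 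Because the theorem takes $\beta_l\leq 1-\delta$ \emph{fixed}, $\mathbb{E}\|e_l^t\|^2$ carries a non-vanishing variance floor of order $\tfrac{1-\beta_l}{1+\beta_l}\sigma_l^2$; after telescoping and dividing by $\eta_l T$ this produces a $T$-independent additive term of that order, which is incompatible with \eqref{eq:thm_convergence_rate}, where every $\sigma_l^2$ contribution decays at least like $1/\sqrt{T}$. Nor can you argue that $\mathbb{E}\langle\nabla_l^t, e_l^t\rangle$ vanishes: $e_l^t$ aggregates noise from $\xi_1,\dots,\xi_{t-1}$, on which $\nabla_l^t=\nabla_{\theta_l}\ell(\theta^t)$ depends, so the first-order noise contribution is genuinely present in your decomposition.

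The missing ingredient is the auxiliary momentum-corrected sequence of Lemma \ref{lemma:sequence_z}, $z_l^t=\tfrac{1}{1-\beta_l}\theta_l^t-\tfrac{\beta_l}{1-\beta_l}\theta_l^{t-1}$, which satisfies $z_l^{t+1}-z_l^t=-\eta_l g_l^t$. Applying the descent lemma to $\ell(z^t)$ rather than $\ell(\theta^t)$ makes the first-order term $-\eta_l\mathbb{E}\langle\nabla_{\theta_l}\ell(z^t),g_l^t\rangle=-\eta_l\langle\nabla_{\theta_l}\ell(z^t),\nabla_l^t\rangle$ by unbiasedness, so no momentum error appears at first order; the noise then enters only through $\tfrac{\gamma\eta_l^2}{2}\mathbb{E}\|g_l^t\|^2$ and through $\|z^t-\theta^t\|^2=\sum_l(\tfrac{\beta_l}{1-\beta_l})^2\eta_l^2\|m_l^{t-1}\|^2$, both carrying an extra factor of $\eta_l^2$, which is precisely what turns the variance floor into the $O(\eta_l)=O(1/\sqrt{T})$ terms of \eqref{eq:thm_convergence_rate}. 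A secondary, repairable difference: since Lemma \ref{lemma:2} couples the momentum bias to the \emph{entire history} of displacements $\|\theta^{i+1}-\theta^i\|^2$, the paper's potential function must carry a geometrically weighted sum $\sum_{i=1}^{t-1}\sum_l c_{l,i}\|\theta_l^{t+1-i}-\theta_l^{t-i}\|^2$ with recursively chosen coefficients, not the single per-layer coefficient $c_l\,\mathbb{E}\|e_l^t\|^2$ you propose; your one-step error recursion could in principle substitute for this, but only after the cross-term issue above is resolved.
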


\begin{remark}
    The left-hand side of \eqref{eq:thm_convergence_rate} is the expectation of the stochastic gradient, measuring the stationarity. The right-hand side suggests an overall $\mathcal{O}(1/\sqrt{T})$ rate of convergence in terms of the iteration number $T$, meanwhile the summation indicates the error to each of the layer is aggregated, whose magnitude depends on the variance level $\sigma_l$ of the specific layer.
    
    Theorem \ref{thm:momentum} suggests that by taking different $\beta_l$ for different layers $l=1,...,L$, the convergence could be improved. The second term of the right-hand side of \eqref{eq:thm_convergence_rate} takes the form
    $$
    f(x) = \frac{1-x}{1+x}+ c\frac{1-x}{x^3}, \ c:=\frac{\gamma}{L^2 T^{3/2}}
    $$
    where $x$ represents the layer-wise momentum hyperparameter $\beta_l$. It is straightforward to verify that $f(x)$ is decreasing in $(0, 1-\delta]$, therefore an optimal strategy is to pick $\beta_l=1-\delta$. However from a memory-efficient point of view, only $\beta_l=0$ saves the memory of momentums.

    Now, if $\sigma_l$ (variance of the gradient of the $l$-th layer) is significantly higher than other layers, then taking $\beta_l$ higher than other layers will result in better convergence, which is consistent with our experimental findings. On the other hand, if $\sigma_l$ is close to) zero, taking $\beta_l=0$ will provide memory-efficiency without harming the convergence too much. In particular, if the variances $\sigma_l\approx0$ for $l=1,...,L-1$, one could take $\beta_l=0$ for $l=1,...,L-1$ and only keep the momentum for the last layer without significantly damaging the convergence rate.
\end{remark}

{
\textbf{Proof Sketch for Theorem \ref{thm:momentum}}. The proof follows \citet{liu2020improved}, with the major difference being that \citet{liu2020improved} proved the case with one single layer ($L=1$) and we extend it to multi-layer setting ($L>1$) where each layer has different variance level.

First, define the auxiliary sequence by 
    \begin{equation}
    z_{l}^t:= \begin{cases}\theta_l^t & t=1 \\ \frac{1}{1-\beta_l} \theta_l^t-\frac{\beta_l}{1-\beta_l} \theta_l^{t-1} & t \geq 2\end{cases}
    \end{equation}
    we can verify two important properties: $
    z_{l}^{t+1} - z_{l}^t = -\eta_{l}g_l^t
    $
    and
    $
    z_l^t - \theta_l^t = -\frac{\beta_l}{1-\beta_l} \eta_l m_l^{t-1}
    $ (Lemma \ref{lemma:sequence_z}).
Using $\gamma$-Lipschitz smooth, we can expand the function value difference via
    \begin{equation*}
    \begin{split}
        \mathbb{E}_{\xi_t}\left[\ell\left(z^{t+1}\right)\right] \leq &\ell\left(z^t\right)+\underbrace{\mathbb{E}_{\xi_t}\left[\left\langle\nabla \ell\left(z^t\right), z^{t+1}-z^t\right\rangle\right]}_{(a)}\\&+\frac{\gamma}{2} \underbrace{\mathbb{E}_{\xi_t}\left[\left\|z^{t+1}-z^t\right\|^2\right]}_{(b)}.
    \end{split}
    \end{equation*}
    Expanding the term (a) on the right-hand side using the update using the above properties of $z_l^t$, we will arrive at a $-\mathcal{O}(\sum_{t=1}^{T}\eta_t\sum_{l=1}^{L} \mathbb{E} \|\nabla_l^t\|^2)$ term (which is the left-hand side of \eqref{eq:thm_convergence_rate}), and some addition term. The additional term, combining with (b), will be bounded via Lemma \ref{lemma:1} and \ref{lemma:2}, and together form the right hand side of \eqref{eq:thm_convergence_rate}.   
}

\begin{table}[htb]
\vspace{5pt}
\centering
\setlength{\tabcolsep}{3pt}
\resizebox{0.4\textwidth}{!}{%
\begin{tabular}{l|ccc}
\toprule
  Model Size & \textbf{60M} & \textbf{130M}  & \textbf{350M} \\
  \midrule
 Tokens & 1.4B & 2.6B  & 7.8B \\
\midrule
Adam               &   30.05   &   23.13   & 18.77    \\
Adam (Stable-SPAM)         &   28.77   &   22.20   & 16.80    \\
\midrule
Singular-val (NS)  + mmt-last            & 31.20   & 22.33   & 16.67       \\
Column-wise + mmt-last (ours)                                 & 30.81  & 22.57    & 16.32  \\
\bottomrule
\end{tabular}}
\vspace{5pt}
\caption{Evaluation perplexity of two normalizations (singular-value and column-wise) when combined with last-layer-momentum (mmt-last). 
}\label{tab:comparison_norms_llm}
%
\end{table}
\textbf{Experiments on different gradient normalization with last-layer-momentum}. It is observed in Table \ref{tab:normalization_methods_results} that singular-value and column-wise normalization obtains better performance than the other normalizations for pretraining. We conduct another experiment to check the performance of the two types of normalization with last-layer-momentum. 
The results are summarized in Table \ref{tab:comparison_norms_llm}, where we observe that as the model size gets larger, both singular-value and column-wise normalizations + last-layer-momentum are matching Adam's performance, and we choose column-wise due to the computational time concern in  Table \ref{tab:normalization_time}.

\section{The \ourname{} Algorithm}\label{sec:algorithm}

The above investigation leads to the following simple optimizer, \ourname{}, detailed in Algorithm~\ref{algo:main}. The design of \ourname{} is motivated by empirical insights from our preceding experiments, which highlighted the importance of stabilizing updates in the last layer and controlling gradient scale via column-wise normalization. Accordingly, \ourname{} integrates two components: column-wise normalization of the gradients (for all layers) and momentum usage restricted to the last layer. Despite its simplicity, \ourname{} is highly effective and requires only minimal modifications to standard implementations of Adam—typically just a few additional lines of code. As we will demonstrate, this lightweight design allows \ourname{} to outperform state-of-the-art baselines such as Adam and achieve performance competitive with Stable-SPAM and Muon, while using substantially less memory.

\begin{algorithm}[t] 
\caption{\ourname{}: \underline{S}tochastic \underline{C}olumn-norm\underline{a}lized \underline{L}ast-layer Mom\underline{e}ntum} 
\begin{algorithmic}
\STATE {\bfseries Input:} Initialized trainable parameters $\theta^0$, hyperparameters $\beta_t$ and $\eta_{t,l}$.

\FOR{$t=0,1,...,T-1$}
    \STATE Sample mini-batch data $\{\xi_{t, b}\}_{b=1,...,B}$;
    \FOR{Layers $l=1,...,L$}
    \STATE Compute the stochastic gradient as\\ 
    $g_l^t:=\frac{1}{B}\sum_{b=1}^{B}\nabla_{\theta_l} \ell(\theta^{t};\xi_{t, b})$;
    \IF{$l=L$ (last layer)}
    \STATE $m_{l}^{t} = \beta_t m_{l}^{t-1} + (1-\beta_t)g_l^t$;
    \ELSE
    \STATE $m_{l}^{t}=g_l^t$ (record the gradient directly);
    \ENDIF
    \STATE $\theta_{l}^{t+1} = \theta_{l}^{t} - \eta_{l} \mathcal{C}( m_{l}^{t}) $ where $\mathcal{C}$ is the column-wise normalization;
    \ENDFOR
\ENDFOR

\STATE {\bfseries Output:} Trained model $\theta^{T}$.
\end{algorithmic}
\label{algo:main}
\end{algorithm}

\textbf{Connection to existing works}. We now give a discussion on the connection of the proposed method to existing works. The proposed algorithm utilizes column-wise normalization, which is also discussed in \citet{pethick2025scion}. \citet{pethick2025scion} uses momentum for all the layers and applies column-wise normalization as an alternative to singular-value normalization only for the last layer. Our method also differs from SWAN~\citep{ma2025swan} in the following aspects: first, SWAN utilizes both the row-wise and singular-value normalization while our method only utilizes column-wise normalization, indicating certain redundancy in existing works that apply multiple normalizations; second, SWAN applies Adam for the embedding and LM head layers (the first and last layers) which increases the memory overhead, whereas our approach only introduces first-order momentum for the last layer. We summarize the techniques used in different papers in Table \ref{tab:summarize_methods}.

\textbf{Convergence of Algorithm \ref{algo:main}}. 
The convergence provided in \citet[Theorem 2]{riabinin2025gluon} could be adopted to provide a convergence analysis for Algorithm \ref{algo:main}, since Algorithm \ref{algo:main} differs from the general framework of \citet[Algorithm 1]{riabinin2025gluon} only by the fact that we assume different gradient variance $\sigma_i$ for different layers. 
Below we provide a convergence result for a generalization of the proposed Algorithm \ref{algo:main}:
\begin{equation}\label{eq:scale_generalization}
\begin{aligned}
    &\text{(for each layer $l$) } m_{l}^{t} = \beta_l m_{l}^{t-1} + (1-\beta_l)g_l^t \\
    &\text{(for each layer $l$) } \theta_{l}^{t+1} = \theta_{l}^{t} - \eta_{l} \mathcal{C}( m_{l}^{t}).
\end{aligned}
\end{equation}
We have the following convergence\footnote{Note that to show convergence we need a different set of assumptions (which directly assume smoothness and bounded variance in matrix $1\rightarrow 2$ norm layer-wisely)}:
\begin{theorem}\label{thm:scale_convergence}
    Suppose $\ell(\theta)$ in \eqref{eq:finite_sum} is lower bounded by $\ell^*$, also each layer $l$ is $L_l$-smooth in matrix $1\rightarrow 2$ norm: $\|\nabla_{\theta_l}\ell(\theta) - \nabla_{\theta_l}\ell(\vartheta)\|_{2\rightarrow\infty}\leq L_l\|\theta - \vartheta\|_{1\rightarrow2}$, also the stochastic gradient is unbiased $\E_{\xi_t}\nabla_{\theta_l} \ell(\theta^t;\xi_t) = \nabla_{\theta_l} \ell(\theta^t)$ and with bounded variance: $\E_{\xi_t}\|\left(\nabla_{\theta_l} \ell(\theta^t;\xi_t) - \nabla_{\theta_l} \ell(\theta^t)\right)_{:,j}\|_{2\rightarrow\infty}^2 \leq \nu_{l,j}^2$ for all $l=1,...,L$ and $t=0,...,T-1$ and all the columns $j$. Define $\bar{\sigma}_l := \left(\sum_{j=1}^{n_l}\nu_{l,j}^2\right)^{1/2}$. If $\eta_l=a_l/\sqrt{T}$, we have the following convergence result for \eqref{eq:scale_generalization} (a generalization of Algorithm \ref{algo:main}):
    \begin{equation}\label{eq:scale_convergence}
    \begin{split}
        &\frac{1}{T}\sum_{t=1}^{T}\sum_{l=1}^{L} a_l \mathbb{E} \|\nabla_l^t\|_{2\rightarrow\infty} \lesssim \frac{\Delta_1}{T^{1 / 4}} \\&+ 
        \sum_{l=1}^{L}\left[   \frac{L_la_l^2}{T^{3/4}}  +    a_l\left( \frac{\beta_lL_la_l}{1-\beta_l}\frac{1}{T^{1/2}}+\bar{\sigma}_l\sqrt{\frac{1-\beta_l}{1+\beta_l}}\frac{1}{T^{1/4}} \right)\right],
    \end{split}
    \end{equation}
    where $\nabla_l^t=\nabla_{\theta_l} \ell(\theta^t;\xi_t)$ and $\Delta_1=\ell(\theta^1) - \ell^*$. Also we omitted constants in $\lesssim$ notation.
\end{theorem}

\begin{table*}[t]
\centering
\small %
\setlength{\tabcolsep}{4pt} %
\resizebox{0.9\linewidth}{!}{%
\begin{tabular}{l|cccccc|c}
\toprule
Methods & Sign & Col-wise & Row-wise & Singular-val & 1st order EMA & 2nd order EMA  & Memory (7B) \\
\midrule
SGD             &   &  & & & &  & 13.48 \\
\midrule
Adam            &  \checkmark &  & & & \checkmark & \checkmark & 40.43 \\
Muon            &   &  & & \checkmark & \checkmark &  & 26.95 \\
SWAN${^\star}$            &  &  & \checkmark & \checkmark & First/Last Layer &  First/Last Layer & 14.52 \\
APOLLO${^\star}$          &  &   &    &  & Rank-256  & Rank-256 &  16.14 \\ 
APOLLO-Mini${^\star}$     &  &   &    &  & Rank-1  & Rank-1 &  14.53 \\ 
\midrule
\ourname{} &  & \checkmark & & & Last Layer &  & 13.74 \\
\bottomrule
\end{tabular}}
\begin{center}
    {\footnotesize $^{\star}$: SWAN, APOLLO and APOLLO-Mini apply Adam for the first and last layer.}
\end{center}
\caption{A summary of building components of related methods. ``Sign'', ``Col-wise'', ``Row-wise'' and ``Singular-val'' correspond to four normalizations in \eqref{eq:lmo_different_oracle}, respectively. ``1st order EMA'' and ``2nd order EMA'' stand for first and second order EMA/momentum. The last column records the memory (GB) of weights and optimizer states required for applying the corresponding methods for LLaMA 7B training (see Appendix \ref{appendix:memory_estimation} for the details of the estimation). Note that for the special case of vector parameters (such as LayerNorm weights) all memory-efficient methods employ the Adam optimizer with negligible impact on memory.
}\label{tab:summarize_methods}
\vspace{-0.2cm}
\end{table*}

\section{Experiments}\label{sec:experiments}
In this section, we test the \ourname{} algorithm for LLM pretraining. We test on pretraining LLaMA (60M, 130M, 350M, 1B, 7B) models on the C4 (Colossal Clean Crawled Corpus) dataset \citep{raffel2020exploring} and report evaluation perplexity as our metric. For all ours experiments, we follow the hyperparameter settings of \citet{zhao2024galore}; see Appendix \ref{appendix:experiment_details}  for more details. We include pretraining experiments on different architectures (GPT2-Medium, Qwen2-500M and Gemma-2B) to show the generality of the method in Appendix \ref{appendix:new_models}.

\textbf{Baselines}. We compare with Adam and its stabilized version, noted as Adam (Stable-SPAM), which performs momentum resets and clips spiked gradients  \citep{huang2025stablespam}. Among memory efficient optimizers, we compare with  Muon, GaLore, Fira, APOLLO(-Mini) and SWAN (see Section \ref{sec:related_works} for a more detailed introduction of these works). GaLore, Fira and APOLLO(-Mini)  compress the Adam states by projecting the gradients to low rank. SWAN is an optimizer that uses both singular-value and row-wise normalization, except for the first and last layers\footnote{We copy SWAN's result from \cite{ma2025swan} since we cannot replicate them due to code unavailability.}.

It is worth noticing that GaLore, Fira, APOLLO(-Mini) and SWAN run Adam for the first and last layers for stable training. For the 60M model the first and last layers contain over 50\% of the total network parameters, and around 40\% for 130M model. For the 350M this goes down to about 20\% and for the 1B to about 10\%. Therefore, for smaller models these methods have limited memory savings as compared to Adam, which is used to optimize a significant percentage of the network parameters.

\textbf{Results for 60M--1B models}. We report the results of the evaluation perplexity in Table \ref{tab:main_result}. We can see that, despite its simplicity, \ourname{} outperforms existing memory efficient optimizers, also (nearly) matches the performance of Adam (Stable-SPAM) and Muon, especially for larger models, with only 35–65\% of the memory. We also contrast the performance with the memory consumed in Figure \ref{fig:pareto_plot}, where we can see that the proposed method is at the Pareto frontier for optimal memory usage while maintaining the (near) state-of-the-art performance.
This makes it a strong candidate for large-scale pretraining under memory constraints\footnote{Notice that we construct our optimizer step by step via a minimalist design, and we already conduct the ablations throughout our journey (see Table \ref{tab:normalization_methods_results} and \ref{tab:comparison_norms_llm}, also Figure \ref{fig:variance}). 
We do not conduct further ablation studies in this section, but we include experiments on throughput analysis,  additional architectures, overtraining experiment, finetuning experiments and learning rate sensitivity in the Appendix sections \ref{appendix:1b_time_comparison}, \ref{appendix:new_models}, \ref{appendix:overtraining}, \ref{appendix:finetune}  and  \ref{appendix:lr_sensitivity}, respectively. 
}.

\begin{table}[htb]
\centering
\setlength{\tabcolsep}{3pt}
\resizebox{\linewidth}{!}{%
\begin{tabular}{l|c|c|c|c}
\toprule
  Model Size & \textbf{60M} & \textbf{130M}  & \textbf{350M} & \textbf{1B} \\
  \midrule
 Tokens & 1.4B & 2.6B  & 7.8B & 20B  \\
\midrule
Adam$^\dagger$            & 30.05 (0.35G)          & 23.13  (0.81G) & 18.77 (2.21G)    & 15.79          (8.04G)  \\ 
Adam (Stable-SPAM)$^\dagger$     & 28.77 (0.35G)          & 22.20  (0.81G) & 16.80 (2.21G)    & 13.30          (8.04G)  \\ 
\midrule
Muon                      & \textbf{28.86} (0.23G) & \textbf{22.20}  (0.54G) & 16.70 (1.47G)    & 13.67          (5.36G) \\ 
GaLore$^\dagger$          & 34.58          (0.28G) & 25.31  (0.61G) & 19.37 (1.59G)    & 15.05          (4.76G) \\
Fira$^\dagger$            & 30.34          (0.28G) & 22.96  (0.61G) & 16.82 (1.59G)    & 14.36          (4.76G)  \\
SWAN$^*$                  & 30.00          (0.25G) & 22.83  (0.46G) & 17.14 (1.00G)    & -              \\
APOLLO                    & 30.94          (0.28G) & 22.93  (0.61G) & 16.75 (1.59G)    & 14.28          (4.76G)  \\
APOLLO-Mini               & 31.85          (0.25G) & 23.63  (0.46G) & 17.11 (1.00G)    & \textbf{13.48} (3.20G)  \\
\midrule
\ourname{}  (ours)        & 30.81 (\textbf{0.15G}) & 22.57 (\textbf{0.32G}) & \textbf{16.32} (\textbf{0.80G}) &   \textbf{13.49}  (\textbf{2.81G})  \\
\bottomrule
\end{tabular}}
\vspace{5pt}
\caption{Experiment results for pretraining LLaMA models on C4 dataset. The result marked $^*$ is from \citet{ma2025swan}. The results marked $^\dagger$ for model sizes 60M-350M are from \citet{glentis2025scalable}. All models are trained up-to the Chinchilla optimal number of tokens \citep{hoffmann2022training}. For Fira and APOLLO 1B runs we encountered training divergence with their default learning rates, results reported are with reduced learning rate. Among the memory-efficient optimizers we highlight the best-performing for each model size in terms of perplexity and memory. Note that SWAN does not provide a 1B model result with Chinchilla optimal tokens, but only for 13B tokens. In that setting we also achieve a superior perplexity of 14.25 (2.81G memory), compared to 14.42 (3.20G memory) of SWAN. 
}
\label{tab:main_result} 
\vspace{-0.4cm}
\end{table}

\begin{table}[tp]
\vspace{5pt}
\centering
\setlength{\tabcolsep}{3pt}
\resizebox{0.42\textwidth}{!}{%
\begin{tabular}{l|c|c|c|c}
\toprule
  Steps  & 40K & 80K & 120K & 150K (final) \\
  \midrule
 Tokens  & 5.2B & 10.5B & 15.7B & 19.7B  \\
\midrule
 APOLLO$^\dagger$      ~~~~~~~~ (16.14G)        & 17.55  & 14.39  & 13.23  & 13.02   \\
 APOLLO-Mini$^\dagger$     (14.53G)           & 18.03  & 14.60  & 13.32  & 13.09   \\
  Muon ~~~~~~~~~~~~~~~~ (26.95G) &  16.43 & 13.95 & 12.85 & 12.72 \\
\midrule
\ourname{}  (ours)     ~~~ (\textbf{13.74G})  & 17.99 & 14.57 & {12.86} &  \textbf{12.59}      \\  
\bottomrule
\end{tabular}
}
\vspace{5pt}
\caption{Results for pretraining 7B LLaMA model on C4 dataset. $^\dagger$From \citet{zhu2025apollo}.
}\label{tab:7b}
\vspace{-20pt}
\end{table}

 \vspace{20pt}

\textbf{Results for 7B LLaMA model}. Due to limited computational resources, we run a single experiment for \ourname{} and Muon on 8$\times$NVIDIA H200 141G GPUs. To compare with the reported results from APOLLO(-Mini) \citep{zhu2025apollo}, we train for a total of 19.7B tokens, corresponding to 150K steps. We report the final evaluation perplexity in Table \ref{tab:7b} as well as perplexity at intermediate training steps. From the table, we conclude that \ourname{} outperforms Muon and APOLLO(-Mini) in terms of both final evaluation perplexity and memory. In addition, we verified the stability of \ourname{}  during an extended 100B token pretraining run of the 7B model; see Appendix \ref{appendix:7b_100bt} for details.

\section{Conclusion}
In this paper, we design a memory-efficient optimizer using a minimalist approach. The proposed algorithm utilizes the building blocks that lead to the success of Adam but further refines them to make it more memory efficient. We motivate each of our construction steps by theoretical or experimental evidences. The resulting algorithm, \ourname{}, achieves superior performance to existing memory-efficient optimizers for LLM pretraining and matches Adam while only requiring 35-45\% of memory. This makes the proposed algorithm a strong candidate for large-scale pretraining under memory constraints, as well as a minimalist baseline for benchmarking more sophisticated optimizers.

One limitation of our current study is that, due to computational constraints, we evaluate our optimizer on models up to 7B parameters and up to 100B training tokens. While such model sizes and token budgets are on par with comparisons in related literature in memory-efficient pretraining, an evaluation to even larger models and more tokens will be helpful to further show the benefits of the proposed optimizer.

\section*{Acknowledgements}

M. Hong and A. Glentis are partly supported by NSF Awards ECCS-2426064 and EPCN-2311007.

\section*{Impact Statement}
This paper presents work whose goal is to advance the field of Machine
Learning. In particular, by making LLM pretraining more memory- and compute-efficient, the barrier to training larger models can be lowered. This has beneficial consequences including democratizing LLM development, but also includes potential risks from the wider access to powerful LLMs.

\bibliography{reference}

@article{kudo2018sentencepiece,
  title={SentencePiece: A simple and language independent subword tokenizer and detokenizer for neural text processing},
  author={Kudo, Taku and Richardson, John},
  journal={arXiv preprint arXiv:1808.06226},
  year={2018}
}

@article{liu2019roberta,
  title={Roberta: A robustly optimized bert pretraining approach},
  author={Liu, Yinhan and Ott, Myle and Goyal, Naman and Du, Jingfei and Joshi, Mandar and Chen, Danqi and Levy, Omer and Lewis, Mike and Zettlemoyer, Luke and Stoyanov, Veselin},
  journal={arXiv preprint arXiv:1907.11692},
  year={2019}
}

@misc{yang2024qwen2technicalreport,
      title={Qwen2 Technical Report}, 
      author={An Yang and Baosong Yang and Binyuan Hui and Bo Zheng and Bowen Yu and Chang Zhou and Chengpeng Li and Chengyuan Li and Dayiheng Liu and Fei Huang and Guanting Dong and Haoran Wei and Huan Lin and Jialong Tang and Jialin Wang and Jian Yang and Jianhong Tu and Jianwei Zhang and Jianxin Ma and Jianxin Yang and Jin Xu and Jingren Zhou and Jinze Bai and Jinzheng He and Junyang Lin and Kai Dang and Keming Lu and Keqin Chen and Kexin Yang and Mei Li and Mingfeng Xue and Na Ni and Pei Zhang and Peng Wang and Ru Peng and Rui Men and Ruize Gao and Runji Lin and Shijie Wang and Shuai Bai and Sinan Tan and Tianhang Zhu and Tianhao Li and Tianyu Liu and Wenbin Ge and Xiaodong Deng and Xiaohuan Zhou and Xingzhang Ren and Xinyu Zhang and Xipin Wei and Xuancheng Ren and Xuejing Liu and Yang Fan and Yang Yao and Yichang Zhang and Yu Wan and Yunfei Chu and Yuqiong Liu and Zeyu Cui and Zhenru Zhang and Zhifang Guo and Zhihao Fan},
      year={2024},
      eprint={2407.10671},
      archivePrefix={arXiv},
      primaryClass={cs.CL},
      url={https://arxiv.org/abs/2407.10671}, 
}

@article{team2024gemma,
  title={Gemma: Open models based on gemini research and technology},
  author={Team, Gemma and Mesnard, Thomas and Hardin, Cassidy and Dadashi, Robert and Bhupatiraju, Surya and Pathak, Shreya and Sifre, Laurent and Rivi{\`e}re, Morgane and Kale, Mihir Sanjay and Love, Juliette and others},
  journal={arXiv preprint arXiv:2403.08295},
  year={2024}
}

@article{radford2019language,
  title={Language models are unsupervised multitask learners},
  author={Radford, Alec and Wu, Jeffrey and Child, Rewon and Luan, David and Amodei, Dario and Sutskever, Ilya and others},
  journal={OpenAI blog},
  volume={1},
  number={8},
  pages={9},
  year={2019}
}

@inproceedings{wang2018glue,
  title={GLUE: A multi-task benchmark and analysis platform for natural language understanding},
  author={Wang, Alex and Singh, Amanpreet and Michael, Julian and Hill, Felix and Levy, Omer and Bowman, Samuel},
  booktitle={Proceedings of the 2018 EMNLP workshop BlackboxNLP: Analyzing and interpreting neural networks for NLP},
  pages={353--355},
  year={2018}
}

@inproceedings{hoffmann2022training,
  title={Training compute-optimal large language models},
  author={Hoffmann, Jordan and Borgeaud, Sebastian and Mensch, Arthur and Buchatskaya, Elena and Cai, Trevor and Rutherford, Eliza and de Las Casas, Diego and Hendricks, Lisa Anne and Welbl, Johannes and Clark, Aidan and others},
  booktitle={Proceedings of the 36th International Conference on Neural Information Processing Systems},
  pages={30016--30030},
  year={2022}
}

@article{touvron2023llama,
  title={Llama 2: Open foundation and fine-tuned chat models},
  author={Touvron, Hugo and Martin, Louis and Stone, Kevin and Albert, Peter and Almahairi, Amjad and Babaei, Yasmine and Bashlykov, Nikolay and Batra, Soumya and Bhargava, Prajjwal and Bhosale, Shruti and others},
  journal={arXiv preprint arXiv:2307.09288},
  year={2023}
}

@article{luo2024badam,
  title={BAdam: A Memory Efficient Full Parameter Training Method for Large Language Models},
  author={Luo, Qijun and Yu, Hengxu and Li, Xiao},
  journal={arXiv preprint arXiv:2404.02827},
  year={2024}
}

@inproceedings{
zhao2024galore,
title={GaLore: Memory-Efficient {LLM} Training by Gradient Low-Rank Projection},
author={Jiawei Zhao and Zhenyu Zhang and Beidi Chen and Zhangyang Wang and Anima Anandkumar and Yuandong Tian},
booktitle={Forty-first International Conference on Machine Learning},
year={2024}
}

@article{ramesh2024blockllm,
  title={BlockLLM: Memory-Efficient Adaptation of LLMs by Selecting and Optimizing the Right Coordinate Blocks},
  author={Ramesh, Amrutha Varshini and Ganapathiraman, Vignesh and Laradji, Issam H and Schmidt, Mark},
  journal={arXiv preprint arXiv:2406.17296},
  year={2024}
}

@article{chen2024fira,
  title={Fira: Can We Achieve Full-rank Training of LLMs Under Low-rank Constraint?},
  author={Chen, Xi and Feng, Kaituo and Li, Changsheng and Lai, Xunhao and Yue, Xiangyu and Yuan, Ye and Wang, Guoren},
  journal={arXiv preprint arXiv:2410.01623},
  year={2024}
}

@inproceedings{
han2024sltrain,
title={{SLT}rain: a sparse plus low rank approach for parameter and memory efficient pretraining},
author={Andi Han and Jiaxiang Li and Wei Huang and Mingyi Hong and Akiko Takeda and Pratik Jawanpuria and Bamdev Mishra},
booktitle={The Thirty-eighth Annual Conference on Neural Information Processing Systems},
year={2024},
url={https://openreview.net/forum?id=MXze4H7opg}
}

@inproceedings{muhamed2024grass,
  title={{GRASS}: Compute Efficient Low-Memory LLM Training with Structured Sparse Gradients},
  author={Muhamed, Aashiq and Li, Oscar and Woodruff, David and Diab, Mona and Smith, Virginia},
  booktitle={Proceedings of the 2024 Conference on Empirical Methods in Natural Language Processing},
  pages={14978--15003},
  year={2024}
}

@article{zhang2022adamcanconverge,
  title={Adam can converge without any modification on update rules},
  author={Zhang, Yushun and Chen, Congliang and Shi, Naichen and Sun, Ruoyu and Luo, Zhi-Quan},
  journal={Advances in neural information processing systems},
  volume={35},
  pages={28386--28399},
  year={2022}
}

@inproceedings{reddi2019convergence,
  title={On the convergence of adam and beyond},
  author={Reddi, Sashank J and Kale, Satyen and Kumar, Sanjiv},
  booktitle={International Conference on Learning Representations (ICLR)},
  year={2019}
}

@inproceedings{shazeer2018adafactor,
  title={Adafactor: Adaptive learning rates with sublinear memory cost},
  author={Shazeer, Noam and Stern, Mitchell},
  booktitle={International Conference on Machine Learning},
  pages={4596--4604},
  year={2018},
  organization={PMLR}
}

@article{kunstner2024heavy,
  title={Heavy-tailed class imbalance and why adam outperforms gradient descent on language models},
  author={Kunstner, Frederik and Milligan, Alan and Yadav, Robin and Schmidt, Mark and Bietti, Alberto},
  journal={Advances in Neural Information Processing Systems},
  volume={37},
  pages={30106--30148},
  year={2024}
}

@inproceedings{yushun2022doesadamconverge,
  author = {Zhang, Yushun and Chen, Congliang and Luo, Zhi-Quan},
  title = {Does Adam Converge and When?},
  booktitle = {ICLR Blog Track},
  year = {2022},
  note = {https://iclr-blog-track.github.io/2022/03/25/does-adam/},
  url  = {https://iclr-blog-track.github.io/2022/03/25/does-adam/}
}

@article{zhang2024adammini,
  title={Adam-mini: Use fewer learning rates to gain more},
  author={Zhang, Yushun and Chen, Congliang and Li, Ziniu and Ding, Tian and Wu, Chenwei and Ye, Yinyu and Luo, Zhi-Quan and Sun, Ruoyu},
  journal={arXiv preprint arXiv:2406.16793},
  year={2024}
}

@article{pan2024lisa,
  title={LISA: layerwise importance sampling for memory-efficient large language model fine-tuning},
  author={Pan, Rui and Liu, Xiang and Diao, Shizhe and Pi, Renjie and Zhang, Jipeng and Han, Chi and Zhang, Tong},
  journal={Advances in Neural Information Processing Systems},
  volume={37},
  pages={57018--57049},
  year={2024}
}

@article{liao2024galore,
  title={GaLore $+ $: Boosting Low-Rank Adaptation for LLMs with Cross-Head Projection},
  author={Liao, Xutao and Li, Shaohui and Xu, Yuhui and Li, Zhi and Liu, Yu and He, You},
  journal={arXiv preprint arXiv:2412.19820},
  year={2024}
}

@article{huang2025stablespam,
  title={Stable-SPAM: How to Train in 4-Bit More Stably than 16-Bit Adam},
  author={Huang, Tianjin and Hu, Haotian and Zhang, Zhenyu and Jin, Gaojie and Li, Xiang and Shen, Li and Chen, Tianlong and Liu, Lu and Wen, Qingsong and Wang, Zhangyang and others},
  journal={arXiv preprint arXiv:2502.17055},
  year={2025}
}

@article{raffel2020exploring,
  title={Exploring the limits of transfer learning with a unified text-to-text transformer},
  author={Raffel, Colin and Shazeer, Noam and Roberts, Adam and Lee, Katherine and Narang, Sharan and Matena, Michael and Zhou, Yanqi and Li, Wei and Liu, Peter J},
  journal={Journal of Machine Learning Research},
  volume={21},
  number={140},
  pages={1--67},
  year={2020}
}

@article{pethick2025scion,
  title={Training Deep Learning Models with Norm-Constrained LMOs},
  author={Pethick, Thomas and Xie, Wanyun and Antonakopoulos, Kimon and Zhu, Zhenyu and Silveti-Falls, Antonio and Cevher, Volkan},
  journal={arXiv preprint arXiv:2502.07529},
  year={2025}
}

@article{bernstein2024old,
  title={Old optimizer, new norm: An anthology},
  author={Bernstein, Jeremy and Newhouse, Laker},
  journal={arXiv preprint arXiv:2409.20325},
  year={2024}
}

@article{liu2020improved,
  title={An improved analysis of stochastic gradient descent with momentum},
  author={Liu, Yanli and Gao, Yuan and Yin, Wotao},
  journal={Advances in Neural Information Processing Systems},
  volume={33},
  pages={18261--18271},
  year={2020}
}

@article{murray2019revisiting,
  title={Revisiting normalized gradient descent: Fast evasion of saddle points},
  author={Murray, Ryan and Swenson, Brian and Kar, Soummya},
  journal={IEEE Transactions on Automatic Control},
  volume={64},
  number={11},
  pages={4818--4824},
  year={2019},
  publisher={IEEE}
}

@article{levy2016power,
  title={The power of normalization: Faster evasion of saddle points},
  author={Levy, Kfir Y},
  journal={arXiv preprint arXiv:1611.04831},
  year={2016}
}

@inproceedings{
Zhang2020Why,
title={Why Gradient Clipping Accelerates Training: A Theoretical Justification for Adaptivity},
author={Jingzhao Zhang and Tianxing He and Suvrit Sra and Ali Jadbabaie},
booktitle={International Conference on Learning Representations},
year={2020},
url={https://openreview.net/forum?id=BJgnXpVYwS}
}

@inproceedings{
kunstner2023noise,
title={Noise Is Not the Main Factor Behind the Gap Between Sgd and Adam on Transformers, But Sign Descent Might Be},
author={Frederik Kunstner and Jacques Chen and Jonathan Wilder Lavington and Mark Schmidt},
booktitle={The Eleventh International Conference on Learning Representations },
year={2023},
url={https://openreview.net/forum?id=a65YK0cqH8g}
}

@inproceedings{
ma2025swan,
title={{SWAN}: {SGD} with Normalization and Whitening Enables Stateless {LLM} Training},
author={Chao Ma and Wenbo Gong and Meyer Scetbon and Edward Meeds},
booktitle={Forty-second International Conference on Machine Learning},
year={2025},
}

@article{scetbon2025gradient,
  title={Gradient Multi-Normalization for Stateless and Scalable LLM Training},
  author={Scetbon, Meyer and Ma, Chao and Gong, Wenbo and Meeds, Edward},
  journal={arXiv preprint arXiv:2502.06742},
  year={2025}
}

@article{sun2025revisiting,
  title={Revisiting Gradient Normalization and Clipping for Nonconvex SGD under Heavy-Tailed Noise: Necessity, Sufficiency, and Acceleration},
  author={Sun, Tao and Liu, Xinwang and Yuan, Kun},
  journal={Journal of Machine Learning Research},
  volume={26},
  pages={1--42},
  year={2025}
}

@article{hinton2012neural,
  title={Neural networks for machine learning lecture 6a overview of mini-batch gradient descent},
  author={Hinton, Geoffrey and Srivastava, Nitish and Swersky, Kevin},
  journal={Cited on},
  volume={14},
  number={8},
  pages={2},
  year={2012}
}

@inproceedings{kingma2015adam,
  title={{A}dam: {A} {M}ethod for {S}tochastic {O}ptimization},
  author={Kingma, Diederik P. and Ba, Jimmy},
  booktitle={3rd International Conference on Learning Representations (ICLR)},
  year={2015},
}

@article{xu2024no,
  title={No More Adam: Learning Rate Scaling at Initialization is All You Need},
  author={Xu, Minghao and Xiang, Lichuan and Cai, Xu and Wen, Hongkai},
  journal={arXiv preprint arXiv:2412.11768},
  year={2024}
}

@article{riabinin2025gluon,
  title={Gluon: Making muon \& scion great again!(bridging theory and practice of lmo-based optimizers for llms)},
  author={Riabinin, Artem and Shulgin, Egor and Gruntkowska, Kaja and Richt{\'a}rik, Peter},
  journal={arXiv preprint arXiv:2505.13416},
  year={2025}
}

@inproceedings{zhao2025deconstructing,
  title={Deconstructing What Makes a Good Optimizer for Autoregressive Language Models},
  author={Zhao, Rosie and Morwani, Depen and Brandfonbrener, David and Vyas, Nikhil and Kakade, Sham M},
  booktitle={The Thirteenth International Conference on Learning Representations},
  year={2025}
}

@article{zhang2020adaptive,
  title={Why are adaptive methods good for attention models?},
  author={Zhang, Jingzhao and Karimireddy, Sai Praneeth and Veit, Andreas and Kim, Seungyeon and Reddi, Sashank and Kumar, Sanjiv and Sra, Suvrit},
  journal={Advances in Neural Information Processing Systems},
  volume={33},
  pages={15383--15393},
  year={2020}
}

@misc{jordan2024muon,
  author       = {Keller Jordan and Yuchen Jin and Vlado Boza and Jiacheng You and
                  Franz Cesista and Laker Newhouse and Jeremy Bernstein},
  title        = {Muon: An optimizer for hidden layers in neural networks},
  year         = {2024},
  url          = {https://kellerjordan.github.io/posts/muon/}
}

@inproceedings{
zhu2025apollo,
title={{APOLLO}: {SGD}-like Memory, AdamW-level Performance},
author={Hanqing Zhu and Zhenyu Zhang and Wenyan Cong and Xi Liu and Sem Park and Vikas Chandra and Bo Long and David Z. Pan and Zhangyang Wang and Jinwon Lee},
booktitle={Eighth Conference on Machine Learning and Systems},
year={2025},
url={https://openreview.net/forum?id=mJrPkdcZDj}
}

@inproceedings{cutkosky2020momentum,
  title={Momentum improves normalized sgd},
  author={Cutkosky, Ashok and Mehta, Harsh},
  booktitle={International conference on machine learning},
  pages={2260--2268},
  year={2020},
  organization={PMLR}
}

@article{kalra2025can,
  title={When Can You Get Away with Low Memory Adam?},
  author={Kalra, Dayal Singh and Kirchenbauer, John and Barkeshli, Maissam and Goldstein, Tom},
  journal={arXiv preprint arXiv:2503.01843},
  year={2025}
}

@article{liu2025muon,
  title={Muon is scalable for llm training},
  author={Liu, Jingyuan and Su, Jianlin and Yao, Xingcheng and Jiang, Zhejun and Lai, Guokun and Du, Yulun and Qin, Yidao and Xu, Weixin and Lu, Enzhe and Yan, Junjie and others},
  journal={arXiv preprint arXiv:2502.16982},
  year={2025}
}

@article{glentis2025scalable,
  title={Scalable Parameter and Memory Efficient Pretraining for LLM: Recent Algorithmic Advances and Benchmarking},
  author={Glentis, Athanasios and Li, Jiaxiang and Shang, Qiulin and Han, Andi and Tsaknakis, Ioannis and Wei, Quan and Hong, Mingyi},
  journal={arXiv preprint arXiv:2505.22922},
  year={2025}
}

@article{bernstein2024modular,
  title={Modular duality in deep learning},
  author={Bernstein, Jeremy and Newhouse, Laker},
  journal={arXiv preprint arXiv:2410.21265},
  year={2024}
}

@article{luo2023came,
  title={Came: Confidence-guided adaptive memory efficient optimization},
  author={Luo, Yang and Ren, Xiaozhe and Zheng, Zangwei and Jiang, Zhuo and Jiang, Xin and You, Yang},
  journal={arXiv preprint arXiv:2307.02047},
  year={2023}
}

@article{anil2019memory,
  title={Memory efficient adaptive optimization},
  author={Anil, Rohan and Gupta, Vineet and Koren, Tomer and Singer, Yoram},
  journal={Advances in Neural Information Processing Systems},
  volume={32},
  year={2019}
}

@inproceedings{
bernstein2025modular,
title={Modular Duality in Deep Learning},
author={Jeremy Bernstein and Laker Newhouse},
booktitle={Forty-second International Conference on Machine Learning},
year={2025},
url={https://openreview.net/forum?id=hErdffTsLu}
}
\bibliographystyle{icml2026}

\newpage
\appendix
\onecolumn

\section{LLM Acknowledgment}

We acknowledge the use of LLMs for grammar checking and sentence polishing.

\section{Details of memory estimation for 1B and 7B models}\label{appendix:memory_estimation}

Here we compute the memory estimate for both 1B and 7B LLaMA models. We only compute the major parameters, including embedding layers, attention and MLP layers. We follow prior works \citep{zhao2024galore,han2024sltrain} in estimating the memory using \texttt{bfloat16} format, where each floating point number occupies 2 bytes.

\textbf{7B model}: Pre-last layers include 6.607B parameters and last layer includes 0.131B parameters, which in total leads to 6.738B parameters. 
\begin{itemize}[leftmargin=0.2in]
    \item \textbf{SGD}: Only the parameter states are stored, which amount to 13.476G memory.

    \item \textbf{Adam}: Apart from the parameter states, Adam/AdamW store first and second order momentum, which costs 26.952G. In total, Adam/AdamW requires 40.428G memory.
    
    \item \textbf{APOLLO}: Apart from the parameter states, APOLLO stores first-order and second-order momentum in the low-rank subspace of 256, which in total costs 16.144G.

    \item \textbf{APOLLO-Mini}: Apart from the parameter states, APOLLO-Mini sets rank to be 1, which leads to a total memory of 14.531G.

    \item \textbf{Muon}: Apart from the parameter states, Muon stores first-order momentum, which costs 13.476G. In total, Muon requires 26.952G memory.

    \item \textbf{SWAN}: Apart from the parameter states, SWAN additionally stores first-order and second-order momentum of the first and last layer, which costs 1.048G. In total, SWAN requires 14.524G.

    \item \textbf{\ourname{}} (Our method): Apart from parameter states, \ourname{} additionally stores first-order momentum of last-layer weight, which costs 0.262G. In total, \ourname{} requires 13.738G memory. 
\end{itemize}

\textbf{1B model}: Pre-last layers include 1.273B parameters and last layer includes 0.066B parameters, which in total leads to 1.339B parameters. 
\begin{itemize}[leftmargin=0.2in]
    \item \textbf{SGD}: Only the parameter states are stored, which amount to 2.678G memory.

    \item \textbf{Adam}: Apart from the parameter states, Adam/AdamW store first and second order momentum, which costs 5.356G. In total, Adam/AdamW requires 8.034G memory. 

    \item \textbf{Muon}: Apart from the parameter states, Muon stores first-order momentum, which costs 2.678G. In total, Muon requires 5.356G memory.

    \item \textbf{SWAN}: Apart from the parameter states, SWAN additionally stores first-order and second-order momentum of the first and last layer, which costs 0.524G. In total, SWAN requires 3.202G.

    \item \textbf{\ourname{}} (Our method): Apart from parameter states, \ourname{} additionally stores first-order momentum of last-layer weight, which costs 0.131G. In total, \ourname{} requires 2.809G memory.
    
\end{itemize}

\section{Details of the experiments}\label{appendix:experiment_details}

 For all LLaMA experiments, we follow \cite{zhao2024galore} and set the sequence length to 256 and the batch size to 512, train using BF16 format and report evaluation perplexity as our metric. We also use a cosine learning rate with linear warm-up for the first 10\% of the iterations. For low-rank optimizers (GaLore, Fira and APOLLO) we follow their suggested hyperparameters (including the rank) but tune the learning rates. For Muon we follow the implementation from \cite{liu2025muon}. 

 We performed wandb sweeps for all methods we tested up to models of size 350M, searching learning rates within  \{0.00005, 0.0001, 0.0003, 0.0005, 0.001, 0.003, 0.005, 0.01\}.  For the 1B model, due to resource constraints we manually tune the learning rates using as starting point the optimal learning rate from the 350M sweep. For \ourname{}, we use learning rate 1e-3 for models sizes 60M, 130M and 350M, 2e-4 for the 1B and 1e-4 for the 7B model. In addition, our reported result for the 1B model uses the same learning rate scaling technique used by Muon \citep{liu2025muon}. Also, we set the last layer's momentum parameter $\beta=0.9$, being a common choice for first-order momentum.  In addition, for all vector parameters we employ the Adam optimizer, following \cite{jordan2024muon,liao2024galore}. This does not influence the memory usage because the vector parameters are orders of magnitude smaller in size compared to the matrix parameters.

\section{Training Throughput Comparison of Different Methods}\label{appendix:1b_time_comparison}

We conduct a throughput (tokens/sec) comparison of the different optimizers for training LLaMA 1B on 4xH100 GPUs using the same settings as specified in \ref{appendix:experiment_details}. Table \ref{tab:1b_time_comparison} shows that our method is about $18.5\%$ faster than singular-value (NS) based normalization methods (Muon/SWAN) and about $8\%$ faster than GaLore/Fira which project the Adam states to low-rank using SVD. Moreover, it achieves similar throughput to Adam/Stable-SPAM and APOLLO(-Mini).

\begin{table}[H]
\centering
\setlength{\tabcolsep}{3pt}
\resizebox{0.6\textwidth}{!}{%
\begin{tabular}{l|c}
\toprule
  Method  & Throughput (tokens/sec) \\
  \midrule
    Adam                             & 45019 \\
    Adam (Stable-SPAM)               & 44960 \\
    \midrule
    Singular-value-based (Muon/SWAN$^\star$) & 37748 \\
    GaLore                           & 41267 \\
    Fira                             & 41285 \\
    APOLLO                           & 44193 \\
    APOLLO-Mini                      & 44567 \\
    \midrule
    \ourname{}                       & 44728 \\
\bottomrule
\end{tabular}
}
\vspace{5pt}
\caption{ Throughput comparison of different methods for training LLaMA 1B on 4xH100 GPUs. $^\star:$ Due to code unavailability we cannot directly test the throughput of SWAN, instead we report the throughput using the Newton-Schulz (NS) approximation of Singular-value normalization (see \cite{jordan2024muon} for details).
}\label{tab:1b_time_comparison}
\end{table}

\section{Ablation on Adding Momentum to the First Layer}\label{appendix:mom_first}

We ablate using momentum both to the first and the last layer (SGD column-wise mmt-(first+last)) and include the results in the Table \ref{tab:mom_first} bellow:

\begin{table}[h]
\centering
\begin{tabular}{lccc}
\toprule
Method / LLaMA Model & 60M & 130M & 350M \\
\hline
SGD column-wise (no momentum) & 39.89 (0.12G) & 28.85 (0.27G) & 20.38 (0.74G) \\
SGD column-wise mmt-last (SCALE) & 30.81 (0.15G) & 22.57 (0.32G) & 16.32 (0.80G) \\
SGD column-wise mmt-(first+last) & 30.35 (0.18G) & 22.26 (0.37G) & 16.14 (0.87G) \\
\bottomrule
\end{tabular}
\vspace{5pt}
\caption{Experiments exploring the benefit of adding momentum to the embedding layer. The memory is included in parenthesis. The additional momentum for the first layer does not result to significant gains in model performance, which further validates our optimizer design choices.}
\label{tab:mom_first}
\end{table}

{

\section{Pretraining Results on Additional LLM Architectures}\label{appendix:new_models}

For our main experiments we choose the LLaMA family of models as it is commonly used to benchmark memory efficient optimizers for LLM pretraining (such as in GaLore \citep{zhao2024galore}, Fira \citep{chen2024fira}, SWAN \citep{ma2025swan} and APOLLO \citep{zhu2025apollo}), which is the scope of our paper. However, to provide further evidence of the generality of our algorithm among LLM architectures, we conducted additional pretraining experiments with Qwen2-500M \citep{yang2024qwen2technicalreport}, GPT2-Medium (355M) \citep{radford2019language} and Gemma-2B \citep{team2024gemma}. We  followed the same experimental settings as in our LLaMA experiments, described in \ref{appendix:experiment_details}. We provide the results bellow:

\begin{table}[H]
\centering
\small
\setlength{\tabcolsep}{4pt}
\begin{tabular}{l|c|c}
\toprule
Model      &  Qwen2-500M      & GPT2-M (355M)  \\
\midrule
Tokens               &  10B            &  7.8B    \\
\midrule
Adam                 &  17.61 (2.96G)  &   20.73   (2.13G)     \\ 
Adam  (Stable-SPAM)  &  15.91 (2.96G)  &  18.90    (2.13G)        \\  
\midrule
Muon                 & 16.03 (1.98G)   &  19.61    (1.42G)        \\  
GaLore               & 18.22 (1.94G)   &  23.66    (1.22G)       \\
Fira                 & 15.94 (1.94G)   &  19.41    (1.22G)       \\
APOLLO               & 16.04 (1.94G)   &  19.30    (1.22G)        \\
APOLLO-Mini          & 16.17 (1.53G)    &  19.99   (0.92G)         \\
\midrule
\ourname             &  \textbf{15.57} (\textbf{1.26G})  &  \textbf{19.00} (\textbf{0.81G})    \\ 
\bottomrule
\end{tabular}
\vspace{0.2cm}
\caption{Experiment results from pretraining Qwen and GPT2 models on the C4 dataset.}
\label{tab:qwen_gpt}
\vspace{-0.3cm}
\end{table}

From Table \ref{tab:qwen_gpt} we observe that our method still achieves Adam-like performance while using significantly less memory and continues to outperform SOTA memory-efficient methods. We believe that our simple, yet systematic, optimizer design makes our method more robust and less likely to overfit to specific LLM architectures.
 
\begin{table}[H]
\centering
\small
\setlength{\tabcolsep}{4pt}
\begin{tabular}{l|c}
\toprule
Model      &   Gemma-2B \\
\midrule
Tokens               &  32B             \\
\midrule
APOLLO               & 12.05 (9.09G)   \\
\midrule
\ourname             &  \textbf{11.96} (\textbf{6.06G})    \\ 
\bottomrule
\end{tabular}
\vspace{0.2cm}
\caption{Experiment results from pretraining Gemma-2B on the C4 dataset. Due to resource and time constrains we limit our comparison to APOLLO, being the strongest baseline.  }
\label{tab:gemma2b}
\vspace{-0.3cm}
\end{table}

From table \ref{tab:gemma2b} we can see that our method achieves a lower perplexity with an even smaller memory footprint than APOLLO, which further showcases its SOTA performance as a memory-efficient pretraining algorithm.

\section{Results from Training LLaMA 7B up to 100B Tokens}\label{appendix:7b_100bt}

\begin{figure}[H]
  \centering
  \includegraphics[width=1\textwidth]{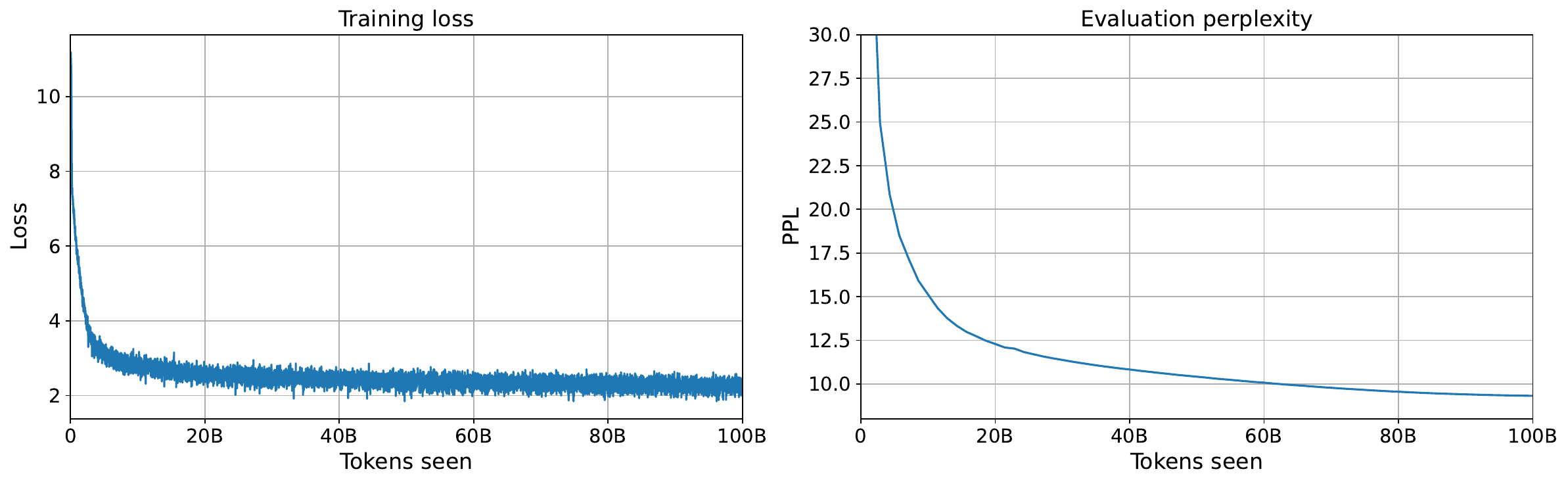}
  \caption{
   Training 7B LLaMA on C4 up-to 100B tokens using the SCALE optimizer. The final evaluation perplexity achieved is 9.32.
  }
  \label{fig:7b_100bt}
\end{figure}

To further validate the stability of SCALE in (near) industry-level training scope we pre-trained the LLaMA 7B model on C4 up-to 100B tokens (near the Chinchilla optimal training budget). We use the same settings as in our 7B experiment in Table \ref{tab:7b}, with the differences being scaling to two 8xH200 nodes with distributed data parallel, increased batch size of 1024 to fully utilize the GPU memory and learning rate increased by $\sqrt{2}$ to $\sqrt{2}\times 1e-4$ to account for the doubled batch size.  Using the SCALE optimizer we were able to achieve a single-digit final evaluation perplexity of 9.32, while maintaining a stable loss trajectory throughout training (fully absent of loss spikes), as shown in Figure \ref{fig:7b_100bt}. This result validates our optimizer's stability in large model and high budget runs.

\section{Results on Overtraining Regime}\label{appendix:overtraining}

\begin{table}[H]
\centering
\small
\setlength{\tabcolsep}{4pt}
\begin{tabular}{l|c|c|c}
\toprule
Chinchilla ratio     &  $1\times$  & $2\times$ & $4\times$  \\
\midrule
Adam     ~~~~~~~~~~~~~~~~~~~~~~~~   (2.21G) & 18.77 & 17.60 & 17.21  \\
Adam  (Stable-SPAM)  (2.21G)  & 16.80 & 15.85 & 15.11    \\
\midrule
Muon         ~~~~~~~~~~~~~~~~~~~~~~~~  (1.47G)         & 16.70 & 15.81 &  15.18   \\
GaLore       ~~~~~~~~~~~~~~~~~~~~~ (1.59G)         & 19.37 & 18.40 &  17.81         \\
Fira         ~~~~~~~~~~~~~~~~~~~~~~~~~~~ (1.59G)         & 16.82 & 15.82 &  15.31         \\
APOLLO        ~~~~~~~~~~~~~~~~~~~(1.59G)         & 16.75 & 15.76 &  15.06         \\
APOLLO-Mini   ~~~~~~~~~  (1.00G)         & 17.11 & 16.02 &  15.21         \\
\midrule
\ourname     ~~~~~~~~~~~~~~~~~~~~~~~ \textbf{(0.80G)}   & \textbf{16.32} & \textbf{15.33} & \textbf{14.77}   \\
\bottomrule
\end{tabular}
\vspace{0.2cm}
\caption{ Results from training the 350M LLaMA model on C4 using different token budgets. Chinchilla ratio $1\times$ corresponds to (roughly) our default token budget for the given model, i.e., 7.8B tokens (following \cite{zhao2024galore}), $2\times$ to 15.6B tokens and $4\times$ to 31.2B tokens. Our method maintains its SOTA memory-efficient pretraining performance among the different training budgets.}
\label{tab:overtrain}
\end{table}

\section{Fine-tuning Results}\label{appendix:finetune}

First, we want to emphasize that our optimizer design is focused on memory-efficient LLM pretraining. As pretraining is by far the most computationally expensive part of the LLM training process (for example pretraining LLaMA2-70B took about 1.72 million A100 GPU hours \citep{touvron2023llama}), it is commonly  believed that an efficient SOTA pretraining optimizer is of great importance. 

Instead, for fine-tuning, existing Parameter-efficient fine-tuning (PEFT) methods already reduce the memory and compute requirements substantially while giving performance competitive to full fine-tuning. Therefore we don't expect our optimizer to replace such approaches. However, to give further evidence of its generality, we provide some preliminary results on GLUE benchmarks with a pretrained RoBERTa-base \citep{liu2019roberta} which we fine-tuned with our method and Adam (full fine-tuning), following the setup of \cite{zhao2024galore}. We conduct a learning rate search for both methods and report the best epoch results (higher the better).
 
\begin{table}[H]
\centering
\small
\setlength{\tabcolsep}{4pt}
\begin{tabular}{l|c|c|c|c|c|c|c}
\toprule
Method / Benchmark     & RTE &  CoLA  & MRPC & STS-B & SST-2 & QNLI & \textbf{Avg}   \\
\midrule
Adam     ~~~~~~~~~  (0.75G) &  79.53 & 63.37 & 93.01 & 91.26 & 94.26 & 92.67 & 85.68   \\
\ourname   ~~~~~~~~ (0.33G) &  80.46 & 63.02 & 92.53 & 91.23 & 93.58 & 92.23 & 85.51   \\
\bottomrule
\end{tabular}
\vspace{0.2cm}
\caption{ Results from fine-tuning RoBERTa-base on different GLUE benchmarks \citep{wang2018glue}. Memory consumption is indicated in the parenthesis.}
\label{tab:adam1b}
\end{table}

We observe that our method, despite being designed for pretraining, can give results comparable to Adam for fine-tuning tasks while using over 2 times less memory (included in parenthesis). Again, we want to point out that our optimizer is aimed at memory-efficient pretraining where its advantages are most relevant. As a future work, it might be worth investigating combining SCALE with PEFT methods, although this is out of the scope of this paper. 

}

\section{More Variance Analysis Results}

In addition to our variance analysis in Figure \ref{fig:variance}, we ablate the impact of model size, architecture and tokenizer to the per-layer gradient variance behavior. As shown in Figures \ref{fig:350m_var} and \ref{fig:gpt_var}, The pattern remain similar to Figure \ref{fig:variance}, i.e., the last layer exhibits significantly more gradient variance than all other layers, and adding momentum to it helps reduce its variance. 

\begin{figure}[H]
  \centering
  \includegraphics[width=0.9\textwidth]{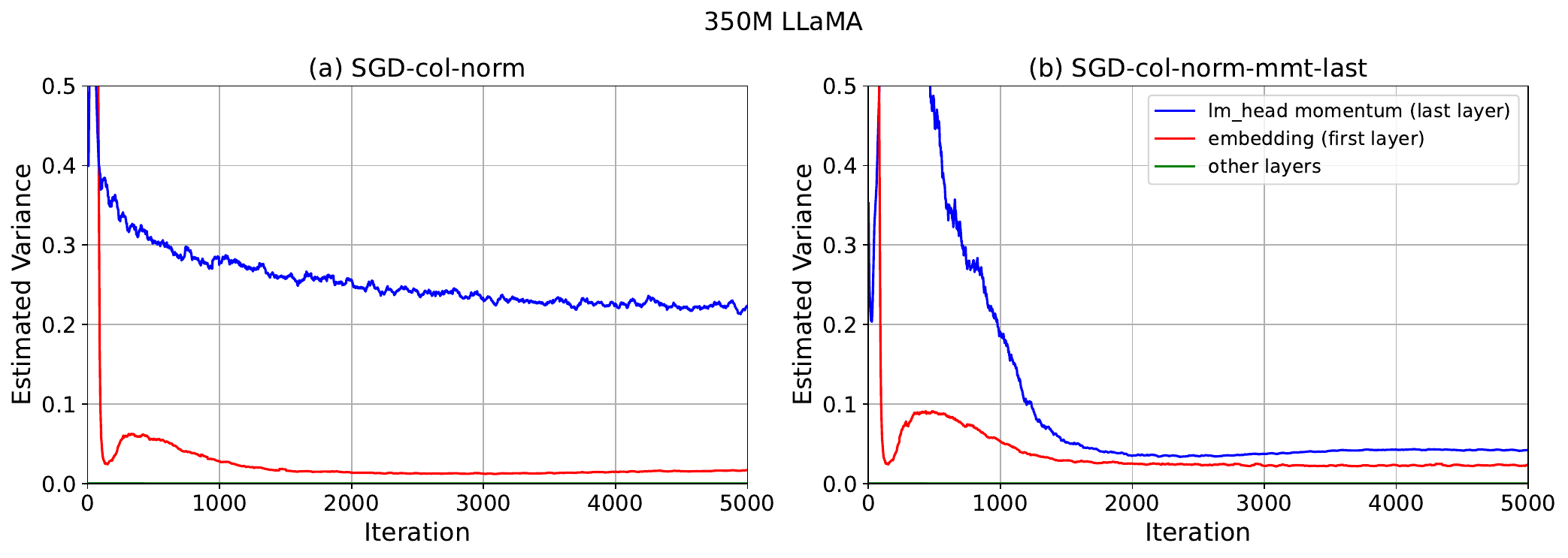}
  \caption{Variance analysis experiment for a 350M LLaMA model on C4.}
  \label{fig:350m_var}
\end{figure}

\begin{figure}[H]
  \centering
  \includegraphics[width=0.9\textwidth]{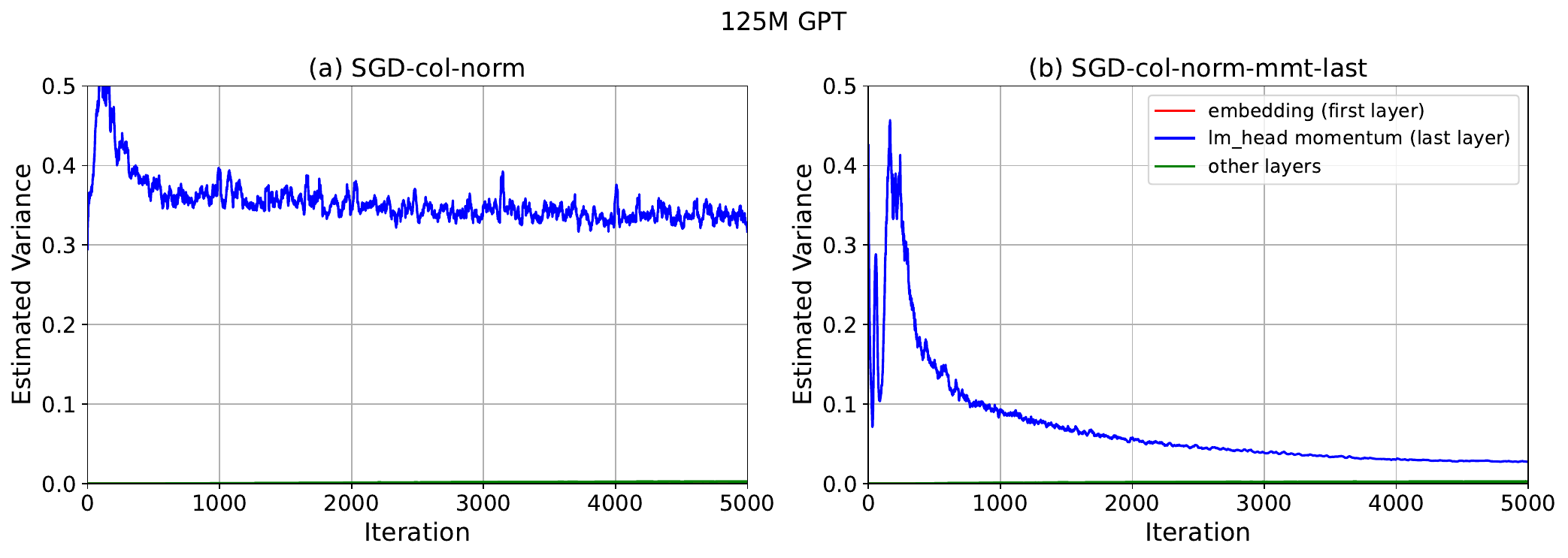}
  \caption{Variance analysis experiment for a GPT-125M model using the GPT-2 tokenizer on C4.}
  \label{fig:gpt_var}
\end{figure}

\section{Learning Rate Sensitivity Analysis}\label{appendix:lr_sensitivity}

\begin{figure}[htb!]
  \centering
  \includegraphics[width=0.45\textwidth]{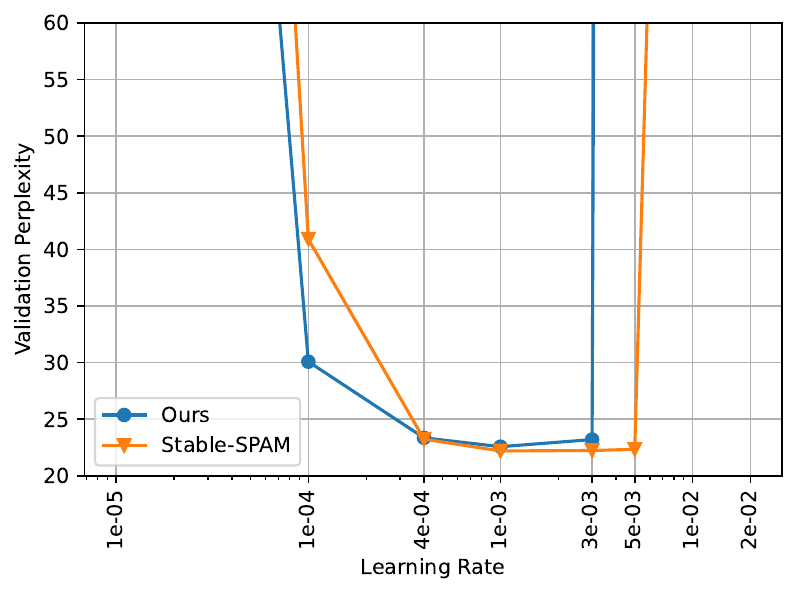}
  \caption{Learning rate sensitivity analysis, comparing Stable-SPAM (a stabilized version of Adam) and our method. Results from the 130M LLaMA model.}
  \label{fig:sensitivity}
\end{figure}

In Figure \ref{fig:sensitivity} we test the performance of our algorithm, \ourname{}, with different learning rates and compare it with that of Adam (Stable-SPAM version) \citep{huang2025stablespam}. We observe that both algorithms behave similarly with a reasonable range of learning rates.

\section{Curves of Training Iteration versus Perplexity}\label{appendix:ppl_curves}

\begin{figure}[H]
  \centering
  \includegraphics[width=0.5\textwidth]{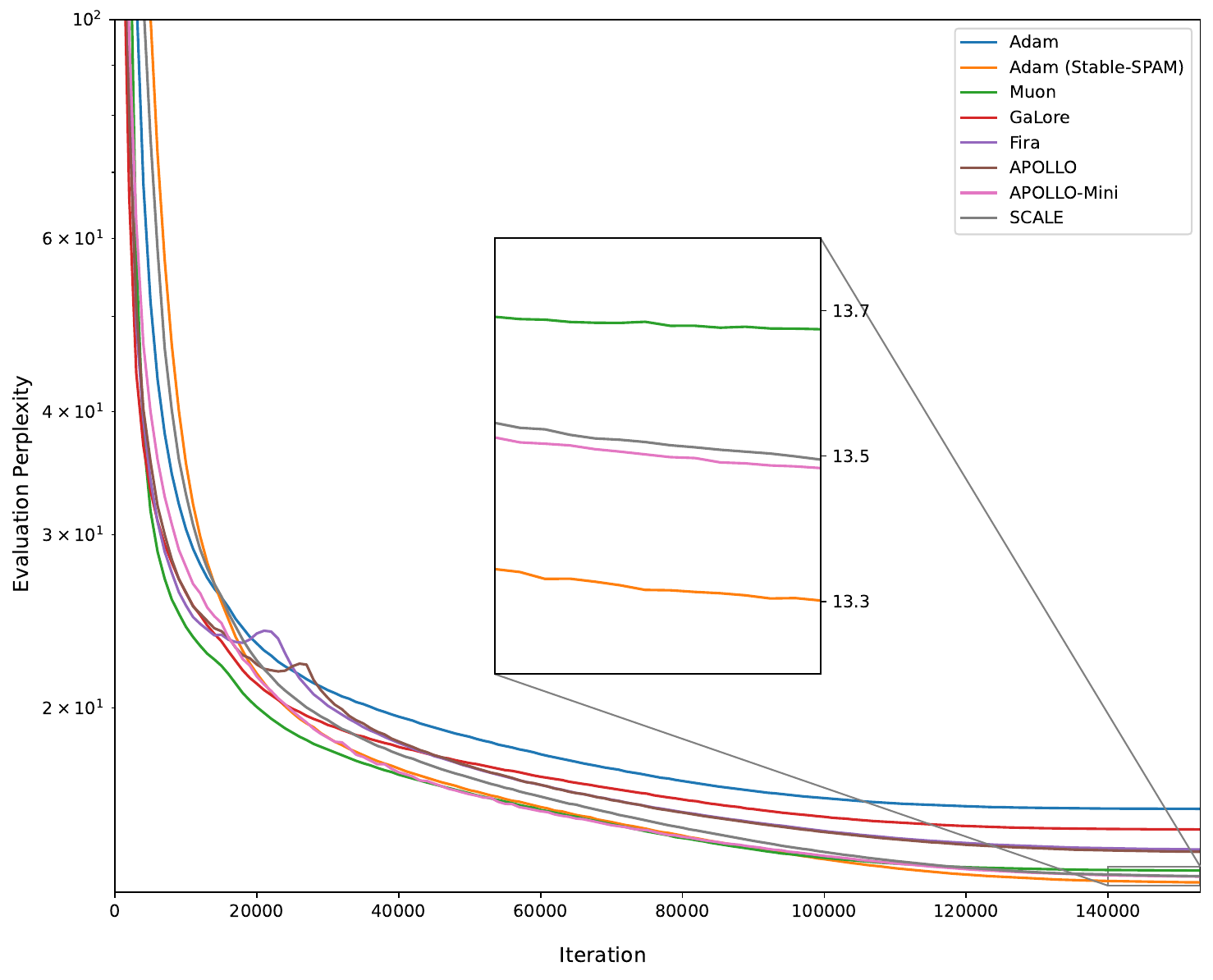}
  \caption{The evaluation perplexity curves of different methods on LLaMA 1B pretraining. Note that Muon is converging the fastest at the beginning stage, while \ourname{}, Adam (Stable-SPAM) and APOLLO-Mini catch up in the final stage of training.}
  \label{fig:ppl_curves}
\end{figure}

\section{Discussion on why Column-wise matters for the Last Layer}\label{appendix:column_wise_discussion}
\vspace{-5pt}

\begin{figure}[H]
  \centering

  \begin{subfigure}[t]{0.4\textwidth}
    \centering
    \includegraphics[width=\linewidth]{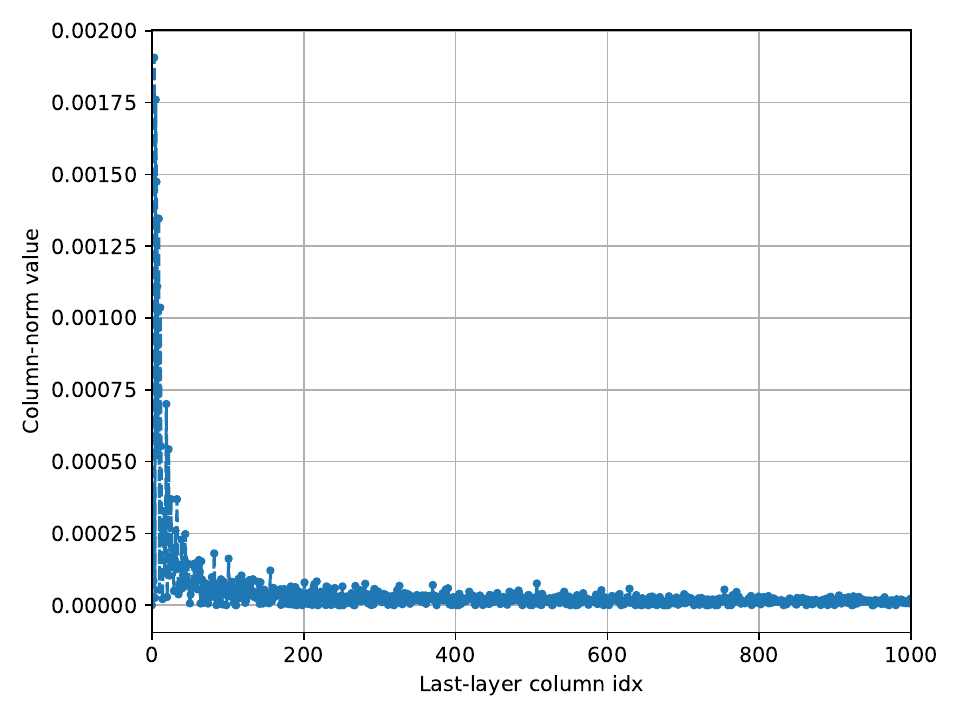}
    \caption{Iteration 1,000.}
    \label{fig:freqcol:a}
  \end{subfigure}
  \begin{subfigure}[t]{0.4\textwidth}
    \centering
    \includegraphics[width=\linewidth]{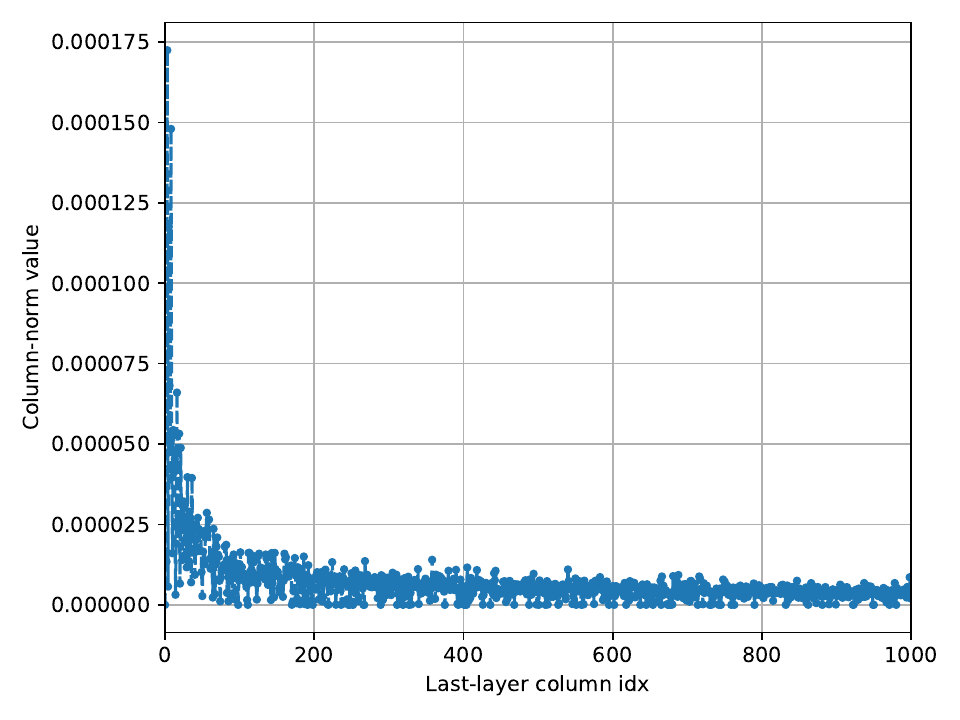}
    \caption{Iteration 10,000.}
    \label{fig:freqcol:b}
  \end{subfigure}

  \vspace{-0.2cm}
  \caption{The column-norm values of the last layer's gradient matrix at iterations 1,000 and 10,000 during the training of LLaMA 130M. The last layer of LLaMA 130M has gradient dimensions $d_{\textrm{model}} \times |V| = 768 \times 32,000$, which means that there are $|V|=32,000$ columns (and therefore $|V|$ corresponding column-norms). For clarity we limit the x axis to the first 1000 columns. We also note that due to the design of the SentencePiece tokenizer \citep{kudo2018sentencepiece} used, lower token ids (and therefore lower last-layer column-ids) generally correspond to more frequent tokens (such as ``\_the'', ``\_and'', ``\_to'', etc., as we observed). Therefore, from the above figures it can be seen that more frequent tokens have much larger column-norms, potentially leading to imbalanced learning. We hypothesize that the effectiveness of column-wise is related to mitigating this phenomenon by normalizing all the column norms to the same level and thus enabling more balanced training dynamics.}
  \label{fig:freqcol}

  \vspace{-0.5cm}
\end{figure}

In this section we further investigate the importance of column-wise normalization for the last layer, and also ablate several ``mixed'' normalization schemes, where more than one type of normalization is used depending on the layer. 
\vspace{-5pt}

\begin{table}[H]
\centering
\small
\setlength{\tabcolsep}{4pt}
\begin{tabular}{l|c}
\toprule
Method     &  Perplexity  \\
\midrule
 1. \ourname \ (all column, mmt-last)     & \textbf{22.57}   \\
 2. column-last, row-rest, mmt-last     &  23.27  \\
 3. row-first, column-rest, mmt-last    &  22.94  \\
 4. norm along larger dim, mmt-last     &  23.52  \\
 5. row-last,  column-rest, mmt-last    &  28.83   \\
\bottomrule
\end{tabular}
\vspace{0.2cm}
\caption{Ablation results from training the 130M LLaMA model on C4 using ``mixed'' normalization schemes comapred to \ourname{}. }
\label{tab:mixedabl}
\vspace{-15pt}
\end{table}

All results of Table \ref{tab:mixedabl} are from pretraining the LLaMA 130M model using momentum only for the last layer (mmt-last, as in \ourname{}). Methods 1-4 all do column-wise normalization for the last layer; only method 5 (row-last,  column-rest, mmt-last) does row-wise for the last layer (row-last). Note that ``column-rest'' or ``row-rest'' means we do column- or row-wise normalizations for all other layers we don't explicitly state.

We can clearly observe from the above results that row-last significantly degrades the performance. In addition, we can see that excluding the last layer, the rest of the layers are less influenced by the normalization direction. However, a uniform normalization approach as in \ourname{} (all column-wise) still outperforms other ``mixed'' normalization approaches.

To explain the above results, our intuition suggests that the last layer in LLMs is somehow ``special''. Indeed, it is the linear projection layer (often called Language modeling head) that maps the last hidden state (of dimension $d_{\textrm{model}}$) to the vocabulary logits (of dimension $|V|$ being the vocabulary size). Its gradient matrix shape is $d_{model} \times |V|$, i.e., it has $|V|$  columns, same as its weight matrix shape. Therefore, each gradient column of the last layer has a ``physical meaning'', i.e., it corresponds to one of the $|V|$  vocabulary tokens. 

We have observed that for certain (few) tokens, their corresponding last-layer column gradients have much larger norms compared to the rest. We have also observed that such columns correspond to more frequent tokens (see Figure \ref{fig:freqcol} details). We hypothesize that such last-layer column-norm differences can potentially lead to uneven learning (i.e., more rare tokens receive smaller gradients and therefore are not ``learned'' by the model) or even training divergence (due to the large gradients corresponding to frequent tokens). In connection to this, \cite{kunstner2024heavy}, aiming at explaining the large Adam-SGD gap in LLM training, observed that SGD is able to make much less progress on learning low-frequency classes compared to Adam. Based on those insights, we conclude that by normalizing the columns of the last layer our method obtains more stable training dynamics and a more ``even'' token learning behavior, even without using full per-parameter adaptivity as in Adam. 

Although still at a preliminary stage, we believe that our above insights for the importance of column-wise normalization for the last layer can provide a deeper understanding of the normalization component and (at least to some degree) explain why \ourname{} is able to perform comparable to Adam for LLM pretraining.

\section{Proofs for Section \ref{sec:momentum_last}}\label{appendix:proof}
In this section, we conduct the proof for Theorem \ref{thm:momentum}. The proof follows \citet{liu2020improved}. First, we have the following lemmas, which are variations of \citet[Lemma 1 and 2]{liu2020improved}.

\begin{lemma}\label{lemma:1}
    Suppose that the assumptions in Theorem \ref{thm:momentum} hold. For SGD-M \eqref{eq:sgdm}, we have
    \begin{equation}\label{eq:proof_temp1}
    \mathbb{E}\left[\left\|m_{l}^t-(1-\beta_{l}) \sum_{i=1}^t \beta_{l}^{t-i} \nabla_{\theta_{l}}\ell(\theta^i)\right\|^2\right] \leq \frac{1-\beta_{l}}{1+\beta_{l}}\left(1-\beta_{l}^{2 t}\right) \sigma_{l}^2.
    \end{equation}
\end{lemma}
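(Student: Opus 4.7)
The plan is to unroll the momentum recursion and identify the quantity on the left as a weighted sum of independent zero-mean noise terms, after which the bound follows from computing a geometric series.

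First, I would unroll the recursion $m_{l}^{t} = \beta_l m_{l}^{t-1} + (1-\beta_l) g^t$ starting from $m_l^0 = 0$ to obtain the closed form
\[
m_l^t = (1-\beta_l)\sum_{i=1}^t \beta_l^{t-i} g^i, \qquad g^i = \nabla_{\theta_l}\ell(\theta^i;\xi_i).
\]
Subtracting the quantity $(1-\beta_l)\sum_{i=1}^t \beta_l^{t-i}\nabla_{\theta_l}\ell(\theta^i)$ and defining the noise $e_i := g^i - \nabla_{\theta_l}\ell(\theta^i)$, the inner expression becomes $(1-\beta_l)\sum_{i=1}^t \beta_l^{t-i} e_i$, so the problem reduces to bounding the second moment of this weighted noise sum.

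Next, I would exploit the standard independence structure: conditioning on the $\sigma$-algebra generated by $\xi_1,\ldots,\xi_{i-1}$, the mini-batch $\xi_i$ is independent of the past, so the unbiasedness assumption gives $\mathbb{E}[e_i \mid \xi_{1:i-1}] = 0$ and hence $\mathbb{E}\langle e_i, e_j\rangle = 0$ for $i \neq j$ by a tower-property argument. Expanding the squared norm then collapses all cross terms, yielding
\[
\mathbb{E}\left\|(1-\beta_l)\sum_{i=1}^t \beta_l^{t-i} e_i\right\|^2 = (1-\beta_l)^2 \sum_{i=1}^t \beta_l^{2(t-i)}\, \mathbb{E}\|e_i\|^2 \leq (1-\beta_l)^2 \sigma_l^2 \sum_{i=1}^t \beta_l^{2(t-i)}.
\]

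Finally, evaluating the geometric sum $\sum_{i=1}^t \beta_l^{2(t-i)} = \frac{1-\beta_l^{2t}}{1-\beta_l^2}$ and simplifying $(1-\beta_l)^2/(1-\beta_l^2) = (1-\beta_l)/(1+\beta_l)$ gives the stated bound. No real obstacle is expected here; the only subtle point is justifying the orthogonality of the noise terms across iterations, which requires care because $e_i$ depends on $\theta^i$ which in turn depends on $\xi_{1:i-1}$, but this is handled cleanly by iterated conditioning. The rest is bookkeeping on the geometric series.
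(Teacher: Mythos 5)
Your proposal is correct and follows essentially the same route as the paper's proof: unroll the momentum recursion, use unbiasedness to cancel the cross terms in the expanded squared norm, and sum the geometric series $(1-\beta_l)^2\sum_{i=1}^t\beta_l^{2(t-i)} = \frac{1-\beta_l}{1+\beta_l}(1-\beta_l^{2t})$. Your explicit tower-property justification for the vanishing of cross terms is in fact slightly more careful than the paper, which simply asserts that the cross terms cancel by unbiasedness.
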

\begin{proof}
    It is straightforward to see $m_{l}^{t}=(1-\beta_l)\sum_{i=1}^{t}\beta_l^{t-i} g_l^i$, where $g_l^i:=\nabla_{\theta_{l}}f(\theta^i;\xi_i)$.

    We have
    \begin{equation*}
        \mathbb{E}\left[\left\|m_{l}^t-(1-\beta_{l}) \sum_{i=1}^t \beta_{l}^{t-i} \nabla_{\theta_{l}}\ell(\theta^i)\right\|^2\right] = (1-\beta_l)^2 \mathbb{E} \left\| \sum_{i=1}^t \beta_{l}^{t-i} (g_l^i - \nabla_{\theta_{l}}\ell(\theta^i)) \right\|^2.
    \end{equation*}
    Therefore
    \begin{equation*}
    \begin{split}
        &\mathbb{E}\left[\left\|m_{l}^t-(1-\beta_{l}) \sum_{i=1}^t \beta_{l}^{t-i} \nabla_{\theta_{l}}\ell(\theta^i)\right\|^2\right]\\
        = & (1-\beta_l)^2 \mathbb{E}_{\xi_1}\mathbb{E}_{\xi_2}\cdots\mathbb{E}_{\xi_t} \left\| \sum_{i=1}^t \beta_{l}^{t-i} (g_l^i - \nabla_{\theta_{l}}\ell(\theta^i)) \right\|^2 \\
        = & (1-\beta_l)^2 \mathbb{E}_{\xi_1}\mathbb{E}_{\xi_2}\cdots\mathbb{E}_{\xi_t} \left[ \sum_{i=1}^t\sum_{j=1}^t \langle \beta_l^{t-i}(g_l^i - \nabla_{\theta_{l}}\ell(\theta^i)), \beta_l^{t-j}(g_l^j - \nabla_{\theta_{l}}\ell(\theta^j)) \rangle \right].
    \end{split}
    \end{equation*}
    Due to unbiasedness of stochastic gradients, the cross terms cancel, therefore we have
    \begin{equation*}
    \begin{split}
        &\mathbb{E}\left[\left\|m_{l}^t-(1-\beta_{l}) \sum_{i=1}^t \beta_{l}^{t-i} \nabla_{\theta_{l}}\ell(\theta^i)\right\|^2\right]\\
        = & (1-\beta_l)^2 \sum_{i=1}^t \beta_{l}^{2(t-i)} \mathbb{E}_{\xi_1}\mathbb{E}_{\xi_2}\cdots\mathbb{E}_{\xi_t}\left\|g_l^i - \nabla_{\theta_{l}}\ell(\theta^i) \right\|^2 \leq \frac{1-\beta_l}{1+\beta_l}(1-\beta_l^{2t})\sigma_l^2,
    \end{split}
    \end{equation*}
    for all layers $l=1,...,L$. 
\end{proof}

\begin{lemma}\label{lemma:2}
    Suppose that the assumptions in Theorem \ref{thm:momentum} hold. For SGD-M \eqref{eq:sgdm}, we have
    \begin{equation}\label{eq:proof_temp2}
    \mathbb{E}\left[\left\|\frac{1-\beta_{l}}{1-\beta_{l}^t} \sum_{i=1}^t \beta_{l}^{t-i} \nabla_{\theta_{l}}\ell(\theta^i)-\nabla_{\theta_{l}}\ell(\theta^t)\right\|^2\right] \leq \sum_{i=1}^{t-1} a_{l, t, i} \mathbb{E}\left[\left\|\theta^{i+1}-\theta^i\right\|^2\right],
    \end{equation}
    for all layers $l=1,...,L$, where 
    \begin{equation}
    a_{l, t, i}=\frac{\gamma^2 \beta_{l}^{t-i}}{1-\beta_{l}^t}\left(t-i+\frac{\beta_{l}}{1-\beta_{l}}\right).
    \end{equation}
\end{lemma}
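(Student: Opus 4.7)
The target inequality compares a convex-combination of past gradients with the current gradient, so the natural plan is to exploit that the weights $w_i := (1-\beta_l)\beta_l^{t-i}/(1-\beta_l^t)$ for $i=1,\dots,t$ satisfy $\sum_{i=1}^t w_i = 1$ (finite geometric series). This lets us rewrite the quantity inside the norm as $\sum_{i=1}^t w_i(\nabla_{\theta_l}\ell(\theta^i) - \nabla_{\theta_l}\ell(\theta^t))$ and apply Jensen's inequality (equivalently, convexity of $\|\cdot\|^2$) to move the norm inside:
\begin{equation*}
\left\|\sum_{i=1}^t w_i\bigl(\nabla_{\theta_l}\ell(\theta^i)-\nabla_{\theta_l}\ell(\theta^t)\bigr)\right\|^2 \leq \sum_{i=1}^t w_i\,\bigl\|\nabla_{\theta_l}\ell(\theta^i)-\nabla_{\theta_l}\ell(\theta^t)\bigr\|^2 .
\end{equation*}

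Next I would invoke the $\gamma$-smoothness assumption from Theorem \ref{thm:momentum} to get $\|\nabla_{\theta_l}\ell(\theta^i)-\nabla_{\theta_l}\ell(\theta^t)\|^2 \leq \gamma^2\|\theta^i-\theta^t\|^2$, and then telescope the displacement $\theta^t-\theta^i = \sum_{k=i}^{t-1}(\theta^{k+1}-\theta^k)$ followed by the standard Cauchy--Schwarz/Jensen bound $\|\theta^i-\theta^t\|^2 \leq (t-i)\sum_{k=i}^{t-1}\|\theta^{k+1}-\theta^k\|^2$. Substituting back produces a double sum in $i$ and $k$, whereupon I would swap the order of summation (for each $k\in\{1,\dots,t-1\}$, $i$ ranges over $1,\dots,k$) to collect the coefficient of each $\|\theta^{k+1}-\theta^k\|^2$ on the right-hand side.

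The main technical step — and the place where the specific closed form of $a_{l,t,i}$ is produced — is evaluating this coefficient, namely showing
\begin{equation*}
\frac{\gamma^2(1-\beta_l)}{1-\beta_l^t}\sum_{i=1}^{k}\beta_l^{t-i}(t-i) \leq \frac{\gamma^2\beta_l^{t-k}}{1-\beta_l^t}\Bigl(t-k+\frac{\beta_l}{1-\beta_l}\Bigr).
\end{equation*}
I would handle this by reindexing $m=t-i$ to rewrite the left-hand sum as $(1-\beta_l)\sum_{m=t-k}^{t-1} m\beta_l^m$, upper-bounding the finite tail by the corresponding infinite tail $\sum_{m=t-k}^{\infty} m\beta_l^m$, and evaluating it via the shifted geometric and arithmetico-geometric identities $\sum_{m=0}^{\infty}\beta_l^m = 1/(1-\beta_l)$ and $\sum_{m=0}^{\infty}m\beta_l^m = \beta_l/(1-\beta_l)^2$. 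A short computation then yields exactly $\beta_l^{t-k}\bigl(t-k+\beta_l/(1-\beta_l)\bigr)$.

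The main obstacle I anticipate is purely bookkeeping: making sure the index swap is correct (terms with $i=t$ drop out because $\|\theta^t-\theta^t\|^2=0$, so effectively $i$ runs only up to $t-1$), and that the closed-form identification of $a_{l,t,i}$ after the infinite-tail bound matches the statement. Everything else — Jensen, smoothness, and the telescoping displacement bound — is routine.
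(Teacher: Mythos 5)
Your proposal is correct and follows essentially the same route as the paper's proof: both reduce the left-hand side to $\frac{1-\beta_l}{1-\beta_l^t}\sum_{i=1}^t\beta_l^{t-i}\,\mathbb{E}\|\nabla_{\theta_l}\ell(\theta^i)-\nabla_{\theta_l}\ell(\theta^t)\|^2$ (you via Jensen on the convex combination, the paper by expanding the double sum and bounding cross terms, which is the same calculation), then telescope, apply $\gamma$-smoothness, swap the order of summation, and bound the resulting geometric/arithmetico-geometric coefficient by $a_{l,t,i}$. The only cosmetic difference is that you bound the coefficient sum by its infinite tail while the paper evaluates the finite sum exactly and drops a negative term; both yield the stated $a_{l,t,i}$.
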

\begin{proof}
    Since
    \begin{equation*}
    \begin{aligned}
    & \mathbb{E}\left[\left\|\frac{1-\beta_l}{1-\beta_l^t} \sum_{i=1}^t \beta_l^{t-i} \nabla_{\theta_{l}}\ell(\theta^i)-\nabla_{\theta_{l}}\ell(\theta^t)\right\|^2\right] \\
    & =\left(\frac{1-\beta_l}{1-\beta_l^t}\right)^2 \sum_{i, j=1}^t \mathbb{E}\left[\left\langle\beta_l^{t-i}\left(\nabla_{\theta_{l}}\ell(\theta^t)-\nabla_{\theta_{l}}\ell(\theta^i)\right), \beta_l^{t-j}\left(\nabla_{\theta_{l}}\ell(\theta^t)-\nabla_{\theta_{l}}\ell(\theta^j)\right)\right\rangle\right] \\
    & \leq\left(\frac{1-\beta_l}{1-\beta_l^t}\right)^2 \sum_{i, j=1}^t \beta_l^{2 t-i-j}\left(\frac{1}{2} \mathbb{E}\left[\left\|\nabla_{\theta_{l}}\ell(\theta^t)-\nabla_{\theta_{l}}\ell(\theta^i)\right\|^2\right]+\frac{1}{2} \mathbb{E}\left[\left\|\nabla_{\theta_{l}}\ell(\theta^t)-\nabla_{\theta_{l}}\ell(\theta^j)\right\|^2\right]\right) \\
    & =\left(\frac{1-\beta_l}{1-\beta_l^t}\right)^2 \sum_{i=1}^t\left(\sum_{j=1}^t \beta_l^{2 t-i-j}\right) \frac{1}{2} \mathbb{E}\left[\left\|\nabla_{\theta_{l}}\ell(\theta^t)-\nabla_{\theta_{l}}\ell(\theta^j)\right\|^2\right] \\
    & \quad+\left(\frac{1-\beta_l}{1-\beta_l^t}\right)^2 \sum_{j=1}^t\left(\sum_{i=1}^t \beta_l^{2 t-i-j}\right) \frac{1}{2} \mathbb{E}\left[\left\|\nabla_{\theta_{l}}\ell(\theta^t)-\nabla_{\theta_{l}}\ell(\theta^i)\right\|^2\right] \\
    & =\left(\frac{1-\beta_l}{1-\beta_l^t}\right)^2 \sum_{i=1}^t \frac{\beta_l^{t-i}\left(1-\beta_l^t\right)}{1-\beta_l} \mathbb{E}\left[\left\|\nabla_{\theta_{l}}\ell(\theta^t)-\nabla_{\theta_{l}}\ell(\theta^i)\right\|^2\right] \\
    & =\frac{1-\beta_l}{1-\beta_l^t} \sum_{i=1}^t \beta_l^{t-i} \mathbb{E}\left[\left\|\nabla_{\theta_{l}}\ell(\theta^t)-\nabla_{\theta_{l}}\ell(\theta^i)\right\|^2\right] \\
    & \leq \frac{1-\beta_l}{1-\beta_l^t} \sum_{i=1}^t \beta_l^{t-i}(t-i) \sum_{j=i}^{t}\mathbb{E}\left[\left\|\nabla_{\theta_{l}}\ell(\theta^{j+1})-\nabla_{\theta_{l}}\ell(\theta^j)\right\|^2\right]
    \end{aligned}    
    \end{equation*}
    where we use Cauchy-Schwarz inequality for the first inequality and the last is by AM-GM inequality. Now applying the Lipschitz smoothness assumption, we have
    \begin{equation*}
    \begin{aligned}
    & \mathbb{E}\left[\left\|\frac{1-\beta_l}{1-\beta_l^t} \sum_{i=1}^t \beta_l^{t-i} \nabla_{\theta_{l}}\ell(\theta^i)-\nabla_{\theta_{l}}\ell(\theta^t)\right\|^2\right] \\
    & \leq \frac{1-\beta_l}{1-\beta_l^t} \sum_{i=1}^t \beta_l^{t-i}(t-i) \sum_{j=i}^{t}\gamma^2 \mathbb{E}\left[\left\|\theta^{j+1}-\theta^j\right\|^2\right] \\
    & \leq \frac{1-\beta_l}{1-\beta_l^t} \sum_{j=1}^{t-1} \left(\sum_{i=1}^j \beta_l^{t-i}(t-i)\right) \gamma^2 \mathbb{E}\left[\left\|\theta^{j+1}-\theta^j\right\|^2\right].
    \end{aligned}    
    \end{equation*}
    Now define
    $$
    a_{l,t,i}'=\frac{1-\beta_l}{1-\beta_l^t} \gamma^2\sum_{i=1}^{j}\beta_l^{t-i}(t-i) = \frac{\gamma^2 \beta_l^t}{1-\beta_l^t}\left(-(t-1)-\frac{1}{1-\beta_l}\right)+\frac{\gamma^2 \beta_l^{t-j}}{1-\beta_l^t}\left(t-j+\frac{\beta_l}{1-\beta_l}\right)\leq a_{l,t,i},
    $$
    then we get \eqref{eq:proof_temp2}.
\end{proof}

\begin{lemma}\label{lemma:sequence_z}
    For SGD-M \eqref{eq:sgdm}, define the auxiliary sequence by 
    \begin{equation}\label{eq:seq_z}
    z_{l}^t:= \begin{cases}\theta_l^t & t=1 \\ \frac{1}{1-\beta_l} \theta_l^t-\frac{\beta_l}{1-\beta_l} \theta_l^{t-1} & t \geq 2\end{cases}
    \end{equation}
    and denote $z^t=[z_{1}^t,...,z_{L}^t]$ the entire auxiliary variable at iteration $t$. Then we have
    $$
    z_{l}^{t+1} - z_{l}^t = -\eta_{l}g_l^t
    $$
    and
    $$
    z_l^t - \theta_l^t = -\frac{\beta_l}{1-\beta_l} \eta_l m_l^{t-1}.
    $$
\end{lemma}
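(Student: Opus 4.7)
The plan is to verify both identities by direct algebraic manipulation using the SGD-M update rule \eqref{eq:sgdm}, treating $t=1$ and $t\geq 2$ as separate cases to accommodate the piecewise definition of $z_l^t$. The two relations are coupled: the second (which expresses the offset between $z_l^t$ and $\theta_l^t$ in terms of the momentum) follows essentially from the definition of $z_l^t$ combined with the parameter update $\theta_l^{t+1}-\theta_l^t=-\eta_l m_l^t$, while the first (which says the auxiliary sequence moves like plain SGD on the stochastic gradient) follows once we substitute the recursion for $m_l^t$.

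First, I would establish the second identity $z_l^t-\theta_l^t=-\tfrac{\beta_l}{1-\beta_l}\eta_l m_l^{t-1}$. For $t=1$, both sides vanish since $z_l^1=\theta_l^1$ and $m_l^0=0$. For $t\geq 2$, I would simply plug in the definition to get $z_l^t-\theta_l^t=\tfrac{\beta_l}{1-\beta_l}(\theta_l^t-\theta_l^{t-1})$ and then replace $\theta_l^t-\theta_l^{t-1}$ by $-\eta_l m_l^{t-1}$ via \eqref{eq:sgdm}.

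Next, for the first identity $z_l^{t+1}-z_l^t=-\eta_l g_l^t$, I would handle $t=1$ by computing $z_l^2-z_l^1=\tfrac{1}{1-\beta_l}(\theta_l^2-\theta_l^1)$ directly from the definitions, then use $\theta_l^2-\theta_l^1=-\eta_l m_l^1=-\eta_l(1-\beta_l)g_l^1$ (since $m_l^0=0$ makes $m_l^1=(1-\beta_l)g_l^1$). For $t\geq 2$, I would write
\begin{equation*}
z_l^{t+1}-z_l^t=\tfrac{1}{1-\beta_l}(\theta_l^{t+1}-\theta_l^t)-\tfrac{\beta_l}{1-\beta_l}(\theta_l^t-\theta_l^{t-1})=-\tfrac{\eta_l}{1-\beta_l}\bigl(m_l^t-\beta_l m_l^{t-1}\bigr),
\end{equation*}
and then invoke the momentum recursion $m_l^t-\beta_l m_l^{t-1}=(1-\beta_l)g_l^t$ to cancel the $1-\beta_l$ factor.

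There is no real obstacle here; the lemma is a bookkeeping identity of the Yang--Lin--Li / Ghadimi--Lan auxiliary-sequence type, and the only mild subtlety is ensuring that the base case $t=1$ is consistent with the piecewise definition of $z_l^t$ and with the convention $m_l^0=0$. Everything else reduces to substitution of \eqref{eq:sgdm}.
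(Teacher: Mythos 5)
Your proposal is correct and follows essentially the same route as the paper: split into the $t=1$ and $t\geq 2$ cases, expand the piecewise definition of $z_l^t$, and substitute the update $\theta_l^{t+1}-\theta_l^t=-\eta_l m_l^t$ together with the momentum recursion $m_l^t-\beta_l m_l^{t-1}=(1-\beta_l)g_l^t$ (using $m_l^0=0$ in the base case). You actually write out the second identity in more detail than the paper, which only remarks that it ``can be computed similarly.''
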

\begin{proof}
    For $t=1$, we have (since $m^0=0$)
    $$
    z_{l}^{2} - z_{1}^1 = \frac{1}{1-\beta_l}\theta_l^2 - \frac{\beta_l}{1-\beta_l}\theta_l^1 -\theta_l^1 = \frac{1}{1-\beta_l}(\theta_l^2 - \theta_l^1) = -\eta_l g_l^1.
    $$

    For $t\geq 2$, we have
    \begin{equation*}
    \begin{aligned}
        z_{l}^{t+1} - z_{1}^t = & \frac{1}{1-\beta_l}(\theta_l^{t+1} - \theta_l^{t}) - \frac{\beta_l}{1-\beta_l}(\theta_l^{t} - \theta_l^{t-1}) \\
        = & \frac{1}{1-\beta_l}(-\eta_l m_l^t) - \frac{\beta_l}{1-\beta_l}(-\eta_l m_l^{t-1}) \\
        = & \frac{1}{1-\beta_l}(-\eta_l m_l^t + -\eta_l\beta_l m_l^{t-1}) \\
        = & -\eta_l g_l^t.
    \end{aligned}
    \end{equation*}

    For $z_l^t - \theta_l^t$, it can be computed similarly.
\end{proof}

We need the following proposition to show the final convergence of Theorem \ref{thm:momentum}. 
\begin{proposition}\label{prop:appendix_1}
    Suppose that the assumptions in Theorem \ref{thm:momentum} hold. For SGD-M \eqref{eq:sgdm}, we have
    \begin{equation}\label{eq:descent_proposition}
    \begin{aligned}
    &\mathbb{E}\left[\ell\left(z^{t+1}\right)\right]\\ \leq & \mathbb{E}\left[\ell\left(z^t\right)\right] + \rho_0 \left(\sum_{l=1}^{L}\eta_l\right)\sum_{l=1}^{L}\left(\eta_l^2 (\frac{\beta_l}{1-\beta_l})^2\gamma^2 \frac{1-\beta_l}{1+\beta_l} \right)\sigma_l^2 + \sum_{l=1}^{L}\frac{\gamma \eta_l^2}{2} \sigma_l^2\\
    &+ \sum_{l=1}^{L}(- \eta_l + \frac{\eta_l}{2\rho_0} + \frac{\gamma \eta_l^2}{2}) \mathbb{E} \|\nabla_{l}\ell(\theta^t)\|^2 + 2\rho_0 \left(\sum_{l=1}^{L}\eta_l\right)\sum_{l=1}^{L}\left(\eta_l^2 (\frac{\beta_l}{1-\beta_l})^2\gamma^2 (1-\beta_l^{t-1})^2 \mathbb{E}\left[\left\|\nabla_l\ell(\theta^t)\right\|^2\right]\right) \\
    &+ 2\rho_0 \left(\sum_{l=1}^{L}\eta_l\right)\sum_{l=1}^{L}\left(\eta_l^2 \gamma^2 (\frac{1-\beta_l^{t}}{1-\beta_l})^2 \mathbb{E}\left[\left\|\frac{1-\beta_l}{1-\beta_l^{t}} \sum_{i=1}^{t} \beta_l^{t-i} \nabla_l\ell(\theta^i)-\nabla_l\ell(\theta^t)\right\|^2\right]\right)
    \end{aligned}
    \end{equation}
    where the auxiliary sequence $z_l^t$ is defined by \eqref{eq:seq_z}.
\end{proposition}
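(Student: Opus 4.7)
The plan is to derive a descent-type inequality on the virtual iterate $z^t$, exploiting the clean one-step update $z_l^{t+1}-z_l^t = -\eta_l g_l^t$ supplied by Lemma \ref{lemma:sequence_z}. First, I would apply $\gamma$-smoothness of $\ell$ between $z^t$ and $z^{t+1}$ to obtain
\begin{equation*}
\ell(z^{t+1}) \le \ell(z^t) - \sum_{l=1}^L \eta_l \langle \nabla_l \ell(z^t), g_l^t\rangle + \frac{\gamma}{2}\sum_{l=1}^L \eta_l^2 \|g_l^t\|^2,
\end{equation*}
and take expectations. Since $z^t$ and $\theta^t$ are both $\mathcal{F}_{t-1}$-measurable while $g_l^t$ is a conditionally unbiased estimator of $\nabla_l \ell(\theta^t)$ with variance at most $\sigma_l^2$, the inner product becomes $\mathbb{E}\langle \nabla_l \ell(z^t), \nabla_l \ell(\theta^t)\rangle$ and $\mathbb{E}\|g_l^t\|^2 \le \mathbb{E}\|\nabla_l \ell(\theta^t)\|^2 + \sigma_l^2$. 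This already accounts for the $\frac{\gamma \eta_l^2}{2}\sigma_l^2$ term and for the $\frac{\gamma \eta_l^2}{2}$ share of the coefficient of $\mathbb{E}\|\nabla_l \ell(\theta^t)\|^2$ in \eqref{eq:descent_proposition}.

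Second, I would split $\langle \nabla_l \ell(z^t), \nabla_l \ell(\theta^t)\rangle = \|\nabla_l \ell(\theta^t)\|^2 + \langle \nabla_l \ell(z^t) - \nabla_l \ell(\theta^t), \nabla_l \ell(\theta^t)\rangle$ and apply Young's inequality with parameter $\rho_0$ to the cross term. This yields the $(-\eta_l + \frac{\eta_l}{2\rho_0})\mathbb{E}\|\nabla_l \ell(\theta^t)\|^2$ contribution and leaves a residual $\sum_l \frac{\eta_l \rho_0}{2}\|\nabla_l \ell(z^t) - \nabla_l \ell(\theta^t)\|^2$. Lipschitz smoothness bounds this residual by $\frac{\rho_0 \gamma^2}{2}\bigl(\sum_l \eta_l\bigr)\|z^t - \theta^t\|^2$, and Lemma \ref{lemma:sequence_z} rewrites $\|z^t - \theta^t\|^2 = \sum_l \bigl(\frac{\beta_l}{1-\beta_l}\bigr)^2 \eta_l^2 \|m_l^{t-1}\|^2$, which produces the outer $\rho_0 (\sum_l \eta_l)$ factor together with the $\bigl(\frac{\beta_l}{1-\beta_l}\bigr)^2 \eta_l^2 \gamma^2$ weights common to the last three summands of \eqref{eq:descent_proposition}.

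Third, and most delicately, I would control $\mathbb{E}\|m_l^{t-1}\|^2$ through a triple decomposition. Writing $\tilde m_l^{t-1} := (1-\beta_l)\sum_i \beta_l^{t-1-i}\nabla_l \ell(\theta^i) = (1-\beta_l^{t-1})\cdot \frac{1-\beta_l}{1-\beta_l^{t-1}}\sum_i \beta_l^{t-1-i}\nabla_l \ell(\theta^i)$ for the deterministic counterpart of $m_l^{t-1}$, I decompose
\begin{equation*}
m_l^{t-1} = (m_l^{t-1} - \tilde m_l^{t-1}) \;+\; \bigl[\tilde m_l^{t-1} - (1-\beta_l^{t-1})\nabla_l \ell(\theta^t)\bigr] \;+\; (1-\beta_l^{t-1})\nabla_l \ell(\theta^t),
\end{equation*}
and apply $\|a+b\|^2 \le 2\|a\|^2 + 2\|b\|^2$ twice (first on $m_l^{t-1}=(m_l^{t-1}-\tilde m_l^{t-1})+\tilde m_l^{t-1}$, then on the second summand). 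Lemma \ref{lemma:1} bounds the first piece by $\frac{1-\beta_l}{1+\beta_l}\sigma_l^2$, supplying the $\sigma_l^2$ term of \eqref{eq:descent_proposition}; the middle bracket factors as $(1-\beta_l^{t-1})\bigl[\frac{1-\beta_l}{1-\beta_l^{t-1}}\sum_i \beta_l^{t-1-i}\nabla_l \ell(\theta^i) - \nabla_l \ell(\theta^t)\bigr]$, which after squaring matches the quantity appearing in the last summand; and the third piece gives the $(1-\beta_l^{t-1})^2 \|\nabla_l \ell(\theta^t)\|^2$ contribution. The factors $2$ and $2\times 2=4$ from the successive applications of $\|a+b\|^2 \le 2\|a\|^2+2\|b\|^2$, combined with the $\frac{\rho_0}{2}$ pulled out earlier, yield the $\rho_0$ multiplier on the variance piece and the $2\rho_0$ multiplier on the bias and gradient pieces as in \eqref{eq:descent_proposition}.

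The hard part is the bookkeeping in this last step. The three pieces of \eqref{eq:descent_proposition} carry distinct $\beta_l$-dependent prefactors ($\frac{1-\beta_l}{1+\beta_l}$, $(1-\beta_l^{t-1})^2$, and $(\frac{1-\beta_l^{t}}{1-\beta_l})^2$), and the time indices in the momentum summations must be tracked consistently across Lemmas \ref{lemma:1} and \ref{lemma:2} so that all three contributions assemble into the exact form of \eqref{eq:descent_proposition}. Once this alignment is secured, summing over $l$ and collecting terms delivers the stated inequality.
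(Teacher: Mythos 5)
Your proposal follows essentially the same route as the paper's proof: descent lemma on the auxiliary iterate $z^t$, unbiasedness to replace $g_l^t$ by $\nabla_l\ell(\theta^t)$ in the cross term, Young's inequality with parameter $\rho_0$ plus $\gamma$-smoothness to reduce to $\|z^t-\theta^t\|^2=\sum_l\eta_l^2(\tfrac{\beta_l}{1-\beta_l})^2\|m_l^{t-1}\|^2$, and the same two-stage $\|a+b\|^2\le 2\|a\|^2+2\|b\|^2$ decomposition of $m_l^{t-1}$ via Lemma \ref{lemma:1}. The only step you leave implicit is the exact index-shift identity $\mathbb{E}\|\tfrac{1-\beta_l}{1-\beta_l^{t}}\sum_{i=1}^{t}\beta_l^{t-i}\nabla_l\ell(\theta^i)-\nabla_l\ell(\theta^t)\|^2=\beta_l^2(\tfrac{1-\beta_l^{t-1}}{1-\beta_l^{t}})^2\mathbb{E}\|\tfrac{1-\beta_l}{1-\beta_l^{t-1}}\sum_{i=1}^{t-1}\beta_l^{t-1-i}\nabla_l\ell(\theta^i)-\nabla_l\ell(\theta^t)\|^2$, which is precisely how the paper converts the $(t-1)$-indexed prefactor $(\tfrac{\beta_l}{1-\beta_l})^2(1-\beta_l^{t-1})^2$ into the stated $(\tfrac{1-\beta_l^{t}}{1-\beta_l})^2$ — you correctly flag this as the remaining bookkeeping.
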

\begin{proof}
    By Lemma \ref{lemma:sequence_z} we have that
    $$
    z_{l}^{t+1} - z_{l}^t = -\eta_{l}g_l^t
    $$
    and
    $$
    z_l^t - \theta_l^t = -\frac{\beta_l}{1-\beta_l} \eta_l m_l^{t-1},
    $$
    for all $l=1,2,...,L$.

    Now using the Lipschitz smooth of $\ell$, we get
    \begin{equation*}
    \begin{aligned}
    \mathbb{E}_{\xi_t}\left[\ell\left(z^{t+1}\right)\right] & \leq \ell\left(z^t\right)+\mathbb{E}_{\xi_t}\left[\left\langle\nabla \ell\left(z^t\right), z^{t+1}-z^t\right\rangle\right]+\frac{L}{2} \mathbb{E}_{\xi_t}\left[\left\|z^{t+1}-z^t\right\|^2\right] \\
    & =\ell\left(z^t\right)+\sum_{l=1}^{L}\mathbb{E}_{\xi_t}\left[\left\langle\nabla_{\theta_l} \ell\left(z^t\right),-\eta_l {g}_{l}^t\right\rangle\right]+\sum_{l=1}^{L}\frac{\gamma \eta_l^2}{2} \mathbb{E}_{\xi_t}\left[\left\|{g}_{l}^t\right\|^2\right] \\
    & = \ell\left(z^t\right)+\sum_{l=1}^{L}\left[\left\langle\nabla_{\theta_l} \ell\left(z^t\right),-\eta_l \nabla_{l}\ell(\theta^t)\right\rangle\right] + \sum_{l=1}^{L}\frac{\gamma \eta_l^2}{2} \mathbb{E}_{\xi_t}\left[\left\|{g}_{l}^t\right\|^2\right]
    \end{aligned}
    \end{equation*}
    where we use the unbiasedness of the gradient estimator in the last line. Now we bound the second term as follows:
    \begin{equation*}
    \begin{aligned}
    & \mathbb{E}\left[\left\langle\nabla_{\theta_l} \ell\left(z^t\right),-\eta_l \nabla_{l}\ell(\theta^t)\right\rangle\right] \\
    = &  \mathbb{E}\left[\left\langle\nabla_{\theta_l} \ell\left(z^t\right) - \nabla_{l}\ell(\theta^t) ,-\eta_l \nabla_{l}\ell(\theta^t)\right\rangle\right] - \eta_l \mathbb{E} \|\nabla_{l}\ell(\theta^t)\|^2 \\
    \leq & \eta_l\frac{\rho_0}{2}\gamma^2 \mathbb{E} \|z^t - \theta^t\|^2 + (\frac{\eta_l}{2\rho_0} - \eta_l) \mathbb{E} \|\nabla_{l}\ell(\theta^t)\|^2 \\
    = & \eta_l\frac{\rho_0}{2} \sum_{l}\left(\eta_l^2\mathbb{E} \left[\left(\frac{\beta_l}{1-\beta_l}\right)^2\gamma^2 \|  m_l^{t-1}\|^2\right]\right) + (\frac{\eta_l}{2\rho_0} - \eta_l) \mathbb{E} \|\nabla_{l}\ell(\theta^t)\|^2
    \end{aligned}
    \end{equation*}
    where we use Cauchy-Schwarz inequality and $\rho_0$ is a positive constant to be determined.
    
    Therefore, we get 
    \begin{equation}\label{eq:prop1_temp1}
    \begin{aligned}
    \mathbb{E}_{\xi_t}\left[\ell\left(z^{t+1}\right)\right] 
    \leq &  \ell\left(z^t\right)+\frac{\rho_0}{2} \left(\sum_{l=1}^{L}\eta_l\right)\sum_{l=1}^{L}\left(\eta_l^2 (\frac{\beta_l}{1-\beta_l})^2\gamma^2 \mathbb{E}\|  m_l^{t-1}\|^2\right) \\&+ \sum_{l=1}^{L}(\frac{\eta_l}{2\rho_0} - \eta_l) \mathbb{E} \|\nabla_{l}\ell(\theta^t)\|^2 + \sum_{l=1}^{L}\frac{\gamma \eta_l^2}{2} \mathbb{E}_{\xi_t}\left[\left\|{g}_{l}^t\right\|^2\right] 
    \end{aligned}
    \end{equation}

    Now from Lemma \ref{lemma:1} we have
    \begin{equation*}
    \begin{aligned}
    \mathbb{E}\left[\left\|m_l^{t-1}\right\|^2\right] & \leq 2 \mathbb{E}\left[\left\|m_l^{t-1}-(1-\beta_l) \sum_{i=1}^{t-1} \beta_l^{t-1-i} \nabla_l\ell(\theta^i)\right\|^2\right]+2 \mathbb{E}\left[\left\|(1-\beta_l) \sum_{i=1}^{t-1} \beta_l^{t-1-i} \nabla_l\ell(\theta^i)\right\|^2\right] \\
    & \leq 2 \frac{1-\beta_l}{1+\beta_l} \sigma_l^2 + 2 \mathbb{E}\left[\left\|(1-\beta_l) \sum_{i=1}^{t-1} \beta_l^{t-1-i} \nabla_l\ell(\theta^i)\right\|^2\right]
    \end{aligned}
    \end{equation*}
    and
    \begin{equation*}
    \begin{aligned}
    \mathbb{E}\left[\left\|\frac{1-\beta_l}{1-\beta_l^{t-1}} \sum_{i=1}^{t-1} \beta_l^{t-1-i} \nabla_l\ell(\theta^i)\right\|^2\right] & \leq 2 \mathbb{E}\left[\left\|\nabla_l\ell(\theta^t)\right\|^2\right] + 2 \mathbb{E}\left[\left\|\frac{1-\beta_l}{1-\beta_l^{t-1}} \sum_{i=1}^{t-1} \beta_l^{t-1-i} \nabla_l\ell(\theta^i)-\nabla_l\ell(\theta^t)\right\|^2\right], \\
    \mathbb{E}\left[\left\|{g}_l^t\right\|^2\right] & \leq \sigma_l^2+\mathbb{E}\left[\left\|\nabla_l\ell(\theta^t)\right\|^2\right] .
    \end{aligned}
    \end{equation*}
    plug these into \eqref{eq:prop1_temp1} we get:
    \begin{equation*}
    \begin{aligned}
    &\mathbb{E}_{\xi_t}\left[\ell\left(z^{t+1}\right)\right] \\ 
    \leq &  \ell\left(z^t\right) + \rho_0 \left(\sum_{l=1}^{L}\eta_l\right)\sum_{l=1}^{L}\left(\eta_l^2 (\frac{\beta_l}{1-\beta_l})^2\gamma^2 \frac{1-\beta_l}{1+\beta_l} \sigma_l^2\right)\\& + \rho_0 \left(\sum_{l=1}^{L}\eta_l\right)\sum_{l=1}^{L}\left(\eta_l^2 (\frac{\beta_l}{1-\beta_l})^2\gamma^2 \mathbb{E}\left[\left\|(1-\beta_l) \sum_{i=1}^{t-1} \beta_l^{t-1-i} \nabla_l\ell(\theta^i)\right\|^2\right]\right) \\&+ \sum_{l=1}^{L}(\frac{\eta_l}{2\rho_0} - \eta_l) \mathbb{E} \|\nabla_{l}\ell(\theta^t)\|^2 + \sum_{l=1}^{L}\frac{\gamma \eta_l^2}{2} \left(\sigma_l^2+\mathbb{E}\left[\left\|\nabla_l\ell(\theta^t)\right\|^2\right]\right) \\
    = & \ell\left(z^t\right) + \rho_0 \left(\sum_{l=1}^{L}\eta_l\right)\sum_{l=1}^{L}\left(\eta_l^2 (\frac{\beta_l}{1-\beta_l})^2\gamma^2 \frac{1-\beta_l}{1+\beta_l} \right)\sigma_l^2 + \sum_{l=1}^{L}\frac{\gamma \eta_l^2}{2} \sigma_l^2\\
    &+ \sum_{l=1}^{L}(\frac{\eta_l}{2\rho_0} - \eta_l + \frac{\gamma \eta_l^2}{2}) \mathbb{E} \|\nabla_{l}\ell(\theta^t)\|^2\\
    &+ \rho_0 \left(\sum_{l=1}^{L}\eta_l\right)\sum_{l=1}^{L}\left(\eta_l^2 (\frac{\beta_l}{1-\beta_l})^2\gamma^2 (1-\beta_l^{t-1})^2 \mathbb{E}\left[\left\|\frac{1-\beta_l}{1-\beta_l^{t-1}} \sum_{i=1}^{t-1} \beta_l^{t-1-i} \nabla_l\ell(\theta^i)\right\|^2\right]\right) \\
    \leq & \ell\left(z^t\right) + \rho_0 \left(\sum_{l=1}^{L}\eta_l\right)\sum_{l=1}^{L}\left(\eta_l^2 (\frac{\beta_l}{1-\beta_l})^2\gamma^2 \frac{1-\beta_l}{1+\beta_l} \right)\sigma_l^2 + \sum_{l=1}^{L}\frac{\gamma \eta_l^2}{2} \sigma_l^2\\
    &+ \sum_{l=1}^{L}(- \eta_l + \frac{\eta_l}{2\rho_0} + \frac{\gamma \eta_l^2}{2}) \mathbb{E} \|\nabla_{l}\ell(\theta^t)\|^2 + 2\rho_0 \left(\sum_{l=1}^{L}\eta_l\right)\sum_{l=1}^{L}\left(\eta_l^2 (\frac{\beta_l}{1-\beta_l})^2\gamma^2 (1-\beta_l^{t-1})^2 \mathbb{E}\left[\left\|\nabla_l\ell(\theta^t)\right\|^2\right]\right) \\
    &+ 2\rho_0 \left(\sum_{l=1}^{L}\eta_l\right)\sum_{l=1}^{L}\left(\eta_l^2 (\frac{\beta_l}{1-\beta_l})^2\gamma^2 (1-\beta_l^{t-1})^2 \mathbb{E}\left[\left\|\frac{1-\beta_l}{1-\beta_l^{t-1}} \sum_{i=1}^{t-1} \beta_l^{t-1-i} \nabla_l\ell(\theta^i)-\nabla_l\ell(\theta^t)\right\|^2\right]\right)
    \end{aligned}
    \end{equation*}

    Now the last term above can be replaced since 
    \begin{equation*}
    \begin{aligned}
        \mathbb{E}\left[\left\|\frac{1-\beta_l}{1-\beta_l^{t}} \sum_{i=1}^{t} \beta_l^{t-i} \nabla_l\ell(\theta^i)-\nabla_l\ell(\theta^t)\right\|^2\right] = & \mathbb{E}\left[\left\|\frac{\beta_l(1-\beta_l)}{1-\beta_l^{t}} \sum_{i=1}^{t-1} \beta_l^{t-1-i} \nabla_l\ell(\theta^i)-\frac{1-\beta_l^{t-1}}{1-\beta_l^{t}}\beta_l\nabla_l\ell(\theta^t)\right\|^2\right] \\
        = & \beta_l^2(\frac{1-\beta_l^{t-1}}{1-\beta_l^{t}})^2 \mathbb{E}\left[\left\|\frac{1-\beta_l}{1-\beta_l^{t-1}} \sum_{i=1}^{t-1} \beta_l^{t-1-i} \nabla_l\ell(\theta^i)-\nabla_l\ell(\theta^t)\right\|^2\right]
    \end{aligned}
    \end{equation*}    
    
\end{proof}

Now we return to the proof of Theorem \ref{thm:momentum}.

\begin{proof}[Proof of Theorem \ref{thm:momentum}]
    By Proposition \ref{prop:appendix_1} and Lemma \ref{lemma:2}, we have
    \begin{equation}\label{eq:proof_temp3}
    \begin{aligned}
    &\mathbb{E}\left[\ell\left(z^{t+1}\right)\right]\\ \leq & \mathbb{E}\left[\ell\left(z^t\right)\right] + \rho_0 \left(\sum_{l=1}^{L}\eta_l\right)\sum_{l=1}^{L}\left(\eta_l^2 (\frac{\beta_l}{1-\beta_l})^2\gamma^2 \frac{1-\beta_l}{1+\beta_l} \right)\sigma_l^2 + \sum_{l=1}^{L}\frac{\gamma \eta_l^2}{2} \sigma_l^2\\
    &+ \sum_{l=1}^{L}(- \eta_l + \frac{\eta_l}{2\rho_0} + \frac{\gamma \eta_l^2}{2}) \mathbb{E} \|\nabla_{l}\ell(\theta^t)\|^2 + 2\rho_0 \left(\sum_{l=1}^{L}\eta_l\right)\sum_{l=1}^{L}\left(\eta_l^2 (\frac{\beta_l}{1-\beta_l})^2\gamma^2 (1-\beta_l^{t-1})^2 \mathbb{E}\left[\left\|\nabla_l\ell(\theta^t)\right\|^2\right]\right) \\
    &+ 2\rho_0 \left(\sum_{l=1}^{L}\eta_l\right)\sum_{l=1}^{L}\left(\eta_l^2 \gamma^2 (\frac{1-\beta_l^{t}}{1-\beta_l})^2 \sum_{i=1}^{t-1} a_{l, t, i} \mathbb{E}\left[\left\|\theta^{i+1}-\theta^i\right\|^2\right] \right)
    \end{aligned}
    \end{equation}

    Now define the potential function:
    \begin{equation}
    \phi^t=\ell\left(z^t\right)-\ell^*+\sum_{i=1}^{t-1} c_{i}\left\|\theta^{t+1-i}-\theta^{t-i}\right\|^2
    \end{equation}
    where $c_{i}$ are constants to be determined. 
    
    From \eqref{eq:proof_temp3} we get
    \begin{equation}\label{eq:proof_temp4}
    \begin{aligned}
    &\mathbb{E}\left[\phi^{t+1}\right] - \mathbb{E}\left[\phi^{t}\right]\\
    \leq & \rho_0 \left(\sum_{l=1}^{L}\eta_l\right)\sum_{l=1}^{L}\left(\eta_l^2 (\frac{\beta_l}{1-\beta_l})^2\gamma^2 \frac{1-\beta_l}{1+\beta_l} \right)\sigma_l^2 + \sum_{l=1}^{L}\frac{\gamma \eta_l^2}{2} \sigma_l^2\\
    &+ \sum_{l=1}^{L}(- \eta_l + \frac{\eta_l}{2\rho_0} + \frac{\gamma \eta_l^2}{2}) \mathbb{E} \|\nabla_{l}\ell(\theta^t)\|^2 + 2\rho_0 \left(\sum_{l=1}^{L}\eta_l\right)\sum_{l=1}^{L}\left(\eta_l^2 (\frac{\beta_l}{1-\beta_l})^2\gamma^2 (1-\beta_l^{t-1})^2 \mathbb{E}\left[\left\|\nabla_l\ell(\theta^t)\right\|^2\right]\right) \\
    &+ 2\rho_0 \left(\sum_{l=1}^{L}\eta_l\right)\sum_{l=1}^{L}\left(\eta_l^2 \gamma^2 (\frac{1-\beta_l^{t}}{1-\beta_l})^2 \sum_{i=1}^{t-1} a_{l, t, i} \mathbb{E}\left[\left\|\theta^{i+1}-\theta^i\right\|^2\right] \right) \\
    & + c_{1}\E \left\|\theta^{t+1}-\theta^t\right\|^2 + \sum_{i=1}^{t-1}(c_{i+1} - c_{i}) \E \left[\left\|\theta^{t+1-i}-\theta^{t-i}\right\|^2\right]
    \end{aligned}
    \end{equation}

    For the term $c_{1}\E \left\|\theta^{t+1}-\theta^t\right\|^2$, we can bound it by (denote $\nabla_l^t=\nabla_{\theta_l}\ell(\theta^t)$)
    \begin{equation}
    \begin{aligned}
    &c_{1}\E \left\|\theta^{t+1}-\theta^t\right\|^2 = \sum_{l=1}^{L}c_{1}\E \left\|\theta_l^{t+1}-\theta_l^t\right\|^2= \sum_{l=1}^{L}c_{1}\eta_l^2 \mathbb{E}\left[\left\|m_l^t\right\|^2\right] \\
    \leq & 2 \sum_{l=1}^{L} c_{1}\mathbb{E}\left[\left\|m_l^t-(1-\beta_l) \sum_{i=1}^t c_{1} \beta_l^{t-i} \nabla_l^i\right\|^2\right]+2 \sum_{l=1}^{L}\mathbb{E}\left[\left\|(1-\beta_l) \sum_{i=1}^t \beta_l^{t-i} \nabla_l^i\right\|^2\right] \\
    \leq & 2 \sum_{l=1}^{L}c_{1}\frac{1-\beta_l}{1+\beta_l} \sigma_l^2+2 \sum_{l=1}^{L}c_{1}\mathbb{E}\left[\left\|(1-\beta_l) \sum_{i=1}^t \beta_l^{t-i} \nabla_l^i\right\|^2\right] \\
    \leq & \sum_{l=1}^{L}c_{1}\eta_l^2\left(2 \frac{1-\beta_l}{1+\beta_l} \sigma_l^2+4 \mathbb{E}\left[\left\|\nabla_l^t\right\|^2\right]\left(1-\beta_l^t\right)^2\right) +4 \sum_{l=1}^{L}c_{1}(1-\beta_l^t)^2\eta_l^2 \mathbb{E}\left[\left\|\frac{1-\beta_l}{1-\beta_l^t} \sum_{i=1}^t \beta^{t-i} \nabla_l^i-\nabla_l^t\right\|^2\right] \\
    \leq & \sum_{l=1}^{L}c_{1}\eta_l^2\left(2 \frac{1-\beta_l}{1+\beta_l} \sigma_l^2+4 \mathbb{E}\left[\left\|\nabla_l^t\right\|^2\right]\right) + 4 \sum_{l=1}^{L} c_{1} (1-\beta_l^t)^2\eta_l^2 \sum_{i=1}^{t-1} a_{l, t, i} \mathbb{E}\left[\left\|\theta^{i+1}-\theta^i\right\|^2\right]
    \end{aligned}
    \end{equation}
    where for the last three inequalities we use \eqref{eq:proof_temp1} and
    $$
    \mathbb{E}\left[\left\|\frac{1-\beta_l}{1-\beta_l^t} \sum_{i=1}^t \beta_l^{t-i} \nabla_l^i\right\|^2\right] \leq 2 \mathbb{E}\left[\left\|\nabla_l^t\right\|^2\right]+2 \mathbb{E}\left[\left\|\frac{1-\beta_l}{1-\beta_l^t} \sum_{i=1}^t \beta_l^{t-i} \nabla_l^i-\nabla_l^t\right\|^2\right],
    $$
    and \eqref{eq:proof_temp2}, respectively. 
    
    Now plugging this back to \eqref{eq:proof_temp4} we get:
    \begin{equation}\label{eq:proof_temp5}
    \begin{aligned}
    &\mathbb{E}\left[\phi^{t+1}\right] - \mathbb{E}\left[\phi^{t}\right]\\
    \leq & \rho_0 s \sum_{l=1}^{L}\left(\eta_l^2 (\frac{\beta_l}{1-\beta_l})^2\gamma^2 \frac{1-\beta_l}{1+\beta_l} \right)\sigma_l^2 + \sum_{l=1}^{L}\frac{\gamma \eta_l^2}{2} \sigma_l^2 + \sum_{l=1}^{L}2c_{1}\eta_l^2\frac{1-\beta_l}{1+\beta_l} \sigma_l^2\\
    &+ \sum_{l=1}^{L}(- \eta_l + \frac{\eta_l}{2\rho_0} + \frac{\gamma \eta_l^2}{2} + 4c_{1}\eta_l^2) \mathbb{E} \|\nabla_l^t\|^2 + 2\rho_0 \left(\sum_{l=1}^{L}\eta_l\right)\sum_{l=1}^{L}\left(\eta_l^2 (\frac{\beta_l}{1-\beta_l})^2\gamma^2 (1-\beta_l^{t-1})^2 \mathbb{E}\left[\left\|\nabla_l^t\right\|^2\right]\right) \\
    & + 4 \sum_{i=1}^{t-1} \sum_{l=1}^{L} c_{1} (1-\beta_l^t)^2\eta_l^2  a_{l, t, i} \mathbb{E}\left[\left\|\theta^{i+1}-\theta^i\right\|^2\right] \\
    &+ 2\rho_0 s \sum_{i=1}^{t-1} \sum_{l=1}^{L}\left(\eta_l^2 \gamma^2 (\frac{1-\beta_l^{t}}{1-\beta_l})^2 a_{l, t, i} \mathbb{E}\left[\left\|\theta^{i+1}-\theta^i\right\|^2\right] \right) \\
    & + \sum_{i=1}^{t-1}(c_{i+1} - c_{i}) \E \left[\left\|\theta^{t+1-i}-\theta^{t-i}\right\|^2\right]
    \end{aligned}
    \end{equation}
    where we denote $s:= \sum_{l=1}^{L}\eta_l$.

    To make the last three lines of above non-positive, we could take
    \begin{equation*}
    \begin{aligned}
        c_{i+1}\leq c_{i} - \sum_{l=1}^{L}\left(4c_{1}(1-\beta_l^t)^2\eta_l^2+2\rho_0 s \eta_l^2\gamma^2 (\frac{1-\beta_l^t}{1-\beta_l})^2\right)a_{l,t,t-i}.
    \end{aligned}
    \end{equation*}

    We can take (since $1-\beta_l^t<1$)
    $$
    c_{i+1}= c_{i} - \sum_{l=1}^{L}\left(4c_{1}\eta_l^2+2\rho_0 s \eta_l^2\gamma^2 (\frac{1}{1-\beta_l})^2\right) \gamma^2 \beta_{l}^{i}\left(i+\frac{\beta_{l}}{1-\beta_{l}}\right)
    $$
    also we can take $c_{1}$ such that
    \begin{equation*}
    \begin{aligned}
        c_{1} = & \sum_{l=1}^{L}\sum_{i=1}^{\infty}\left(4c_{1}\eta_l^2+2\rho_0 s \eta_l^2\gamma^2 (\frac{1}{1-\beta_l})^2\right) \gamma^2 \beta_{l}^{i}\left(i+\frac{\beta_{l}}{1-\beta_{l}}\right) \\
        = & \sum_{l=1}^{L}\left(4c_{1}\eta_l^2+2\rho_0 s \eta_l^2\gamma^2 (\frac{1}{1-\beta_l})^2\right) \gamma^2 \left(\sum_{i=1}^{\infty}i\beta_{l}^{i}+\frac{\beta_{l}}{1-\beta_{l}}\sum_{i=1}^{\infty}\beta_{l}^{i}\right) \\
        = & \sum_{l=1}^{L}\left(4c_{1}\eta_l^2+2\rho_0 s \eta_l^2\gamma^2 (\frac{1}{1-\beta_l})^2\right) \frac{\gamma^2\beta_l(1+\beta_l)}{(1-\beta_l)^2}
    \end{aligned}
    \end{equation*}
    i.e.
    \begin{equation}\label{eq:c1}
       c_{1} = \frac{\sum_{l=1}^{L}2\rho_0 s \eta_l^2\gamma^4\beta_l\frac{1+\beta_l}{(1-\beta_l)^4}}{1-4\sum_{l=1}^{L}\eta_l^2\frac{\gamma^2\beta_l(1+\beta_l)}{(1-\beta_l)^2}}.
    \end{equation}
    Note that we can give an upper bound for $c_{1}$ by requiring the denominator $\geq 1/2$, i.e.
    \begin{equation}\label{eq:eta_bound1}
        \eta_l\leq \frac{1-\beta_l}{\gamma\sqrt{8\beta_l(1+\beta_l)}L},\ \forall l=1,2,...,L,
    \end{equation}
    consequently
    \begin{equation}\label{eq:eta_upper_bound1}
        \sum_{l=1}^{L}2\rho_0 s \eta_l^2\gamma^4\beta_l\frac{1+\beta_l}{(1-\beta_l)^4}\leq c_{1}\leq \sum_{l=1}^{L}4\rho_0 s \eta_l^2\gamma^4\beta_l\frac{1+\beta_l}{(1-\beta_l)^4}.
    \end{equation}
    
    Now we have (since the last three terms of \eqref{eq:proof_temp5} sum to negative)
    \begin{equation}\label{eq:proof_temp6}
    \begin{aligned}
    &\mathbb{E}\left[\phi^{t+1}\right] - \mathbb{E}\left[\phi^{t}\right]\\
    \leq & \rho_0 s \sum_{l=1}^{L}\left(\eta_l^2 (\frac{\beta_l}{1-\beta_l})^2\gamma^2 \frac{1-\beta_l}{1+\beta_l} \right)\sigma_l^2 + \sum_{l=1}^{L}\frac{\gamma \eta_l^2}{2} \sigma_l^2 + \sum_{l=1}^{L}2c_{1}\eta_l^2\frac{1-\beta_l}{1+\beta_l} \sigma_l^2\\
    &+ \sum_{l=1}^{L}\left(- \eta_l + \frac{\eta_l}{2\rho_0} + \frac{\gamma \eta_l^2}{2} + 4c_{1}\eta_l^2 + 2\rho_0 s\eta_l^2 (\frac{\beta_l}{1-\beta_l})^2\gamma^2 (1-\beta_l^{t-1})^2 \right) \mathbb{E} \|\nabla_l^t\|^2
    \end{aligned}
    \end{equation}

    Now take $\rho_0=2$, and take $\eta_l$ such that
    \begin{equation}\label{eq:proof_temp7}
        - \frac{3}{4}\eta_l + \frac{\gamma \eta_l^2}{2} + 4c_{1}\eta_l^2 + 4 s\eta_l^2 (\frac{\beta_l}{1-\beta_l})^2\gamma^2 (1-\beta_l^{t-1})^2 \leq - \frac{\eta_l}{2},
    \end{equation}
    the coefficient of $\E\|\nabla_l^t\|$ can be greatly simplified. Note that we can guarantee \eqref{eq:proof_temp7} if
    \begin{equation}\label{eq:eta_bound2}
        \eta_l\leq \min\left\{ \frac{1}{8\gamma}, \frac{1}{64\sum_{l=1}^{L}\rho_0s\gamma^4\beta_l\frac{1+\beta_l}{(1-\beta_l)^4}}, \frac{1-\beta_l}{4\gamma}\sqrt[3]{\frac{1-\beta_l}{L\beta_l(1+\beta_l)}} \right\}
    \end{equation}
    for all $l=1,2,...,L$. Here each term in the right hand side of \eqref{eq:eta_bound2} is bounding each term on the left hand side of \eqref{eq:proof_temp7}.

    We get
    \begin{equation}\label{eq:proof_temp8}
    \begin{aligned}
    & \sum_{l=1}^{L}\frac{\eta_l}{2} \mathbb{E} \|\nabla_l^t\|^2 \leq \mathbb{E}\left[\phi^{t}\right] - \mathbb{E}\left[\phi^{t+1}\right]\\
    & + 2 s \sum_{l=1}^{L}\left(\eta_l^2 (\frac{\beta_l}{1-\beta_l})^2\gamma^2 \frac{1-\beta_l}{1+\beta_l} \right)\sigma_l^2 + \sum_{l=1}^{L}\frac{\gamma \eta_l^2}{2} \sigma_l^2 + \sum_{l=1}^{L}2c_{1}\eta_l^2\frac{1-\beta_l}{1+\beta_l} \sigma_l^2.
    \end{aligned}
    \end{equation}

    Note that we have defined $s:=\sum_{l=1}^{L}\eta_l$. We can upper bound it by $s\leq L/\sqrt{\gamma T}$ if we assume $\eta_l\leq 1/\sqrt{\gamma T}$. Combining \eqref{eq:eta_bound1} and \eqref{eq:eta_bound2} we get:
    \begin{equation}\label{eq:eta_beta}
    \begin{aligned}
    \eta_l \leq  \min\left\{ \frac{1}{8\gamma}, \frac{1}{8\gamma L}(\frac{1-\beta_l}{\beta_l})^2, \frac{1-\beta_l}{\gamma\sqrt{8\beta_l(1+\beta_l)}}, \frac{1-\beta_l}{4\gamma}\sqrt[3]{\frac{1-\beta_l}{L\beta_l(1+\beta_l)}}, \frac{1}{\sqrt{\gamma T}} \right\},
    \end{aligned}
    \end{equation}
    which is satisfied if (since $\beta_l(1+\beta_l)\leq 2$)
    \begin{equation}\label{eq:eta_beta2}
    \eta_l = \min\left\{ \frac{1}{8\gamma}, \frac{1}{64\sum_{l=1}^{L}\rho_0s\gamma^4\beta_l\frac{1+\beta_l}{(1-\beta_l)^4}}, \frac{1-\beta_l}{4\gamma}, \frac{1-\beta_l}{4\gamma}\sqrt[3]{\frac{1-\beta_l}{2 L}}, \frac{1}{\sqrt{\gamma T}} \right\}.
    \end{equation}
    By taking the product of two of the terms in the RHS of the above relation, we have the following upper bound on $\eta^2_l$:
    \begin{align}\label{eq:eta:square}
        \eta_l^2\leq \frac{1}{8\gamma L}(\frac{1-\beta_l}{\beta_l})^2\frac{1}{\sqrt{\gamma T}}.
    \end{align}
    Plugging  \eqref{eq:eta:square} and \eqref{eq:eta_upper_bound1} into \eqref{eq:proof_temp8}, we obtain:
    \begin{equation}\label{eq:proof_temp9}
    \begin{aligned}
    \sum_{l=1}^{L}\eta_l \mathbb{E} \|\nabla_l^t\|^2 \leq & 2(\mathbb{E}\left[\phi^{t}\right] - \mathbb{E}\left[\phi^{t+1}\right])\\
    & + \sum_{l=1}^{L}\left(\frac{1-\beta_l}{1+\beta_l}\frac{1}{4\gamma T} + \frac{1}{2T} + \frac{1-\beta_l}{\beta_l^3}\frac{\sqrt{\gamma}}{4L^2 T^{3/2}}\right)\sigma_l^2.
    \end{aligned}
    \end{equation}

    Now since $\beta_l\leq 1-\delta$, we have that $\eta_l$ is lower bounded
    \begin{align}\label{eq:lower_bound_eta}
        \eta_l & = \min\left\{ \frac{1}{8\gamma}, \frac{1}{64\sum_{l=1}^{L}\rho_0s\gamma^4\beta_l\frac{1+\beta_l}{(1-\beta_l)^4}}, \frac{1-\beta_l}{4\gamma}, \frac{1-\beta_l}{4\gamma}\sqrt[3]{\frac{1-\beta_l}{2 L}} \right\}\nonumber\\
        & \geq \min\left\{ \frac{1}{8\gamma}, \frac{\delta^4}{64 L \rho_0s\gamma^4}, \frac{\delta}{4\gamma}, \frac{\delta}{4\gamma}\sqrt[3]{\frac{\delta}{2 L}}, \frac{1}{\sqrt{\gamma T}} \right\}=:\eta
    \end{align}
    we obtain:
\begin{equation}\label{eq:proof_temp10}
    \begin{aligned}
    \sum_{l=1}^{L} \mathbb{E} \|\nabla_l^t\|^2 \leq & \frac{2(\mathbb{E}\left[\phi^{t}\right] - \mathbb{E}\left[\phi^{t+1}\right])}{\eta}\\
    & + \frac{1}{\eta}\sum_{l=1}^{L}\left( \frac{1-\beta_l}{1+\beta_l}\frac{1}{4\gamma T} + \frac{1}{2T} + \frac{1-\beta_l}{\beta_l^3}\frac{\sqrt{\gamma}}{4L^2 T^{3/2}} \right) \sigma_l^2.
    \end{aligned}
    \end{equation}
    
    Now by telescoping sum of \eqref{eq:proof_temp9} for $t=1,...,T$, also notice that $1 / \eta=\Theta(\sqrt{\gamma T}L\gamma/\delta^2)$ as $T$ grows, we get our final result of
    \begin{equation}\label{eq:proof_temp11}
    \frac{1}{T}\sum_{t=1}^{T}\sum_{l=1}^{L} \mathbb{E} \|\nabla_l^t\|^2 \leq \frac{2L\gamma^{3/2} \mathbb{E}\Delta_1}{\delta^2\sqrt{T}} + \sum_{l=1}^{L}\left( \frac{1-\beta_l}{1+\beta_l}\frac{L\sqrt{\gamma}}{4 \sqrt{T}} + \frac{L\gamma^{3/2}}{2\sqrt{T}} + \frac{1-\beta_l}{\beta_l^3}\frac{\gamma^2}{4L T} \right) \frac{\sigma_l^2}{\delta^2}.
    \end{equation}
This completes the proof. 
\end{proof}

{ 
\section{Proofs for Section \ref{sec:algorithm}}
In this section, we conduct the proof for Theorem \ref{thm:scale_convergence}. The proof follows \citet{riabinin2025gluon}, with changes to adopt to our algorithms.

First, let us recall the algorithm: 
\begin{equation}\label{eq:scale_generalization_restate}
\begin{aligned}
    &\text{(for each layer $l$) } m_{l}^{t} = \beta_l m_{l}^{t-1} + (1-\beta_l)g_l^t \\
    &\text{(for each layer $l$) } \theta_{l}^{t+1} = \theta_{l}^{t} - \eta_{l} \mathcal{C}( m_{l}^{t}).
\end{aligned}
\end{equation}

And the assumptions:
\begin{assumption}
    We assume that
    
    \begin{itemize}
        \item $\ell(\theta)$ in \eqref{eq:finite_sum} is lower bounded by $\ell^*$;
        \item each layer $l$ is $L_l$-smooth in matrix $1\rightarrow 2$ norm: $\|\nabla_{\theta_l}\ell(\theta) - \nabla_{\theta_l}\ell(\vartheta)\|_{2\rightarrow\infty}\leq L_l\|\theta - \vartheta\|_{1\rightarrow2}$;
        \item the stochastic gradient is unbiased $\E_{\xi_t}\nabla_{\theta_l} \ell(\theta^t;\xi_t) = \nabla_{\theta_l} \ell(\theta^t)$ and with bounded variance: $$\E_{\xi_t}\|\left(\nabla_{\theta_l} \ell(\theta^t;\xi_t) - \nabla_{\theta_l} \ell(\theta^t)\right)_{:,j}\|_{2\rightarrow\infty}^2 \leq \nu_{l,j}^2$$ for all $l=1,...,L$ and $t=0,...,T-1$ and all the columns $j$. Also denote $\bar{\sigma}_l := \left(\sum_{j=1}^{n_l}\nu_{l,j}^2\right)^{1/2}$.
    \end{itemize}
\end{assumption}

Now we proceed to the proof. 

\noindent\textbf{Proof of Theorem \ref{thm:scale_convergence}.}
Let
$$
\Delta_l^t := \theta_l^{t+1}-\theta_l^t.
$$
By the layerwise Lipschitz smoothness assumption, for every $t\ge 0$,
$$
\ell(\theta^{t+1})
\le
\ell(\theta^t)
+
\sum_{l=1}^L
\left[
\langle \nabla_l \ell(\theta^t), \Delta_l^t\rangle_F
+
\frac{L_l}{2}\|\Delta_l^t\|_{1\to 2}^2
\right].
$$

For each layer $l$, the update is
$$
\theta_{l}^{t+1} = \theta_{l}^{t} - \eta_{l} \mathcal{C}( m_{l}^{t}).
$$
From \cite{bernstein2024old}, we have the following identity for any matrix $G$\footnote{without loss of generality, we could assume none of the columns of $G$ are zero}:
$$
\arg\min_{\|\Delta\|_{1 \to 2} \le 1} \langle G,\Delta\rangle
=
-\left[
\frac{\operatorname{col}_1(G)}{\|\operatorname{col}_1(G)\|_2},
\dots,
\frac{\operatorname{col}_n(G)}{\|\operatorname{col}_n(G)\|_2}
\right].
$$
Therefore for each layer $l$, the update is equivalently
$$
\theta_l^{t+1}=
\arg\min_{\|U-\theta_l^t\|_{1\to 2}\le \eta_l}\langle m_l^t,U\rangle_F.
$$
Hence $\Delta_l^t$ belongs to the radius-$\eta_l$ ball in the $\|\cdot\|_{1\to 2}$ norm, so
$$
\|\Delta_l^t\|_{1\to 2}\le \eta_l.
$$
Moreover, by the definition of the dual norm, the minimum of the linear functional over this ball satisfies
$$
\langle m_l^t,\Delta_l^t\rangle_F
=-\eta_l \|m_l^t\|_{*,l},
$$
where $\|\cdot\|_{*,l}$ denotes the dual norm of $\|\cdot\|_{1\to 2}$, namely $\|\cdot\|_{2\to\infty}$.

Define the tracking error
$$
e_l^t := m_l^t-\nabla_l \ell(\theta^t).
$$
Then
$$
\langle \nabla_l \ell(\theta^t),\Delta_l^t\rangle_F
=\langle m_l^t,\Delta_l^t\rangle_F - 
\langle e_l^t,\Delta_l^t\rangle_F.
$$
By Hölder's inequality for dual norms,
$$
\|\langle e_l^t,\Delta_l^t\rangle_F\|
\le
\|e_l^t\|_{*,l}\|\Delta_l^t\|_{1\to 2}
\le
\eta_l \|e_l^t\|_{*,l}.
$$
Therefore,
$$
\langle \nabla_l \ell(\theta^t),\Delta_l^t\rangle_F
\le
-\eta_l\|m_l^t\|_{*,l}
+
\eta_l\|e_l^t\|_{*,l}.
$$
Using the reverse triangle inequality,
$$
\|m_l^t\|_{*,l}
=\|\nabla_l \ell(\theta^t)+e_l^t\|_{*,l}
\ge
\|\nabla_l \ell(\theta^t)\|_{*,l}-\|e_l^t\|_{*,l},
$$
we further obtain
$$
\langle \nabla_l \ell(\theta^t),\Delta_l^t\rangle_F
\le
-\eta_l\|\nabla_l \ell(\theta^t)\|_{*,l}
+
2\eta_l\|e_l^t\|_{*,l}.
$$
Substituting the above estimates into the descent inequality gives
$$
\ell(\theta^{t+1})
\le
\ell(\theta^t)
-\sum_{l=1}^L \eta_l\|\nabla_l \ell(\theta^t)\|_{*,l}
+
2\sum_{l=1}^L \eta_l\|e_l^t\|_{*,l}
+
\sum_{l=1}^L \frac{L_l}{2}\eta_l^2.
$$
Taking expectation yields
$$
\mathbb{E}\ell(\theta^{t+1})
\le
\mathbb{E}\ell(\theta^t)
-\sum_{l=1}^L \eta_l,\mathbb{E}\|\nabla_l \ell(\theta^t)\|_{*,l}
+
2\sum_{l=1}^L \eta_l\mathbb{E}\|e_l^t\|_{*,l}
+
\sum_{l=1}^L \frac{L_l}{2}\eta_l^2.
$$

It remains to bound $\mathbb{E}\|e_l^t\|_{*,l}$. From the momentum recursion
$$
m_l^t
=\beta_l m_l^{t-1} + (1-\beta_l)g_l^t
=\beta_l m_l^{t-1} + (1-\beta_l)\bigl(\nabla_l \ell(\theta^t)+\zeta_l^t\bigr),
$$
we get
$$
e_l^t
=m_l^t-\nabla_l \ell(\theta^t)
=\beta_l\bigl(m_l^{t-1}-\nabla_l \ell(\theta^{t-1})\bigr)
+
\beta_l\bigl(\nabla_l \ell(\theta^{t-1})-\nabla_l \ell(\theta^t)\bigr)
+
(1-\beta_l)\zeta_l^t.
$$
That is,
$$
e_l^t
=\beta_l e_l^{t-1}
+
\beta_l \delta_l^t
+
(1-\beta_l)\zeta_l^t,
\qquad
\delta_l^t := \nabla_l \ell(\theta^{t-1})-\nabla_l \ell(\theta^t).
$$
Since $m_l^0=\nabla_l \ell(\theta^0)$, we have $e_l^0=0$. Unrolling the recursion gives
$$
e_l^t
=\beta_l\sum_{s=1}^t \beta_l^{t-s}\delta_l^s
+
(1-\beta_l)\sum_{s=1}^t \beta_l^{t-s}\zeta_l^s.
$$

We bound the two terms separately.

For the first drifting term, by the layerwise smoothness assumption and the bound $\|\Delta_l^{s-1}\|_{1\to 2}\le \eta_l$,
$$
\|\delta_l^s\|_{*,l}
=\|\nabla_l \ell(\theta^{s-1})-\nabla_l \ell(\theta^s)\|_{*,l}
\le
L_l\|\theta_l^{s-1}-\theta_l^s\|_{1\to 2}
=L_l\|\Delta_l^{s-1}\|_{1\to 2}
\le
L_l\eta_l.
$$
Therefore,
$$
\left\|
\beta_l\sum_{s=1}^t \beta_l^{t-s}\delta_l^s
\right\|_{*,l}
\le
\beta_l\sum_{s=1}^t \beta_l^{t-s}\|\delta_l^s\|_{*,l}
\le
\beta_l\sum_{s=1}^t \beta_l^{t-s}L_l\eta_l
\le
\frac{\beta_l L_l\eta_l}{1-\beta_l}.
$$

Now consider the noise term
$$
N_l^t := (1-\beta_l)\sum_{s=1}^t \beta_l^{t-s}\zeta_l^s.
$$
Since the layer norm is $\|\cdot\|_{1\to 2}$, we have
$$
\|N_l^t\|_{1\to 2}
=\max_{1\le j\le n_l}\|(N_l^t)*{:,j}\|_2.
$$
Using $\mathbb{E}X\le (\mathbb{E}X^2)^{1/2}$ for nonnegative $X$, and then $\max_j a_j\le \sum_j a_j$ for nonnegative numbers ${a_j}$,
$$
\mathbb{E}\|N_l^t\|_{1\to 2}
\le
\left(\mathbb{E}\|N_l^t\|_{1\to 2}^2\right)^{1/2}
=\left(\mathbb{E}\max_{1\le j\le n_l}\|(N_l^t)*{:,j}\|_2^2\right)^{1/2}
\le
\left(\sum_{j=1}^{n_l}\mathbb{E}\|(N_l^t)*{:,j}\|_2^2\right)^{1/2}.
$$

Fix a column $j\in{1,\dots,n_l}$. Since
$$
(N_l^t)_{:,j}
=(1-\beta_l)\sum_{s=1}^t \beta_l^{t-s}(\zeta_l^s)_{:,j},
$$
we obtain
$$
\mathbb{E}\|(N_l^t)*{:,j}\|_2^2
=(1-\beta_l)^2
\mathbb{E}\left\|
\sum_{s=1}^t \beta_l^{t-s}(\zeta_l^s)_{:,j}
\right\|_2^2.
$$
Expanding the Euclidean square,
$$
\mathbb{E}\|(N_l^t)*{:,j}\|_2^2
=(1-\beta_l)^2
\sum_{s,r=1}^t
\beta_l^{t-s}\beta_l^{t-r}
,
\mathbb{E}\left\langle (\zeta_l^s)_{:,j},(\zeta_l^r)_{:,j}\right\rangle.
$$
For $s\neq r$, the cross terms vanish by unbiasedness,
$$
\mathbb{E}\left\langle (\zeta_l^s)_{:,j},(\zeta_l^r)_{:,j}\right\rangle
=0 
$$
Hence only the diagonal terms remain:
$$
\mathbb{E}\|(N_l^t)_{:,j}\|_2^2
\le(1-\beta_l)^2 \sum_{s=1}^t
\beta_l^{2(t-s)}
\mathbb{E}\|(\zeta_l^s)_{:,j}\|_2^2.
$$
Using the column-wise second-moment bound,
$$
\mathbb{E}\|(\zeta_l^s)_{:,j}\|_2^2=
\mathbb{E}\Bigl[\mathbb{E}\bigl[\|(\zeta_l^s)_{:,j}\|_2^2\mid \mathcal F_s\bigr]\Bigr]
\le \nu_{l,j}^2,
$$
so
$$
\mathbb{E}\|(N_l^t)_{:,j}\|_2^2
\le (1-\beta_l)^2 \nu_{l,j}^2 \sum_{q=0}^{t-1}\beta_l^{2q}.
$$
Since
$$
\sum_{q=0}^{t-1}\beta_l^{2q}\le \frac{1}{1-\beta_l^2},
$$
we obtain
$$
\mathbb{E}\|(N_l^t)_{:,j}\|_2^2
\le (1-\beta_l)^2 \nu_{l,j}^2 \frac{1-\beta_l}{1+\beta_l}.
$$
Summing over all columns $j$ gives
$$
\mathbb{E}\|N_l^t\|_{1\to 2}
\le
\left(
\sum_{j=1}^{n_l}\nu_{l,j}^2 \frac{1-\beta_l}{1+\beta_l}
\right)^{1/2}=\bar{\sigma}_l\sqrt{\frac{1-\beta_l}{1+\beta_l}}.
$$

Combining the drift bound and the noise bound, and using the triangle inequality, we get
$$
\mathbb{E}\|e_l^t\|_{*,l}
\le
\frac{\beta_l L_l\eta_l}{1-\beta_l}
+
\bar{\sigma}_l\sqrt{\frac{1-\beta_l}{1+\beta_l}}.
$$

Substituting this estimate into the expected descent inequality yields
$$
\mathbb{E}\ell(\theta^{t+1})
\le
\mathbb{E}\ell(\theta^t)
-\sum_{l=1}^L \eta_l\mathbb{E}\|\nabla_l \ell(\theta^t)\|_{*,l}
+
\sum_{l=1}^L
\left[
2\eta_l\left(
\frac{\beta_l L_l\eta_l}{1-\beta_l}
+
\bar{\sigma}_l\sqrt{\frac{1-\beta_l}{1+\beta_l}}
\right)
+
\frac{L_l}{2}\eta_l^2
\right].
$$
Summing this inequality over $t=0,1,\dots,T-1$ and telescoping, we have
$$
\sum_{t=0}^{T-1}\sum_{l=1}^L \eta_l\mathbb{E}\|\nabla_l \ell(\theta^t)\|_{*,l}
\le
\ell(\theta^0)-\mathbb{E}\ell(\theta^T)
+
T\sum_{l=1}^L
\left[
2\eta_l\left(
\frac{\beta_l L_l\eta_l}{1-\beta_l}
+
\bar{\sigma}_l\sqrt{\frac{1-\beta_l}{1+\beta_l}}
\right)
+
\frac{L_l}{2}\eta_l^2
\right].
$$
Since $\ell(\theta^T)\ge \ell_{\inf}$,
$$
\sum_{t=0}^{T-1}\sum_{l=1}^L \eta_l\mathbb{E}\|\nabla_l \ell(\theta^t)\|_{*,l}
\le
\Delta_0
+
T\sum_{l=1}^L
\left[
2\eta_l\left(
\frac{\beta_l L_l\eta_l}{1-\beta_l}
+
\bar{\sigma}_l\sqrt{\frac{1-\beta_l}{1+\beta_l}}
\right)
+
\frac{L_l}{2}\eta_l^2
\right].
$$
Therefore,
$$
T\min_{0\le t\le T-1}\sum_{l=1}^L \eta_l\mathbb{E}\|\nabla_l \ell(\theta^t)\|_{*,l}
\le
\sum_{t=0}^{T-1}\sum_{l=1}^L \eta_l\mathbb{E}\|\nabla_l \ell(\theta^t)\|_{*,l},
$$
and hence
$$
\min_{0\le t\le T-1}\sum_{l=1}^L \eta_l,\mathbb{E}\|\nabla_l \ell(\theta^t)\|_{*,l}
\le
\frac{\Delta_0}{T}
+
\sum_{l=1}^L
\left[
2\eta_l\left(
\frac{\beta_l L_l\eta_l}{1-\beta_l}
+
\bar{\sigma}_l\sqrt{\frac{1-\beta_l}{1+\beta_l}}
\right)
+
\frac{L_l}{2}\eta_l^2
\right].
$$

Finally, substitute $\eta_l=a_l T^{-3/4}$. Since
$$
\sum_{l=1}^L \eta_l = T^{-3/4}\sum_{l=1}^L a_l = A T^{-3/4},
$$
dividing both sides by $ T^{-3/4}$ yields
$$
\min_{0\le t\le T-1}\sum_{l=1}^L a_l\mathbb{E}\|\nabla_l \ell(\theta^t)\|_{*,l}
\le
\frac{\Delta_0}{T^{1/4}}
+
\sum_{l=1}^L
\left[
\frac{L_l a_l^2}{2}T^{-3/4}
+
2a_l\left(
\frac{\beta_l L_l a_l}{1-\beta_l}T^{-1/2}
+
\bar{\sigma}_l\sqrt{\frac{1-\beta_l}{1+\beta_l}}T^{-1/4}
\right)
\right].
$$
This proves the theorem. $\square$
}

\end{document}